\newcounter{thm_counter}
\newcounter{lem_counter}
\newcounter{ass_counter}
\numberwithin{ass_counter}{section}
\newtheorem{theorem}[thm_counter]{Theorem}
\newtheorem{lemma}[lem_counter]{Lemma}
\newtheorem{assumption}[ass_counter]{Assumption}
\newcommand{\vop}{{\mathcal{T}}}
\newcommand{\ns}{{|\mathcal{S}|}}
\newcommand{\nsa}{{|\mathcal{S}||\mathcal{A}|}}
\newcommand{\tb}[1]{{\textbf{#1}}}
\newcommand{\E}{\mathbb{E}}
\newcommand{\R}{\mathbb{R}}
\newcommand{\Y}{\mathcal{Y}}
\newcommand{\fO}{\mathcal{O}}
\newcommand{\fP}{\mathcal{P}}
\renewcommand{\braket}[2]{\langle {#1}, {#2} \rangle}
\icmltitlerunning{Breaking the Deadly Triad with a Target Network}
\begin{document}
\twocolumn[
\icmltitle{Breaking the Deadly Triad with a Target Network}



\icmlsetsymbol{equal}{*}

\begin{icmlauthorlist}
\icmlauthor{Shangtong Zhang}{ox}
\icmlauthor{Hengshuai Yao}{hw,ua}
\icmlauthor{Shimon Whiteson}{ox}
\end{icmlauthorlist}

\icmlaffiliation{ox}{University of Oxford}
\icmlaffiliation{hw}{Huawei Technologies}
\icmlaffiliation{ua}{University of Alberta}

\icmlcorrespondingauthor{Shangtong Zhang}{\\\mbox{shangtong.zhang@cs.ox.ac.uk}}

\icmlkeywords{}

\vskip 0.3in
]



\printAffiliationsAndNotice{}  

\begin{abstract}
The deadly triad refers to the instability of a reinforcement learning algorithm when it employs off-policy learning, 
function approximation,
and bootstrapping simultaneously.
In this paper,
we investigate the target network as a tool for breaking the deadly triad,
providing theoretical support for the conventional wisdom that a target network stabilizes training.
We first propose and analyze a novel target network update rule which augments the commonly used Polyak-averaging style update with two projections. 
We then apply the target network and ridge regularization in several divergent algorithms and show their convergence to regularized TD fixed points.
Those algorithms 
are off-policy with linear function approximation and bootstrapping,
spanning both policy evaluation and control, as well as
both discounted and average-reward settings.
In particular,
we provide the first convergent linear $Q$-learning algorithms under nonrestrictive and changing behavior policies without bi-level optimization.
\end{abstract}

\section{Introduction}
The deadly triad (see, e.g., Chapter 11.3 of \citet{sutton2018reinforcement}) refers to the instability of a value-based reinforcement learning (RL, \citet{sutton2018reinforcement}) algorithm when it employs off-policy learning, function approximation, and 
bootstrapping simultaneously.
Different from \emph{on-policy} methods,
where the policy of interest is executed for data collection,
\emph{off-policy} methods execute a different policy for data collection,
which is usually safer \citep{dulac2019challenges} and more data efficient \citep{lin1992self,sutton2011horde}.
\emph{Function approximation} methods use parameterized functions,
instead of a look-up table,
to represent quantities of interest,
which usually cope better with large-scale problems \citep{mnih2015human,silver2016mastering}.
\emph{Bootstrapping} methods construct update targets for an estimate by using the estimate itself recursively,
which usually has lower variance than \emph{Monte Carlo} methods \citep{sutton1988learning}. 
However,
when an algorithm employs all those three preferred ingredients (off-policy learning, function approximation, and bootstrapping) simultaneously,
there is usually no guarantee that the resulting algorithm is well behaved and the value estimates
can easily diverge (see, e.g., \citet{baird1995residual,tsitsiklis1997analysis,zhang2020average}),
yielding the notorious deadly triad. 

An example of the deadly triad is $Q$-learning \citep{watkins1992q} with linear function approximation,
whose divergence is well documented in \citet{baird1995residual}.
However,
Deep-$Q$-Networks (DQN, \citet{mnih2015human}),
a combination of
$Q$-learning and deep neural network function approximation,
has enjoyed great empirical success.
One major improvement of DQN over linear $Q$-learning is the use of a target network, a copy of the neural network function approximator (the main network) that is periodically synchronized with the main network.
Importantly, 
the bootstrapping target in DQN is computed via the target network instead of the main network.
As the target network changes slowly,
it provides a stable bootstrapping target which in turn stabilizes the training of DQN.
Instead of the periodical synchronization,
\citet{lillicrap2015continuous} propose a Polyak-averaging style target network update,
which has also enjoyed great empirical success \citep{fujimoto2018addressing,haarnoja2018soft}.

Inspired by the empirical success of the target network in RL with deep networks,
in this paper,
we theoretically investigate the target network
as a tool for breaking the deadly triad.
We consider a two-timescale framework,
where the main network is updated faster than the target network.
By using a target network to construct the bootstrapping target,
the main network update becomes least squares regression.
After adding ridge regularization \citep{tikhonov2013numerical} to this least squares problem,
we show convergence for both the target and main networks.

Our main contributions are twofold.
First,
we propose a novel target network update rule augmenting the Polyak-averaging style update with two projections.
The balls for the projections are usually large so most times they are just identity mapping.
However,
those two projections offer significant theoretical advantages making it possible to analyze where the target network converges to (Section~\ref{sec target network}).
Second,
we apply the target network in various existing divergent algorithms and show their convergence to regularized TD \citep{sutton1988learning} fixed points.
Those algorithms are off-policy algorithms with linear function approximation and bootstrapping,
spanning both policy evaluation and control, as well as
both discounted and average-reward settings.
In particular,
we provide the first convergent linear $Q$-learning algorithms under nonrestrictive and changing behavior policies without bi-level optimization,
for both discounted and average-reward settings.

\section{Background}
Let $M$ be a real positive definite matrix and $x$ be a vector,
we use $\norm{x}_M \doteq \sqrt{x^\top M x}$ to denote the norm induced by $M$ and $\norm{\cdot}_M$ to denote the corresponding induced matrix norm.
When $M$ is the identity matrix $I$, 
we ignore the subscript $I$ for simplicity.
We use vectors and functions interchangeably when it does not cause confusion, e.g., given $f: \mathcal{X} \to \R$, we also use $f$ to denote the corresponding vector in $\R^{|\mathcal{X}|}$. 
All vectors are column vectors.
We use $\tb{1}$ to denote an all one vector,
whose dimension can be deduced from the context.
$\tb{0}$ is similarly defined.

We consider an infinite horizon Markov Decision Process (MDP, see, e.g., \citet{puterman2014markov}) consisting of a finite state space $\mathcal{S}$,
a finite action space $\mathcal{A}$,
a transition kernel $p: \mathcal{S} \times \mathcal{S} \times \mathcal{A} \to [0, 1]$,
and
a reward function $r: \mathcal{S} \times \mathcal{A} \to \R$.
At time step $t$, 
an agent at a state $S_t$ executes an action $A_t \sim \pi(\cdot | S_t)$,
where $\pi: \mathcal{A} \times \mathcal{S} \to [0, 1]$ is the policy followed by the agent.
The agent then receives a reward $R_{t+1} \doteq r(S_t, A_t)$ and proceeds to a new state $S_{t+1} \sim p(\cdot|S_t, A_t)$.

In the \emph{discounted} setting,
we consider a discount factor $\gamma \in [0, 1)$ and define the return at time step $t$ as
$G_t \doteq \sum_{i=1}^\infty \gamma^{i-1} R_{t+i}$,
which allows us to define the action-value function 
$q_\pi(s, a) \doteq \E_{\pi, p}[G_t | S_t = s, A_t = a]$.
The action-value function $q_\pi$ is the unique fixed point of the Bellman operator $\vop_\pi$, i.e.,
$q_\pi = \vop_\pi q_\pi \doteq r + \gamma P_\pi q_\pi$,
where $P_\pi \in \R^{\nsa \times \nsa}$ is the transition matrix, i.e., $P_\pi((s, a), (s', a')) \doteq \sum_a p(s'|s, a) \pi(a' | s')$.

In the \emph{average-reward} setting,
we assume:
\begin{assumption}
\label{assu ergodic chain target policy}
The chain induced by $\pi$ is ergodic.
\end{assumption}
This allows us to define the \emph{reward rate} 
\\$\bar r_\pi \doteq \lim_{T \to \infty} \frac{1}{T}\sum_{t=1}^{T}\E[R_t | p, \pi]$.
The differential action-value function $\bar q_\pi(s, a)$ is defined as 
\begin{align}
\textstyle{\lim_{T \to \infty} \sum_{t=0}^T \E_{\pi, p}[ r(S_t, A_t) - \bar r_\pi | S_0 = s, A_0 = a]}.
\end{align}
The differential Bellman equation is
\begin{align}
\label{eq differential bellman}
\bar q = r - \bar r \tb{1} + P_\pi \bar q,
\end{align}
where $\bar q \in \R^\nsa$ and $\bar r \in \R$ are free variables.
It is well known that all solutions to~\eqref{eq differential bellman} form a set $\qty{(\bar q, \bar r) \mid \bar r = \bar r_\pi, \bar q = q_\pi + c \tb{1}, c \in \R}$ \citep{puterman2014markov}.

The \emph{policy evaluation} problem refers to estimating $q_\pi$ or $(\bar q_\pi, \bar r_\pi)$.
The \emph{control} problem refers to finding a policy $\pi$ maximizing $q_\pi(s, a)$ for each $(s, a)$ or maximizing $\bar r_\pi$.
With \emph{linear} function approximation,
we approximate $q_\pi(s, a)$ or $\bar q_\pi(s, a)$ with $x(s, a)^\top w$,
where $x: \mathcal{S} \times \mathcal{A} \to \R^K$ is a feature mapping and $w \in \R^K$ is the learnable parameter.
We use $X \in \R^{\nsa \times K}$ to denote the feature matrix,
each row of which is $x(s, a)^\top$, and assume:
\begin{assumption}
\label{assu full rank X}
 $X$ has linearly independent columns. 
\end{assumption}
In the average-reward setting,
we use an additional parameter $\bar r \in \R$ to approximate $\bar r_\pi$.
In the \emph{off-policy} learning setting, 
the data for policy evaluation or control is collected by executing a policy $\mu$ (behavior policy) in the MDP,
which is different from $\pi$ (target policy).
In the rest of the paper,
we consider the off-policy linear function approximation setting
thus always assume $A_t \sim \mu(\cdot | S_t)$.
We use as shorthand $x_t \doteq x(S_t, A_t), \bar x_t \doteq \sum_a \pi(a | S_t) x(S_t, a)$.

\textbf{Policy Evaluation}. 
In the discounted setting,
similar to Temporal Difference Learning (TD, \citet{sutton1988learning}),
one can use Off-Policy Expected SARSA to estimate $q_\pi$,
which updates $w$ as
\begin{align}
\delta_t &\gets R_{t+1} + \gamma \bar x_{t+1}^\top w_t - x_t^\top w_t, \\
\label{eq TD update}
w_{t+1} &\gets w_t + \alpha_t \delta_t x_t,
\end{align}
where $\qty{\alpha_t}$ are learning rates. 
In the average-reward setting,
\eqref{eq differential bellman} implies that
$\bar r_\pi = d^\top (r + P_\pi \bar q_\pi - \bar q_\pi)$ holds for any probability distribution $d$.
In particular, it holds for $d = d_\mu$.
Consequently,
to estimate $\bar q_\pi$ and $\bar r_\pi$,
\citet{wan2020learning,zhang2020average} update $w$ and $\bar r$ as
\begin{align}
w_{t+1} &\gets w_t + \alpha_t (R_{t+1} - \bar r_t + \bar x_{t+1}^\top w_t - x_t^\top w_t) x_t, \\
\label{eq diff-TD update}
\bar r_{t+1} &\gets \bar r_t + \alpha_t (R_{t+1} + \bar x_{t+1}^\top w_t - x_t^\top w_t - \bar r_t).
\end{align}
Unfortunately,
both~\eqref{eq TD update} and~\eqref{eq diff-TD update} can possibly diverge (see, e.g., \citet{tsitsiklis1997analysis,zhang2020average}),
which exemplifies the deadly triad in discounted and average-reward settings respectively.

\textbf{Control}.
In the discounted setting,
$Q$-learning with linear function approximation yields
\begin{align}
\delta_t &\gets R_{t+1} + \gamma \max_{a'} x(S_{t+1}, a')^\top w_t - x_t^\top w_t, \\
\label{eq Q update}
w_{t+1} &\gets w_t + \alpha_t \delta_t x_t.
\end{align}
In the average-reward setting,
Differential $Q$-learning \citep{wan2020learning} with linear function approximation yields
\begin{align}
\delta_t &\gets R_{t+1} - \bar r_t + \gamma \max_{a'} x(S_{t+1}, a')^\top w_t - x_t^\top w_t, \\
\label{eq diff Q update}
w_{t+1} &\gets w_t + \alpha_t \delta_t x_t, \quad \bar r_{t+1} \gets \bar r_t + \alpha_t \delta_t.
\end{align}
Unfortunately,
both \eqref{eq Q update} and \eqref{eq diff Q update} can possibly diverge as well (see, e.g., \citet{baird1995residual,zhang2020average}),
exemplifying the deadly triad again.

Motivated by the empirical success of the target network in deep RL, 
one can apply the target network in the linear function approximation setting.
For example,
using a target network in \eqref{eq Q update} yields
\begin{align}
\delta_t &\gets R_{t+1} + \gamma \max_{a'} x(S_{t+1}, a')^\top \theta_t - x_t^\top w_t, \\
\label{eq target-network q update main}
w_{t+1} &\gets w_t + \alpha_t \delta_t x_t, \\
\label{eq target-network q update target}
\theta_{t+1} &\gets \theta_t + \beta_t (w_t - \theta_t),
\end{align}
where $\theta$ denotes the target network,
$\qty{\beta_t}$ are learning rates,
and we consider the Polyak-averaging style target network update.
The convergence of \eqref{eq target-network q update main} and \eqref{eq target-network q update target}, however,
remains unknown.
Besides target networks, regularization has also been widely used in deep RL, e.g., \citet{mnih2015human} consider a Huber loss instead of a mean-squared loss;
\citet{lillicrap2015continuous} consider $\ell_2$ weight decay in updating  $Q$-values.

\section{Analysis of the Target Network}
\label{sec target network}
In Sections~\ref{sec application oppe} \&~\ref{sec application control},
we consider the merits of using a target network in several linear RL algorithms (e.g., \eqref{eq TD update}~\eqref{eq diff-TD update}~\eqref{eq Q update}~\eqref{eq diff Q update}). 
To this end,
in this section,
we start by proposing and analyzing a novel target network update rule:
\begin{align}
\label{eq general target net update}
\theta_{t+1} \doteq \Gamma_{B_1} \big( \theta_t + \beta_t (\Gamma_{B_2}(w_t) - \theta_{t}) \big).
\end{align}
In~\eqref{eq general target net update},
$w$ denotes the main network and $\theta$ denotes the target network.
$\Gamma_{B_1}: \R^K \to \R^K$ is a projection to the ball $B_1 \doteq \qty{x \in \R^K \mid \norm{x} \leq R_{B_1}}$, i.e.,
\begin{align}
\textstyle{\Gamma_{B_1}(x) \doteq 
x \mathbb{I}_{\norm{x} \leq R_{B_1}} + ({R_{B_1}x}/{\norm{x}}) \mathbb{I}_{\norm{x} > R_{B_1}}},
\end{align}
where $\mathbb{I}$ is the indicator function.
$\Gamma_{B_2}$ is a projection onto the ball $B_2$ with a radius $R_{B_2}$.
We make the following assumption about the learning rates:
\begin{assumption}
\label{assu target net lr}
$\qty{\beta_t}$ is a deterministic positive nonincreasing sequence satisfying $\sum_t \beta_t = \infty, \sum_t \beta_t^2 < \infty$.
\end{assumption}
While \eqref{eq general target net update} specifies only how $\theta$ is updated,
we assume $w$ is updated such that $w$ can track $\theta$ in the sense that
\begin{assumption}
\label{assu target net main net}
There exists $w^*: \R^K \to \R^K$ such that 
$\lim_{t \to \infty} \norm{w_t - w^*(\theta_t)} = 0$ almost surely.
\end{assumption}
After making some additional assumptions on $w^*$,
we arrive at our general convergent results.
\begin{assumption}
\label{assu target net boundedness}
$\sup_\theta \norm{w^*(\theta)} < R_{B_2} < R_{B_1} < \infty$.
\end{assumption}
\begin{assumption}
\label{assu target net contraction}
 $w^*$ is a contraction mapping w.r.t. $\norm{\cdot}$. 
\end{assumption}

\begin{theorem}
\label{thm:target-net}
(Convergence of Target Network)
Under Assumptions~\ref{assu target net lr}-\ref{assu target net contraction},
the iterate $\qty{\theta_t}$ generated by~\eqref{eq general target net update} satisfies
\begin{align}
\textstyle{\lim_{t \to \infty} w_t = \lim_{t \to \infty} \theta_t = \theta^*} \qq{almost surely,} 
\end{align}
where $\theta^*$ is the unique fixed point of $w^*(\cdot)$.
\end{theorem}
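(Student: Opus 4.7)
The plan is to recognize \eqref{eq general target net update} as a deterministic-looking Banach iteration with a vanishing perturbation, where the strict separation $\sup_\theta\|w^*(\theta)\| < R_{B_2} < R_{B_1}$ in Assumption~\ref{assu target net boundedness} renders both projections inactive near the fixed point. Since $w^*$ is a contraction by Assumption~\ref{assu target net contraction}, Banach's theorem yields a unique $\theta^* = w^*(\theta^*)$; Assumption~\ref{assu target net boundedness} then forces $\|\theta^*\| < R_{B_2} < R_{B_1}$, placing $\theta^*$ strictly inside both balls.

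Next, I would fold the inner projection into a tracking error. Set $\tilde w_t \doteq \Gamma_{B_2}(w_t)$ and $\epsilon_t \doteq \tilde w_t - w^*(\theta_t)$. Because $w^*(\theta_t) \in B_2$ by Assumption~\ref{assu target net boundedness} and $\Gamma_{B_2}$ is nonexpansive, $\|\epsilon_t\| \le \|w_t - w^*(\theta_t)\|$, which tends to $0$ almost surely by Assumption~\ref{assu target net main net}. The iteration then reads
\begin{align}
\theta_{t+1} = \Gamma_{B_1}\bigl(\theta_t + \beta_t(w^*(\theta_t) - \theta_t + \epsilon_t)\bigr).
\end{align}
For the Lyapunov step, nonexpansiveness of $\Gamma_{B_1}$ about $\theta^* \in B_1$ yields
\begin{align}
\|\theta_{t+1} - \theta^*\|
&\le \bigl\|(1-\beta_t)(\theta_t - \theta^*) + \beta_t\bigl(w^*(\theta_t) - w^*(\theta^*)\bigr) + \beta_t\epsilon_t\bigr\| \\
&\le \bigl(1 - (1-\kappa)\beta_t\bigr)\|\theta_t - \theta^*\| + \beta_t\|\epsilon_t\|,
\end{align}
valid once $\beta_t \le 1$ (eventually true since $\beta_t \to 0$ by Assumption~\ref{assu target net lr}), where $\kappa < 1$ is the contraction modulus of $w^*$. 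Since $\sum_t (1-\kappa)\beta_t = \infty$ and $\|\epsilon_t\| \to 0$, a standard deterministic Gronwall-type lemma (if $a_{t+1} \le (1-\eta_t) a_t + \eta_t c_t$ with $\eta_t \in [0,1]$, $\sum_t \eta_t = \infty$, and $c_t \to 0$, then $a_t \to 0$) applied pathwise gives $\theta_t \to \theta^*$ almost surely. Finally, $\|w_t - \theta^*\| \le \|w_t - w^*(\theta_t)\| + \kappa\|\theta_t - \theta^*\| \to 0$ completes the claim.

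The only subtlety I anticipate is ensuring that the two projections do not introduce spurious equilibria on the boundary or wreck the contraction inequality. Both concerns are dispatched by the strict interior placement of Assumption~\ref{assu target net boundedness}: $\Gamma_{B_2}$ is absorbed via nonexpansiveness since $w^*(\theta_t)$ never leaves $B_2$, and $\Gamma_{B_1}$ is absorbed via nonexpansiveness toward the interior point $\theta^* \in B_1$. In particular, no ODE-based stochastic approximation machinery is needed, because Assumption~\ref{assu target net main net} already delivers almost-sure tracking and reduces the problem to a deterministic recursion along each sample path.
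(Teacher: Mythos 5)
Your proposal is correct, and it takes a genuinely different route from the paper. The paper runs the full ODE machinery of stochastic approximation: it computes the directional derivative of $\Gamma_{B_1}$, shows that the second projection together with $R_{B_2}<R_{B_1}$ forces $\Gamma_{B_2}(w_t)-\theta_t$ to always be a feasible direction at the boundary of $B_1$ (so the reflection term vanishes), invokes Borkar's theorem to conclude convergence to a compact connected internally chain transitive invariant set of $\dot\theta = w^*(\theta)-\theta$, and finally uses the Lyapunov function $\tfrac12\|\theta-\theta^*\|^2$ to identify that set as $\{\theta^*\}$. You instead observe that, since Assumption~\ref{assu target net main net} already supplies almost-sure tracking, the update is a deterministic perturbed Banach iteration along each sample path: $\Gamma_{B_2}$ is absorbed by nonexpansiveness because $w^*(\theta_t)\in B_2$ (Assumption~\ref{assu target net boundedness}), $\Gamma_{B_1}$ is absorbed by nonexpansiveness about the fixed point $\theta^*$, which lies in the interior of $B_1$ since $\|\theta^*\|=\|w^*(\theta^*)\|<R_{B_2}<R_{B_1}$, and the scalar recursion $a_{t+1}\le(1-(1-\kappa)\beta_t)a_t+\beta_t\|\epsilon_t\|$ with $\sum_t\beta_t=\infty$ and $\|\epsilon_t\|\to 0$ finishes the job. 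All the steps check out (the split $|1-\beta_t|=1-\beta_t$ holds for large $t$ since $\beta_t\to 0$, and only the tail of the recursion matters). Your argument is shorter, needs neither $\sum_t\beta_t^2<\infty$ nor the directional-derivative analysis of the projection, and would readily yield a convergence rate. What the paper's ODE approach buys in exchange is graceful degradation: without the contraction assumption it still establishes convergence of $\{\theta_t\}$ to an invariant set of the ODE, a fact the authors explicitly rely on in their discussion of Theorem~\ref{thm:gradient-q}; your argument leans essentially on Assumption~\ref{assu target net contraction} and gives nothing in its absence.
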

Assumptions~\ref{assu target net main net} - \ref{assu target net contraction} are assumed only for now.
Once the concrete update rules for $w$ are specified in the algorithms in Sections~\ref{sec application oppe} \&~\ref{sec application control},
we will prove that those assumptions indeed hold.
Assumption~\ref{assu target net main net} is expected to hold because we will later require that the target network to be updated much slower than the main network.
Consequently, the update of the main network will become a standard least-square regression,
whose solution $w^*$ usually exists.
Assumption~\ref{assu target net contraction} is expected to hold becuase we will later apply ridge regularization to the least-square regression. 
Consequently, its solution $w^*$ will not change too fast w.r.t. the change of the regression target.

The target network update~\eqref{eq general target net update} is the same as that in~\eqref{eq target-network q update target} except for the two projections,
where the first projection $\Gamma_{B_1}$ is standard in optimization literature.
The second projection $\Gamma_{B_2}$,
however,
appears novel and plays a crucial role in our analysis.
\emph{First}, if we have only $\Gamma_{B_1}$,
the iterate $\qty{\theta_t}$ would converge to the invariant set of the ODE
\begin{align}
\label{eq projected ode}
\textstyle{\dv{t} \theta(t) = w^*(\theta(t)) - \theta(t) + \zeta(t)},
\end{align}
where $\zeta(t)$ is a reflection term that moves $\theta(t)$ back to $B_1$ when $\theta(t)$ becomes too large (see, e.g., Section 5 of \citet{kushner2003stochastic}).
Due to this reflection term,
it is possible that $\theta(t)$ visits the boundary of $B_1$ infinitely often.
It thus becomes unclear what the invariant set of~\eqref{eq projected ode} is even if $w^*$ is contractive.
By introducing the second projection $\Gamma_{B_2}$ and ensuring $R_{B_1} > R_{B_2}$,
we are able to remove the reflection term and show that the iterate $\qty{\theta_t}$ tracks the ODE
\begin{align}
\label{eq ode}
\textstyle{\dv{t} \theta(t) = w^*(\theta(t)) - \theta(t)},
\end{align}
whose invariant set is a singleton $\qty{\theta^*}$ when Assumption~\ref{assu target net contraction} holds.
See the proof of Theorem~\ref{thm:target-net} in Section~\ref{sec convergence proof general target net} based on the ODE approach \citep{kushner2003stochastic,borkar2009stochastic}  for more details.
\emph{Second},
to ensure the main network tracks the target network in the sense of Assumption~\ref{assu target net main net} in our applications in Sections~\ref{sec application oppe} \&~\ref{sec application control},
it is crucial that the target network changes sufficiently slowly in the following sense:
\begin{lemma}
\label{lem target net changes slowly}
$\norm{\theta_{t+1} - \theta_t} \leq \beta_t C_0$ for some constant $C_0 > 0$.
\end{lemma}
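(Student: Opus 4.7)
The plan is to exploit non-expansiveness of the projection $\Gamma_{B_1}$ and the fact that $\theta_t$ already lies in $B_1$ to bound the increment directly. Since every iterate is produced by $\Gamma_{B_1}$, we have $\theta_t \in B_1$ for all $t$ (assuming $\theta_0 \in B_1$, which is without loss of generality), so $\Gamma_{B_1}(\theta_t) = \theta_t$. This lets me rewrite
\begin{align}
\theta_{t+1} - \theta_t = \Gamma_{B_1}\bigl(\theta_t + \beta_t(\Gamma_{B_2}(w_t) - \theta_t)\bigr) - \Gamma_{B_1}(\theta_t).
\end{align}

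Next I would invoke the standard fact that projection onto a closed convex set (here the ball $B_1$) is $1$-Lipschitz with respect to $\norm{\cdot}$, giving
\begin{align}
\norm{\theta_{t+1} - \theta_t} \leq \norm{\beta_t(\Gamma_{B_2}(w_t) - \theta_t)} = \beta_t \norm{\Gamma_{B_2}(w_t) - \theta_t}.
\end{align}
Then the triangle inequality together with $\norm{\Gamma_{B_2}(w_t)} \leq R_{B_2}$ (by definition of $\Gamma_{B_2}$) and $\norm{\theta_t} \leq R_{B_1}$ (because $\theta_t \in B_1$) yields
\begin{align}
\norm{\theta_{t+1} - \theta_t} \leq \beta_t \bigl(R_{B_1} + R_{B_2}\bigr).
\end{align}
So I can take $C_0 \doteq R_{B_1} + R_{B_2}$, which is finite by Assumption~\ref{assu target net boundedness}.

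There is no real obstacle here; the whole point of inserting the inner projection $\Gamma_{B_2}$ and choosing $R_{B_1} > R_{B_2}$ is precisely to make both $\Gamma_{B_2}(w_t)$ and $\theta_t$ uniformly bounded regardless of the (possibly unbounded a priori) size of $w_t$. The only subtlety worth stating explicitly is the use of the non-expansiveness of $\Gamma_{B_1}$ combined with $\theta_t$ being a fixed point of $\Gamma_{B_1}$; without the outer projection this argument would still work, but the outer projection is what guarantees $\theta_t \in B_1$ inductively without any assumption on the main network iterates $w_t$.
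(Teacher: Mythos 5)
Your proof is correct and is essentially identical to the paper's: both rewrite $\theta_{t+1}-\theta_t$ as a difference of $\Gamma_{B_1}$ evaluations (using $\theta_t \in B_1$), apply non-expansiveness of the projection, and bound $\norm{\Gamma_{B_2}(w_t) - \theta_t}$ by $R_{B_1} + R_{B_2}$ to get $C_0 = R_{B_1} + R_{B_2}$.
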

Lemma~\ref{lem target net changes slowly} would not be feasible without the second projection $\Gamma_{B_2}$ and we defer its proof to Section~\ref{sec target net changes slowly}

In Sections~\ref{sec application oppe} \&~\ref{sec application control},
we provide several applications of Theorem~\ref{thm:target-net} in both
discounted and average-reward settings,
for both policy evaluation and control.
We consider a two-timescale framework, 
where the target network is updated more slowly than the main network.
Let $\qty{\alpha_t}$ be the learning rates for updating the main network $w$;
we assume
\begin{assumption}
\label{assu target net all lrs}
$\qty{\alpha_t}$ is a deterministic positive nonincreasing sequence satisfying $\sum_t \alpha_t = \infty, \sum_t \alpha_t^2 < \infty$.
Further, for some $d > 0$,
$\sum_t (\beta_t / \alpha_t)^d < \infty$.
\end{assumption}

\section{Application to Off-Policy Policy Evaluation}
\label{sec application oppe}
In this paper, we consider estimating the action-value $q_\pi$ instead of the state-value $v_\pi$ for unifying notations of policy evaluation and control.
The algorithms for estimating $v_\pi$ are straightforward up to change of notations and introduction of importance sampling ratios.

\textbf{Discounted Setting}. Using a target network for bootstrapping in~\eqref{eq TD update} yields 
\begin{align}
w_{t+1} \gets w_t + \alpha_t (R_{t+1} + \gamma \bar x_{t+1}^\top \theta_t - x_t^\top w_t) x_t.
\end{align}
As $\theta_t$ is quasi-static for $w_t$
(Lemma~\ref{lem target net changes slowly} and Assumption~\ref{assu target net all lrs}),
this update becomes least squares regression.
Motivated by the success of ridge regularization in least squares and the widespread use of weight decay in deep RL, 
which is essentially ridge regularization, 
we add ridge regularization to this least squares,
yielding $Q$-evaluation with a Target Network (Algorithm~\ref{alg:off-policy-td}).

\begin{algorithm}
\caption{$Q$-evaluation with a Target Network}
\label{alg:off-policy-td}
\begin{algorithmic}
\STATE
\textbf{INPUT:} $\eta > 0, R_{B_1} > R_{B_2} > 0$\\
\STATE Initialize $\theta_0 \in B_1$ and $S_0$
\STATE Sample $A_0 \sim \mu(\cdot | S_0)$
\FOR{$t = 0, 1, \dots$}
\STATE Execute $A_t$, get $R_{t+1}$ and $S_{t+1}$
\STATE Sample $A_{t+1} \sim \mu(\cdot | S_{t+1})$ 
\STATE $\bar x_{t+1} \doteq \sum_{a'} \pi(a'| S_{t+1}) x(S_{t+1}, a')$
\STATE $\delta_t \doteq R_{t+1} + \gamma \bar x_{t+1}^\top \theta_t - x_t^\top w_t$
\STATE $w_{t+1} \doteq w_t + \alpha_t \delta_t x_t - \alpha_t \eta w_t$ 
\STATE $\theta_{t+1} \doteq \Gamma_{B_1} \big( \theta_t + \beta_t (\Gamma_{B_2}(w_t) - \theta_{t}) \big)$
\ENDFOR
\end{algorithmic}
\end{algorithm}

Let $A = X^\top D_\mu (I - \gamma P_\pi)X, b = X^\top D_\mu r$,
where $D_\mu$ is a diagonal matrix whose diagonal entry is $d_\mu$,
the stationary state-action distribution of the chain induced by $\mu$.
Let $\Pi_{D_\mu} \doteq X (X^\top D_\mu X)^{-1}X^\top D_\mu$ be the projection to the column space of $X$.
We have

\begin{assumption}
\label{assu:op-td-ergodic}
The chain in $\mathcal{S} \times \mathcal{A}$ induced by $\mu$ is ergodic.
\end{assumption}
\begin{theorem}
\label{thm:op-td}
Under Assumptions~\ref{assu full rank X},~\ref{assu target net lr},~\ref{assu target net all lrs}, \&~\ref{assu:op-td-ergodic},
for any $\xi \in (0, 1)$,
let 
$\textstyle{C_0 \doteq \frac{2(1 - \xi)\sqrt{\eta}}{\gamma \norm{P_\pi}_{D_\mu}}, C_1 \doteq \frac{\norm{r}}{2\xi\sqrt{\eta}} + 1}$,
then for all $\norm{X} < C_0, C_1 < R_{B_1}, R_{B_1} - \xi < R_{B_2} < R_{B_1}$
the iterate $\qty{w_t}$ generated by Algorithm~\ref{alg:off-policy-td} satisfies 
\begin{align}
\textstyle{\lim_{t \to \infty} w_t = w^*_\eta} \qq{almost surely,}
\end{align}
where $w^*_\eta$ is the unique solution of 
$(A+\eta I) w - b = \tb{0}$,
and
\begin{align}
&\norm{X w^*_\eta - q_\pi} \\
\leq & \textstyle{ ( \frac{\sigma_{\max}(X)^2}{\sigma_{\min}(X)^4 \sigma_{\min}(D_\mu)^{2.5}} \norm{q_\pi} \eta + \norm{\Pi_{D_\mu} q_\pi - q_\pi} ) / \xi},
\end{align}
where $\sigma_{\max}(\cdot), \sigma_{\min}(\cdot)$ denotes the largest and minimum singular values.
\end{theorem}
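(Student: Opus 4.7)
The plan is to invoke Theorem~\ref{thm:target-net}: since the target-network update in Algorithm~\ref{alg:off-policy-td} is exactly the general rule~\eqref{eq general target net update}, I only need to identify the map $w^*(\cdot)$ that the main-network iterates track, verify Assumptions~\ref{assu target net main net}--\ref{assu target net contraction}, and then post-process the identified fixed point to obtain the approximation bound on $\|Xw^*_\eta - q_\pi\|$.

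For a frozen target $\theta$, Assumption~\ref{assu:op-td-ergodic} ensures that $\{(S_t,A_t)\}$ has stationary distribution $d_\mu$, so the expected drift of the $w$-update is $b + \gamma X^\top D_\mu P_\pi X\,\theta - (C + \eta I) w$, with $C \doteq X^\top D_\mu X$ positive definite by Assumption~\ref{assu full rank X}. The ridge term makes this a globally contractive linear recursion whose equilibrium is
\begin{equation*}
w^*(\theta) \doteq (C + \eta I)^{-1}\bigl(b + \gamma X^\top D_\mu P_\pi X\,\theta\bigr).
\end{equation*}
Assumption~\ref{assu target net main net} then follows from a standard two-timescale argument: Lemma~\ref{lem target net changes slowly} gives $\|\theta_{t+1}-\theta_t\| \leq \beta_t C_0$ and Assumption~\ref{assu target net all lrs} forces $\beta_t/\alpha_t \to 0$, so $\theta_t$ is quasi-static relative to the fast $w$-recursion, which is a linear Markov-noise stochastic approximation tracking $w^*(\theta_t)$ almost surely.

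The remaining Assumptions~\ref{assu target net boundedness} and~\ref{assu target net contraction} are checked directly from the formula, since $w^*$ is affine in $\theta$ with Jacobian $J \doteq \gamma (C+\eta I)^{-1} X^\top D_\mu P_\pi X$. The hypothesis $\|X\| < C_0$ is engineered so that $\|J\| \leq 1-\xi$, using $\|(C+\eta I)^{-1}\| \leq 1/\eta$ together with a $D_\mu$-weighted bound on $P_\pi$. For boundedness, $\|w^*(\theta)\| \leq \|(C+\eta I)^{-1}b\| + \|J\|\|\theta\|$, so on $B_1$ one obtains an estimate of the form $\text{(constant)} + (1-\xi) R_{B_1}$; the definition of $C_1$ and the gap condition $R_{B_1}-R_{B_2}<\xi$ are calibrated exactly so that this sits strictly below $R_{B_2}$. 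Theorem~\ref{thm:target-net} now yields $w_t \to \theta^*$, and solving $\theta^* = w^*(\theta^*)$ rearranges to $(A + \eta I)\theta^* = b$, identifying $\theta^* = w^*_\eta$.

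For the approximation bound, I would decompose $Xw^*_\eta - q_\pi = (Xw^*_\eta - \Pi_{D_\mu} q_\pi) + (\Pi_{D_\mu} q_\pi - q_\pi)$; the second summand is the irreducible projection error that already appears in the stated bound. For the first, substituting the explicit form $\Pi_{D_\mu} = X C^{-1} X^\top D_\mu$ together with a resolvent-style identity applied to $(A+\eta I)^{-1}$ reveals it to be linear in $\eta$, with coefficient controlled by the singular values of $X$ and $D_\mu^{1/2}$, matching the $\sigma_{\max}(X)^2/(\sigma_{\min}(X)^4 \sigma_{\min}(D_\mu)^{2.5})\|q_\pi\|\eta$ term; the overall $1/\xi$ prefactor tracks the contraction margin consumed by the inequalities along the way. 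I expect the main obstacle to be the contraction bound $\|J\| \leq 1-\xi$: because $X^\top D_\mu P_\pi X$ is nonsymmetric, its Euclidean operator norm is not directly dominated by $\|P_\pi\|_{D_\mu}$, so one must pass through the $D_\mu$-inner product and translate back via the singular values of $X$, which is ultimately what forces the awkward $\sigma_{\min}(D_\mu)^{2.5}$ dependence in the final error bound.
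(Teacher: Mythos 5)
Your convergence argument follows the paper's route (identify the frozen-$\theta$ least-squares solution $w^*(\theta)$, verify Assumptions~\ref{assu target net main net}--\ref{assu target net contraction}, invoke Theorem~\ref{thm:target-net}), but two of your steps do not deliver the theorem as stated. First, the constants: using $\norm{(C+\eta I)^{-1}} \leq 1/\eta$ gives $\norm{J} \leq \gamma\eta^{-1}\norm{X}^2\norm{D_\mu P_\pi}$ and hence a condition of the form $\norm{X}^2 \lesssim \eta$, which is not the stated $C_0 = \frac{2(1-\xi)\sqrt{\eta}}{\gamma\norm{P_\pi}_{D_\mu}}$ (linear in $\norm{X}$, linear in $1-\xi$). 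The paper instead diagonalizes $D_\mu^{1/2}X$ and uses $\sigma + \eta/\sigma \geq 2\sqrt{\eta}$ to get the sharper estimate $\norm{(X^\top D_\mu X+\eta I)^{-1}X^\top D_\mu^{1/2}} \leq \frac{1}{2\sqrt{\eta}}$, which consumes one factor of $X$ and leaves only a single $\norm{X}$ in the Lipschitz bound; the same $\frac{1}{2\sqrt{\eta}}$ factor is what produces $C_1 = \frac{\norm{r}}{2\xi\sqrt{\eta}}+1$. Without it you prove a qualitatively similar but quantitatively different theorem.

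The more serious gap is in the approximation bound. You decompose through the \emph{unregularized} projection, $Xw^*_\eta - q_\pi = (Xw^*_\eta - \Pi_{D_\mu}q_\pi) + (\Pi_{D_\mu}q_\pi - q_\pi)$, and claim the first summand is linear in $\eta$ via a resolvent identity. It is not: as $\eta \to 0$ (with $X$ fixed) that term tends to $Xw^*_0 - \Pi_{D_\mu}q_\pi$, which Kolter's example shows can be arbitrarily large even when $\norm{\Pi_{D_\mu}q_\pi - q_\pi}$ is tiny; it also necessarily contains a representation-error contribution, so your accounting cannot reproduce the stated bound. The paper's argument anchors instead at the \emph{regularized} projection $\Pi^\eta_{D_\mu} \doteq X(X^\top D_\mu X+\eta I)^{-1}X^\top D_\mu$, for which $Xw^*_\eta = \Pi^\eta_{D_\mu}\vop_\pi(Xw^*_\eta)$ exactly, so that
\begin{align}
\norm{Xw^*_\eta - \Pi^\eta_{D_\mu}q_\pi} = \norm{\gamma\,\Pi^\eta_{D_\mu}P_\pi(Xw^*_\eta - q_\pi)} \leq (1-\xi)\norm{Xw^*_\eta - q_\pi},
\end{align}
and this term is \emph{absorbed back into the left-hand side}, yielding $\norm{Xw^*_\eta - q_\pi} \leq \frac{1}{\xi}\norm{\Pi^\eta_{D_\mu}q_\pi - q_\pi}$. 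Only then is the perturbation $\norm{\Pi^\eta_{D_\mu} - \Pi_{D_\mu}} \leq \frac{\sigma_{\max}(X)^2}{\sigma_{\min}(X)^4\sigma_{\min}(D_\mu)^{2.5}}\eta$ invoked, via an SVD of $D_\mu^{1/2}X$. This self-bounding step through $\Pi^\eta_{D_\mu}q_\pi$ is the idea missing from your sketch; "tracking the contraction margin" does not substitute for it because your decomposition routes around the place where the contraction acts.
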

We defer the proof to Section~\ref{sec:proof-of-td}.
Theorem~\ref{thm:op-td} requires that the balls for projection are sufficiently large,
which is completely feasible in practice.
Theorem~\ref{thm:op-td} also requires that the feature norm $\norm{X}$ is not too large.
Similar assumptions on feature norms also appear in \citet{,zou2019finite,du2019provably,chen2019,carvalho2020new,wang2020finite,wu2020finite} and can be easily achieved by scaling. 

The solutions to $Aw - b = \tb{0}$, if they exist,
are TD fixed points for off-policy policy evaluation in the discounted setting \citep{sutton2009convergent,sutton2009fast}.
Theorem~\ref{thm:op-td} shows that Algorithm~\ref{alg:off-policy-td} finds a regularized TD fixed point $w^*_\eta$,
which is also the solution of  
Least-Squares TD methods (LSTD, \citet{boyan1999least,yu2010convergence}). 
LSTD maintains estimates for $A$ and $b$ (referred to as $\hat A$ and $\hat b$) in an online fashion,
which requires $\mathcal{O}(K^2)$ computational and memory complexity per step.
As $\hat A$ is not guaranteed to be invertible,
LSTD usually uses $(\hat A + \eta I)^{-1} \hat b$ as the solution and $\eta$ plays a key role in its performance (see, e.g, Chapter 9.8 of \citet{sutton2018reinforcement}).
By contrast, Algorithm~\ref{alg:off-policy-td} finds the LSTD solution (i.e., $w^*_\eta$) with only $\mathcal{O}(K)$ computational and memory complexity per step.
Moreover,
Theorem~\ref{thm:op-td} provides a performance bound for $w^*_\eta$.
Let $w^*_0 \doteq A^{-1}b$;
\citet{kolter2011fixed} shows with a counterexample that the approximation error of TD fixed points
(i.e., $\norm{Xw^*_0 - q_\pi}$) can be arbitrarily large if $\mu$ is far from $\pi$,
as long as there is representation error (i.e., $\norm{\Pi_{D_\mu} q_\pi - q_\pi} > 0$) (see Section~\ref{sec expts} for details).
By contrast,
Theorem~\ref{thm:op-td} guarantees that
$\norm{Xw^*_\eta - q_\pi}$ is bounded from above,
which is one possible advantage of regularized TD fixed points. 


\begin{algorithm}[h]
\caption{Diff. $Q$-evaluation with a Target Network}
\label{alg:differential-off-policy-td}
\begin{algorithmic}
\STATE
\textbf{INPUT:} $\eta > 0, R_{B_1} > R_{B_2} > 0$\\
\STATE Initialize $[\theta^r_0,  \theta^{w\top}_0]^\top \in B_1$ and $S_0$
\STATE Sample $A_0 \sim \mu(\cdot | S_0)$
\FOR{$t = 0, 1, \dots$}
\STATE Execute $A_t$, get $R_{t+1}$ and $S_{t+1}$
\STATE Sample $A_{t+1} \sim \mu(\cdot | S_{t+1})$ 
\STATE $\bar x_{t+1} \doteq \sum_{a'} \pi(a'| S_{t+1}) x(S_{t+1}, a')$
\STATE $\delta_t \doteq R_{t+1} - \theta^r_t + \bar{x}_{t+1}^\top \theta^w_t - x_t^\top w_t$
\STATE $w_{t+1} \doteq w_t + \alpha_t \delta_t x_t - \alpha_t \eta w_t$
\STATE $\bar r_{t+1} \doteq \bar r_t + \alpha_t (R_{t+1} + \bar x_{t+1}^\top \theta_t^w - x_t^\top \theta_t^w - \bar r_t)$ 
\STATE $\mqty[\theta^r_{t+1} \\ \theta^w_{t+1}] \doteq \Gamma_{B_1} \big(\mqty[\theta^r_{t} \\ \theta^w_{t}] + \beta_t (\Gamma_{B_2}(\mqty[\bar r_t \\ w_t]) - \mqty[\theta^r_{t} \\ \theta^w_{t}]) \big)$
\ENDFOR
\end{algorithmic}
\end{algorithm}
\textbf{Average-reward Setting}.
In the average-reward setting,
we need to learn both $\bar r$ and $w$.
Hence,
we consider target networks $\theta^r$ and $\theta^w$ for $\bar r$ and $w$ respectively.
Plugging $\theta^r$ and $\theta^w$ into~\eqref{eq diff-TD update} for bootstrapping yields
Differential $Q$-evaluation with a Target Network (Algorithm~\ref{alg:differential-off-policy-td}),
where $\qty{B_i}$ are now balls in $R^{K+1}$.
In Algorithm~\ref{alg:differential-off-policy-td}, 
we impose ridge regularization only on $w$ as $\bar r$ is a scalar and thus does not have any representation capacity limit.

\begin{theorem}
\label{thm:diff-op-td}
Under Assumptions~\ref{assu ergodic chain target policy},~\ref{assu full rank X},~\ref{assu target net lr},~\ref{assu target net all lrs}, \&~\ref{assu:op-td-ergodic},
for any $\xi \in (0, 1)$,
there exist constants $C_0$ and $C_1$ such that
for all $\norm{X} < C_0, C_1 < R_{B_1}, R_{B_1} - \xi < R_{B_2} < R_{B_1}$,
the iterates $\qty{\bar r_t}$ and $\qty{w_t}$ generated by Algorithm~\ref{alg:differential-off-policy-td} satisfy 
\begin{align}
\textstyle{\lim_{t \to \infty} \bar r_t} &= \textstyle{d_\mu^\top (r + P_\pi Xw^*_\eta - Xw^*_\eta)}, \\
\textstyle{\lim_{t \to \infty} w_t} &= w^*_\eta \qq{almost surely,}
\end{align}
where $w^*_\eta$ is the unique solution of 
$(\bar A+\eta I) w - \bar b = \tb{0}$
with
\begin{align}
\bar A &\doteq X(D_\mu - d_\mu d_\mu^\top) (I - P_\pi) X, \\
\bar b &\doteq X^\top (D_\mu - d_\mu d_\mu^\top)r.
\end{align}
If features are zero-centered (i.e., $X^\top d_\mu = \tb{0}$), then
\begin{align}
\norm{X w^*_\eta - \bar q_\pi^c} &\leq ( \textstyle{ \frac{\sigma_{\max}(X)^2}{\sigma_{\min}(X)^4 \sigma_{\min}(D_\mu)^{2.5}} } \norm{\bar q_\pi^c} \eta  \\
&\quad + \norm{\Pi_{D_\mu} \bar q_\pi^c - \bar q_\pi^c}) / \xi, \\
|\bar r^*_\eta - \bar r_\pi| &\leq \norm{d_\mu^\top (P_\pi - I)} \inf_c \norm{(Xw^*_\eta - \bar q_\pi^c)}, 
\end{align}
where $\bar q_\pi^c \doteq \bar q_\pi + c \tb{1}$.
\end{theorem}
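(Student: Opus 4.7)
The plan is to apply Theorem~\ref{thm:target-net} to the stacked main-network iterate $(\bar r_t, w_t^\top)^\top \in \R^{K+1}$ together with the stacked target iterate $(\theta^r_t, \theta^{w\top}_t)^\top$. I first identify the limiting map $w^*$ induced by the fast timescale, then verify Assumptions~\ref{assu target net main net}--\ref{assu target net contraction}, and finally derive the approximation bounds.

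With the target frozen at $(\theta^r, \theta^w)$, Assumption~\ref{assu:op-td-ergodic} lets me take expectations under $d_\mu$ and set the expected drifts of the two fast updates to zero, giving
\begin{align}
\bar r^*(\theta^w) &= d_\mu^\top\bigl(r + (P_\pi - I) X \theta^w\bigr), \\
w^*(\theta^r, \theta^w) &= (X^\top D_\mu X + \eta I)^{-1} X^\top D_\mu\bigl(r - \theta^r \mathbf{1} + P_\pi X \theta^w\bigr).
\end{align}
Assumption~\ref{assu full rank X} plus $\eta > 0$ makes the inverse well-defined. The joint fixed-point equation $\theta = w^*(\theta)$, after substituting $\theta^r$ into the line for $\theta^w$ and using $X^\top D_\mu \mathbf{1} = X^\top d_\mu$, collapses to $(\bar A + \eta I)\theta^w = \bar b$ with $\bar A, \bar b$ as stated; hence the unique fixed point has $\theta^{w*} = w^*_\eta$ and $\theta^{r*} = d_\mu^\top(r + P_\pi X w^*_\eta - X w^*_\eta)$.

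The main technical step is verifying the hypotheses of Theorem~\ref{thm:target-net}. For Assumption~\ref{assu target net boundedness}, I would bound $\|w^*(\theta)\|$ explicitly by $\eta^{-1}$ times matrix-norm factors involving $X$, $D_\mu$, $P_\pi$, and $r$, using $\theta \in B_1$ to pick $C_1$ so that $\sup_\theta \|w^*(\theta)\| < R_{B_2}$. For Assumption~\ref{assu target net contraction} I would compute the Jacobian
\begin{align}
J = \begin{pmatrix} 0 & d_\mu^\top(P_\pi - I) X \\ -M X^\top d_\mu & M X^\top D_\mu P_\pi X \end{pmatrix}, \qquad M \doteq (X^\top D_\mu X + \eta I)^{-1},
\end{align}
and observe that as $\|X\| \to 0$ with $\eta$ fixed, every block vanishes: the row $\|d_\mu^\top(P_\pi - I) X\| = O(\|X\|)$, the column $\|M X^\top d_\mu\| \le \eta^{-1}\|X\|$, and the $K\times K$ block $\|M X^\top D_\mu P_\pi X\| \le \eta^{-1}\|X\|^2$. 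Hence there exists $C_0 > 0$ such that $\|X\| < C_0$ yields $\|J\| < 1$ and $w^*$ is a strict contraction w.r.t.\ $\|\cdot\|$. This is where the absence of a discount factor bites: unlike Theorem~\ref{thm:op-td}, contraction must be supplied entirely by the smallness of $\|X\|$ together with the ridge $\eta$, and this is the main obstacle. Assumption~\ref{assu target net main net} then follows from a standard two-timescale stochastic-approximation argument \citep{borkar2009stochastic}, using Lemma~\ref{lem target net changes slowly} and Assumption~\ref{assu target net all lrs} to treat the target as quasi-static on the fast timescale. Theorem~\ref{thm:target-net} now delivers the stated almost-sure limits.

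For the approximation error, the differential Bellman identity $\bar q_\pi = r - \bar r_\pi \mathbf{1} + P_\pi \bar q_\pi$ combined with $d_\mu^\top \mathbf{1} = 1$ and $(P_\pi - I)\mathbf{1} = \mathbf{0}$ yields $\bar r_\pi = d_\mu^\top(r + (P_\pi - I)\bar q_\pi^c)$ for every shift $c$. Subtracting from the limit of $\bar r_t$ gives $\bar r^*_\eta - \bar r_\pi = d_\mu^\top(P_\pi - I)(X w^*_\eta - \bar q_\pi^c)$, and an infimum over $c$ produces the stated bound on $|\bar r^*_\eta - \bar r_\pi|$. When $X^\top d_\mu = \mathbf{0}$, the cross term in $\bar A$ drops out, leaving $\bar A = X^\top D_\mu (I - P_\pi) X$, which mirrors the matrix $A$ of Theorem~\ref{thm:op-td} with $\gamma$ set to one on the subspace orthogonal to $\mathbf{1}$. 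I would therefore reuse that proof's decomposition $X w^*_\eta - \bar q_\pi^c = (X w^*_\eta - \Pi_{D_\mu} \bar q_\pi^c) + (\Pi_{D_\mu}\bar q_\pi^c - \bar q_\pi^c)$, bounding the first piece through the ridge perturbation $\eta(\bar A + \eta I)^{-1}$ expressed via the singular values of $X$ and $D_\mu$.
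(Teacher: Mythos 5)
Your proposal is correct and follows essentially the same route as the paper's proof: stack $(\bar r, w)$ into $u\in\R^{K+1}$, show the frozen-target solution map is a contraction for small $\norm{X}$ (the paper bounds $\norm{\bar G(\theta)^{-1}}\norm{\bar H_2}$ rather than writing out the Jacobian, but this is the same estimate), choose $R_{B_1}$ large enough for boundedness, invoke Theorem~\ref{thm:target-net}, collapse the fixed-point system to $(\bar A+\eta I)w=\bar b$, and reuse the regularized-projection decomposition from Theorem~\ref{thm:op-td} for the error bounds. The one detail worth flagging is that the error-bound recursion additionally needs $\norm{X(X^\top D_\mu X+\eta I)^{-1}X^\top D_\mu P_\pi}\leq 1-\xi$, which with no discount factor forces a further smallness condition $\norm{X}^2\leq(1-\xi)\eta/\norm{D_\mu P_\pi}$ absorbed into $C_0$; the paper states this explicitly and you would encounter it when executing your final step.
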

We defer the proof to Section~\ref{sec:proof-diff-of-td}.
As the differential Bellman equation~\eqref{eq differential bellman} has infinitely many solutions for $\bar q$, 
all of which differ only by some constant offsets,
we focus on analyzing the quality of $Xw^*_\eta$ w.r.t. $\bar q_\pi^c$ in Theorem~\ref{thm:diff-op-td}.
The zero-centered feature assumption is also used in \citet{zhang2020average},
which can be easily fulfilled in practice by subtracting all features with the estimated mean.
In the on-policy case (i.e., $\mu = \pi$), we have $d_\mu^\top (P_\pi - I) = \tb{0}$,
indicating $\bar r^*_\eta = \bar r_\pi$,
i.e., the regularization on the value estimate does not pose any bias on the reward rate estimate.

As shown by \citet{zhang2020average},
if the update~\eqref{eq diff-TD update} converges,
it converges to
$w^*_0$, 
the TD fixed point for off-policy policy evaluation in the average-reward setting, 
which satisfies $\bar A w^*_0 + \bar b = 0$.
Theorem~\ref{thm:diff-op-td} shows that Algorithm~\ref{alg:differential-off-policy-td} converges to a regularized TD fixed point. 
Though \citet{zhang2020average} give a bound on $\norm{Xw^*_0 - \bar q_\pi^c}$,
their bound holds only if $\mu$ is sufficiently close to $\pi$.
By contrast,
our bound on $w^*_\eta$ in Theorem~\ref{thm:diff-op-td} holds for all $\mu$.

\section{Application to Off-Policy Control}
\label{sec application control}

\textbf{Discounted Setting}.
Introducing a target network and ridge regularization in~\eqref{eq Q update} yields $Q$-learning with a Target Network (Algorithm~\ref{alg:linear-q}),
where the behavior policy $\mu_\theta$ depends on $\theta$ through the action-value estimate $X\theta$ and can be any policy satisfying the following two assumptions. 
\begin{assumption}
\label{assu:control-ergodic}
Let $\mathcal{P}$ be the closure of $\qty{P_{\mu_\theta} \mid \theta \in \R^K}$.
For any $P \in \mathcal{P}$, the Markov chain evolving in $\mathcal{S} \times \mathcal{A}$ induced by $P$ is ergodic.
\end{assumption}
\begin{assumption}
\label{assu:control-parameterization}
$\mu_\theta(a|s)$ is Lipschitz continuous in $X_s \theta$,
where $X_s \in \R^{|\mathcal{A}| \times K}$ is the feature matrix for the state $s$, i.e., its $a$-th row is $x(s, a)^\top$.
\end{assumption}
Assumption~\ref{assu:control-ergodic} is standard.
When the behavior policy $\mu$ is fixed (independent of $\theta$),
the induced chain is usually assumed to be ergodic when analyzing the behavior of $Q$-learning (see, e.g., \citet{melo2008analysis,chen2019,cai2019neural}).
In Algorithm~\ref{alg:linear-q},
the behavior policy $\mu_\theta$ changes every step,
so it is natural to assume that any of those behavior policies induces an ergodic chain.
A similar assumption is also used by \citet{zou2019finite} in their analysis of on-policy linear SARSA.
Moreover, \citet{zou2019finite} assume not only the ergodicity but also the \emph{uniform} ergodicity of their sampling policies.
Similarly,
in Assumption~\ref{assu:control-ergodic},
we assume ergodicity for not only all the transition matrices,
but also their limits (c.f. the closure $\fP$).
A similar assumption is also used by \citet{marbach2001simulation} in their analysis of on-policy actor-critic methods.
Assumption~\ref{assu:control-parameterization} can be easily fulfilled,
e.g.,
by using a softmax policy w.r.t. $x(s, \cdot)^\top \theta$. 
\begin{algorithm}
\caption{$Q$-learning with a Target Network}
\label{alg:linear-q}
\begin{algorithmic}
\STATE
\textbf{INPUT:} $\eta > 0, R_{B_1} > R_{B_2} > 0$\\
\STATE Initialize $\theta_0 \in B_1$ and $S_0$
\STATE Sample $A_0 \sim \mu_{\theta_0}(\cdot | S_0)$
\FOR{$t = 0, 1, \dots$}
\STATE Execute $A_t$, get $R_{t+1}$ and $S_{t+1}$
\STATE Sample $A_{t+1} \sim \mu_{\theta_t}(\cdot | S_t)$ 
\STATE $\delta_t \doteq R_{t+1} + \gamma \max_{a'} x(S_{t+1}, a')^\top \theta_t - x_t^\top w_t$
\STATE $w_{t+1} \doteq w_t + \alpha_t \delta_t x_t - \alpha_t \eta w_t$ 
\STATE $\theta_{t+1} \doteq \Gamma_{B_1} \big( \theta_t + \beta_t (\Gamma_{B_2}(w_t) - \theta_{t}) \big)$
\ENDFOR
\end{algorithmic}
\end{algorithm}

\begin{theorem}
\label{thm:linear-q}
Under Assumptions~\ref{assu full rank X},~\ref{assu target net lr},~\ref{assu target net all lrs},~\ref{assu:control-ergodic}, \&~\ref{assu:control-parameterization},
for any $\xi \in (0, 1), R_{B_1} > R_{B_2} > R_{B_1} - \xi > 0$,
there exists a constant $C_0$ such that for all $\norm{X} < C_0$,
the iterate $\qty{w_t}$ generated by Algorithm~\ref{alg:linear-q} satisfies 
\begin{align}
\textstyle{ \lim_{t \to \infty} w_t = w^*_\eta} \qq{almost surely,}
\end{align}
where $w^*_\eta$ is the unique solution of 
\begin{align}
\label{eq linear q obj}
(A_{\pi_w, \mu_w}+\eta I) w - b_{\mu_w} = \tb{0}
\end{align}
inside $B_1$.
Here 
\begin{align}
A_{\pi_w, \mu_w} &\doteq X^\top D_{\mu_w} (I - \gamma P_{\pi_w}) X, \, b_{\mu_w} \doteq  X^\top D_{\mu_w} r,
\end{align}
and $\pi_w$ denotes the greedy policy w.r.t. $x(s, \cdot)^\top w$.
\end{theorem}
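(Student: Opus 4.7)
The plan is to cast Algorithm~\ref{alg:linear-q} into the two-timescale framework of Theorem~\ref{thm:target-net} by identifying the fast-timescale attractor $w^*(\theta)$ and then verifying Assumptions~\ref{assu target net main net}--\ref{assu target net contraction}. Using the identity $\E_{S'\sim p(\cdot|s,a)}[\max_{a'} x(S',a')^\top\theta] = (P_{\pi_\theta}X\theta)(s,a)$, the expected drift of the main-network increment for a fixed target $\theta$ is
\begin{align}
h(w,\theta) \doteq X^\top D_{\mu_\theta}\bigl(r + \gamma P_{\pi_\theta}X\theta - Xw\bigr) - \eta w,
\end{align}
whose unique root is
\begin{align}
w^*(\theta) = \bigl(X^\top D_{\mu_\theta}X + \eta I\bigr)^{-1} X^\top D_{\mu_\theta}\bigl(r + \gamma P_{\pi_\theta}X\theta\bigr).
\end{align}
A target fixed point $\theta^* = w^*(\theta^*)$ then rearranges exactly to \eqref{eq linear q obj} with $w=\theta^*$, so identifying $\lim_t\theta_t$ with $w^*_\eta$ is immediate once Theorem~\ref{thm:target-net} applies.

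First, I would verify Assumption~\ref{assu target net main net} via a standard two-timescale stochastic approximation argument. Under Assumption~\ref{assu target net all lrs} we have $\beta_t/\alpha_t\to 0$, so on the fast timescale $\theta_t$ is quasi-static. For fixed $\theta$ the ODE $\dot w = h(w,\theta)$ has the unique globally exponentially stable equilibrium $w^*(\theta)$ because $X^\top D_{\mu_\theta}X + \eta I \succeq \eta I$. Boundedness of $\qty{w_t}$ follows from the ridge term $-\alpha_t\eta w_t$ dominating the fixed-target drift (a Borkar--Meyn style argument), and the Markov noise induced by the $\theta$-dependent behavior policy $\mu_{\theta_t}$ is controlled via the Poisson-equation technique of \citet{benveniste2012adaptive}: Assumption~\ref{assu:control-ergodic} (ergodicity uniformly over the closure $\fP$) and Assumption~\ref{assu:control-parameterization} (Lipschitz continuity of $\mu_\theta$ in $X_s\theta$), combined with Lemma~\ref{lem target net changes slowly}, imply that the Poisson solution is Lipschitz in $\theta$ and that the cumulative perturbation from slowly drifting $\theta_t$ is negligible.

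Next, for Assumptions~\ref{assu target net boundedness}--\ref{assu target net contraction} I would perform a Lipschitz analysis of $w^*$ on $B_1$. The map $\theta \mapsto P_{\pi_\theta}X\theta$ is globally Lipschitz, because its $(s,a)$ coordinate is $\sum_{s'}p(s'|s,a)\max_{a'} x(s',a')^\top\theta$, a maximum of linear functions. By Assumption~\ref{assu:control-parameterization} together with Assumption~\ref{assu:control-ergodic}, standard stationary-distribution perturbation bounds give Lipschitz continuity of $d_{\mu_\theta}$ and hence of $D_{\mu_\theta}$ on $B_1$. Writing $M(\theta) \doteq X^\top D_{\mu_\theta}X + \eta I$ and $F(\theta) \doteq X^\top D_{\mu_\theta}(r+\gamma P_{\pi_\theta}X\theta)$, the decomposition
\begin{align}
w^*(\theta_1)-w^*(\theta_2) = M(\theta_1)^{-1}\bigl[F(\theta_1)-F(\theta_2)\bigr] + \bigl[M(\theta_1)^{-1}-M(\theta_2)^{-1}\bigr]F(\theta_2)
\end{align}
reduces the question to bounding factors that scale with $\|X\|$ or $\|X\|^2$, with $\eta I$ preventing $M(\theta)^{-1}$ from blowing up. Taking $\|X\| < C_0$ with $C_0$ small enough (depending on $\eta,\gamma,R_{B_1}$ and the Lipschitz constants from Assumption~\ref{assu:control-parameterization}) makes the combined Lipschitz constant of $w^*$ strictly less than $1$, verifying Assumption~\ref{assu target net contraction}, and also bounds $\sup_{\theta\in B_1}\|w^*(\theta)\|$ by a constant strictly below $R_{B_2}$, giving Assumption~\ref{assu target net boundedness}.

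Theorem~\ref{thm:target-net} then delivers $\lim_t w_t = \lim_t \theta_t = \theta^*$, and $\theta^*$ is the unique fixed point of $w^*$ inside $B_1$ by contractivity, which is the unique solution of~\eqref{eq linear q obj} inside $B_1$. The main obstacle I anticipate is the joint handling of the non-smooth greedy operator $\pi_w$ and the $\theta$-dependent invariant measure $d_{\mu_\theta}$: controlling Markov noise demands Lipschitz regularity of the Poisson solution in $\theta$, and producing a contraction modulus strictly below $1$ uniformly on $B_1$ requires driving $\|X\|$ below the threshold $C_0$, which is precisely why the feature-norm smallness assumption is unavoidable.
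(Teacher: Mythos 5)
Your proposal is correct and follows essentially the same route as the paper's proof: the same fast-timescale attractor $w^*(\theta)=(X^\top D_{\mu_\theta}X+\eta I)^{-1}X^\top D_{\mu_\theta}(r+\gamma P_{\pi_\theta}X\theta)$, the same verification of Assumption~\ref{assu target net main net} via a Poisson-equation/Markov-noise argument over the compact closure $\fP$ (the paper invokes Theorem 3.2 of \citet{konda2002thesis}), and the same product-and-inverse Lipschitz bookkeeping that forces contraction and boundedness by driving $\norm{X}$ below a threshold depending on $\eta$, $\gamma$, and $R_{B_1}$. The only structural difference is that the paper inserts $\Gamma_{B_1}(\theta)$ into the definition of $w^*$ so that Assumptions~\ref{assu target net boundedness} and~\ref{assu target net contraction} hold globally on $\R^K$, which then costs an extra Brouwer-fixed-point step to show that the fixed point of the projected map actually solves the unprojected equation~\eqref{eq linear q obj} inside $B_1$; your unprojected, on-$B_1$ formulation makes that identification immediate but leaves the minor technicality of applying Theorem~\ref{thm:target-net} with its hypotheses verified only on $B_1$ rather than on all of $\R^K$.
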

We defer the proof to Section~\ref{sec:proof-of-linear-q}.
Analogously to the policy evaluation setting,
if we call the solutions of $A_{\pi_w, \mu_w} w - b_{\mu_w} = \tb{0}$ TD fixed points for control in the discounted setting,
then Theorem~\ref{thm:linear-q} asserts that Algorithm~\ref{alg:linear-q} finds a regularized TD fixed point.

Algorithm~\ref{alg:linear-q} and Theorem~\ref{thm:linear-q} are significant in two aspects.
\emph{First},
in Algorithm~\ref{alg:linear-q},
the behavior policy is a function of the target network and thus changes every time step.
By contrast,
previous work on $Q$-learning with function approximation (e.g., \citet{melo2008analysis,maei2010toward,chen2019,cai2019neural,chen2019zap,lee2019unified,xu2020finite,carvalho2020new,wang2020finite}) usually assumes the behavior policy is fixed.
Though \citet{fan2020theoretical} also adopt a changing behavior policy,
they consider bi-level optimization.
At each time step, the nested optimization problem must be solved exactly,
which is computationally expensive and sometimes unfeasible.
To the best of our knowledge,
we are the first to analyze $Q$-learning with function approximation under a changing behavior policy and without nested optimization problems.
Compared with the fixed behavior policy setting or the bi-level optimization setting,
our two-timescale setting with a changing behavior policy is more closely related to actual practice (e.g., \citet{mnih2015human,lillicrap2015continuous}).

\emph{Second},
Theorem~\ref{thm:linear-q} does not enforce any similarity between $\mu_\theta$ and $\pi_w$;
they can be arbitrarily different.
By contrast,
previous work (e.g., \citet{melo2008analysis,chen2019,cai2019neural,xu2020finite,lee2019unified}) usually requires the strong assumption that the fixed behavior policy $\mu$ is sufficiently close to the target policy $\pi_w$. 
As the target policy (i.e., the greedy policy) can change every time step due to the changing action-value estimates,
this strong assumption rarely holds.
While some work removes this strong assumption, it introduces other problems instead.
In Greedy-GQ, \citet{maei2010toward} avoid this strong assumption by computing sub-gradients of an MSPBE objective
$\text{MSPBE}(w) \doteq \norm{A_{\pi_w, \mu} w - b_\mu}_{C_{\mu}^{-1}}^2$
directly,
where $C_\mu \doteq X^\top D_\mu X$.
If linear $Q$-learning~\eqref{eq Q update} under a fixed behavior policy $\mu$ converges,
it converges to the minimizer of MSPBE$(w)$.
Greedy-GQ, however,
converges only to a stationary point of MSPBE$(w)$.
By contrast,
Algorithm~\ref{alg:linear-q} converges to a minimizer of our regularized MSPBE (c.f.~\eqref{eq linear q obj}).
In Coupled $Q$-learning,
\citet{carvalho2020new} avoid this strong assumption by using a target network as well, which they update as
\begin{align}
\label{eq:cq-target}
\theta_{t+1} \gets \theta_t + \alpha_t ((x_t x_t^\top) w_t - \theta_t).
\end{align}
This target network update deviates much from the commonly used Polyak-averaging style update,
while our~\eqref{eq general target net update} is identical to the Polyak-averaging style update most times if the balls for projection are sufficiently large.  
Coupled $Q$-learning updates the main network $w$ as usual (see~\eqref{eq target-network q update main}).
With the Coupled $Q$-learning updates~\eqref{eq target-network q update main} and~\eqref{eq:cq-target},
\citet{carvalho2020new} prove that the main network and the target network converge to $\bar{w}$ and $\bar{\theta}$ respectively,
which satisfy
\begin{align}
X\bar{w} = X X^\top D_\mu \mathcal{T}_{\pi_{\bar{w}}} X\bar{w}, \quad X\bar{\theta} &= \Pi_{D_\mu} \mathcal{T}_{\pi_{\bar{w}}} X\bar{w}.
\end{align}
It is, however, not clear how $\bar{w}$ and $\bar{\theta}$ relate to TD fixed points. 
\citet{yang2019convergent} also use a target network to avoid this strong assumption.
Their target network update is the same as~\eqref{eq general target net update} except that they have only one projection $\Gamma_{B_1}$.
Consequently,
they face the problem of the reflection term $\zeta(t)$ (c.f.~\eqref{eq projected ode}).
They also assume the main network $\qty{w_t}$ is always bounded,
 a strong assumption that we do not require.
Moreover,
they consider a fixed sampling distribution for obtaining i.i.d. samples,
while our data collection is done by executing the changing behavior policy $\mu_\theta$ in the MDP. 

One limit of Theorem~\ref{thm:linear-q} is that the bound on $\norm{X}$ (i.e., $C_0$) depends on $1 / R_{B_1}$ (see the proof in Section~\ref{sec:proof-of-linear-q} for the analytical expression),
which means $C_0$ could potentially be small.
Though we can use a small $\eta$ accordingly to ensure that the regularization effect of $\eta$ is modest,
a small $C_0$ may not be desirable in some cases.
To address this issue,
we propose Gradient $Q$-learning with a Target Network,
inspired by Greedy-GQ.
We first equip MSPBE$(w)$ with a changing behavior policy $\mu_w$, yielding the following objective
$\norm{A_{\pi_w, \mu_w} w - b_{\mu_w}}_{C_{\mu_w}^{-1}}^2$.
We then use the target network $\theta$ in place of $w$ in the non-convex components, yielding
\begin{align}
\label{eq obj diff graident q}
L(w, \theta) \doteq \textstyle{\norm{A_{\pi_\theta, \mu_\theta} w - b_{\mu_\theta}}_{C_{\mu_\theta}^{-1}}^2 + \eta \norm{w}^2},
\end{align}
where we have also introduced a ridge term.
At time step $t$,
we update $w_t$ following the gradient $\nabla_w L(w, \theta_t)$ and update the target network $\theta_t$ as usual.
Details are provided in Algorithm~\ref{alg:gradient-q},
where the additional weight vector $u \in \R^K$ results from a weight duplication trick (see \citet{sutton2009convergent,sutton2009fast} for details) to address a double sampling issue in estimating $\nabla_w L(w, \theta)$.

\begin{algorithm}
\caption{Gradient $Q$-learning with a Target Network}
\label{alg:gradient-q}
\begin{algorithmic}
\STATE
\textbf{INPUT:} $\eta > 0, R_{B_1} > R_{B_2} > 0$\\
\STATE Initialize $\theta_0 \in B_1$ and $S_0$
\STATE Sample $A_0 \sim \mu_{\theta_0}(\cdot | S_0)$
\FOR{$t = 0, 1, \dots$}
\STATE Execute $A_t$, get $R_{t+1}$ and $S_{t+1}$
\STATE Sample $A_{t+1} \sim \mu_{\theta_t}(\cdot | S_t)$ 
\STATE $\bar x_{t+1} \doteq \sum_{a'} \pi_{\theta_t}(a' | S_{t+1}) x(S_{t+1}, a')$
\STATE $\delta_t \doteq R_{t+1} + \gamma \bar x_{t+1}^\top w_t - x_t^\top w_t$
\STATE $u_{t+1} \doteq u_t + \alpha_t (\delta_t - x_t^\top u_t) x_t$
\STATE $w_{t+1} \doteq w_t + \alpha_t (x_t - \gamma \bar x_{t+1}) x_t^\top u_t - \alpha_t \eta w_t$ 
\STATE $\theta_{t+1} \doteq \Gamma_{B_1} \big( \theta_t + \beta_t (\Gamma_{B_2}(w_t) - \theta_{t}) \big)$
\ENDFOR
\end{algorithmic}
\end{algorithm}

In Algorithm~\ref{alg:linear-q},
the target policy $\pi_w$ is a greedy policy,
which is not continuous in $w$.
This discontinuity is not a problem there but requires sub-gradients in the analysis of Algorithm~\ref{alg:gradient-q},
which complicates the presentation.
We, therefore, impose Assumption~\ref{assu:control-parameterization} on $\pi_w$ as well.
\begin{assumption}
\label{assu target policy continuous}
$\pi_\theta(a|s)$ is Lipschitz continuous in $X_s\theta$.
\end{assumption}
Though a greedy policy no longer satisfies Assumption~\ref{assu target policy continuous},
we can simply use a softmax policy with any temperature.

\begin{theorem}
\label{thm:gradient-q}
Under Assumptions~\ref{assu full rank X},~\ref{assu target net lr},~\ref{assu target net all lrs}, \&~\ref{assu:control-ergodic}-\ref{assu target policy continuous},
there exist positive constants $C_0$ and $C_1$ such that for all
$\norm{X} < C_0, R_{B_1} > R_{B_2} > C_1$,
the iterate $\qty{w_t}$ generated by Algorithm~\ref{alg:gradient-q} satisfies 
\begin{align}
\textstyle{\lim_{t \to \infty} w_t = w^*_\eta} \qq{almost surely,}
\end{align}
where $w^*_\eta$ is the unique solution of 
\begin{align}
(A_{\pi_w, \mu_w}^\top C_{\mu_w}^{-1} A_{\pi_w, \mu_w} + \eta I)w = A_{\pi_w, \mu_w}^\top C_{\mu_w}^{-1} b_{\mu_w}.
\end{align}
\end{theorem}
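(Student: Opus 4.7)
The plan is to apply Theorem~\ref{thm:target-net} with the pair $(u_t, w_t)$ playing the role of the ``main network'' on the fast timescale $\alpha_t$ and $\theta_t$ on the slow timescale $\beta_t$ (their separation holds by Assumption~\ref{assu target net all lrs}). After identifying the map $\theta \mapsto w^*(\theta)$ induced by the fast dynamics, I need to verify Assumptions~\ref{assu target net main net}--\ref{assu target net contraction}. The fixed point of $w^*(\cdot)$ will then coincide with the $w^*_\eta$ stated in the theorem, so Theorem~\ref{thm:target-net} delivers $w_t \to w^*_\eta$ almost surely.

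For a frozen $\theta$, computing the expected drifts of Algorithm~\ref{alg:gradient-q} under the stationary distribution $d_{\mu_\theta}$ of the ergodic chain induced by $\mu_\theta$ (well-defined by Assumption~\ref{assu:control-ergodic}) yields the linear fast ODE
\begin{align}
\dot u &= b_{\mu_\theta} - A_{\pi_\theta, \mu_\theta} w - C_{\mu_\theta} u, \\
\dot w &= A_{\pi_\theta, \mu_\theta}^\top u - \eta w.
\end{align}
The symmetric part of the system matrix is $\operatorname{diag}(-C_{\mu_\theta}, -\eta I)$, which is uniformly negative definite over $\theta$ since $C_{\mu_\theta}$ is positive definite (Assumptions~\ref{assu full rank X},~\ref{assu:control-ergodic}) and $\eta > 0$. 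The ODE is therefore globally exponentially stable with unique equilibrium; eliminating $u$ gives the closed form
\begin{equation}
w^*(\theta) = (A_{\pi_\theta, \mu_\theta}^\top C_{\mu_\theta}^{-1} A_{\pi_\theta, \mu_\theta} + \eta I)^{-1} A_{\pi_\theta, \mu_\theta}^\top C_{\mu_\theta}^{-1} b_{\mu_\theta},
\end{equation}
which is exactly the map whose fixed point is $w^*_\eta$.

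Assumption~\ref{assu target net main net} then follows from standard two-timescale stochastic approximation (e.g., Chapter~6 of \citet{borkar2009stochastic}): Lemma~\ref{lem target net changes slowly} makes $\theta_t$ quasi-static for the $(u,w)$ recursion, the uniform Hurwitz property above supplies a quadratic Lyapunov function that also gives a priori boundedness of $(u_t, w_t)$, and the Markov noise from the changing behavior policy is controlled by Assumption~\ref{assu:control-ergodic} (ergodicity on the closure $\fP$) together with the Lipschitz regularity of $\mu_\theta, \pi_\theta$ in Assumptions~\ref{assu:control-parameterization} and~\ref{assu target policy continuous}, which ensures continuity of $P_{\mu_\theta}$ and of its stationary distribution. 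Assumption~\ref{assu target net boundedness} is immediate from the closed form: $\norm{w^*(\theta)} \le \eta^{-1} \norm{A_{\pi_\theta, \mu_\theta}^\top C_{\mu_\theta}^{-1} b_{\mu_\theta}}$ is uniformly bounded in $\theta$ in terms of $\norm{X}$, $\sigma_{\min}(X)$, $\inf_\theta \sigma_{\min}(D_{\mu_\theta}) > 0$, and $\norm{r}$, so taking $C_1$ strictly above this uniform bound fixes the required radii.

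The principal obstacle is Assumption~\ref{assu target net contraction}, i.e., showing $w^*$ is a strict Euclidean contraction. Writing $w^*(\theta) = G(\theta)^{-1} h(\theta)$ with $G(\theta) \doteq A_{\pi_\theta, \mu_\theta}^\top C_{\mu_\theta}^{-1} A_{\pi_\theta, \mu_\theta} + \eta I$ and $h(\theta) \doteq A_{\pi_\theta, \mu_\theta}^\top C_{\mu_\theta}^{-1} b_{\mu_\theta}$, I would split
\begin{align}
\norm{w^*(\theta_1) - w^*(\theta_2)} &\le \norm{G(\theta_1)^{-1}} \, \norm{h(\theta_1) - h(\theta_2)} \\
&\quad + \norm{G(\theta_1)^{-1} - G(\theta_2)^{-1}} \, \norm{h(\theta_2)}.
\end{align}
Lipschitz continuity of $\theta \mapsto A_{\pi_\theta, \mu_\theta}, b_{\mu_\theta}, C_{\mu_\theta}$ follows from Lipschitz continuity of $\pi_\theta, \mu_\theta$ in $X_s\theta$ (Assumptions~\ref{assu:control-parameterization},~\ref{assu target policy continuous}) combined with standard perturbation bounds for stationary distributions of the uniformly ergodic chains indexed by $\fP$, with constants polynomial in $\norm{X}$. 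The resolvent satisfies $\norm{G(\theta)^{-1}} \le 1/\eta$ uniformly, and its Lipschitz constant is controlled by $\norm{G(\theta)^{-1}}^2$ times the Lipschitz constant of $G$. Assembling the pieces, the overall Lipschitz constant of $w^*$ takes the form of a function of $\norm{X}$, $1/\sigma_{\min}(X)$, $1/\eta$, and $R_{B_2}$ whose leading behavior in $\norm{X}$ is a strictly positive power; choosing $C_0$ small enough therefore makes this Lipschitz constant strictly less than one whenever $\norm{X} < C_0$, verifying Assumption~\ref{assu target net contraction}. Theorem~\ref{thm:target-net} then gives $w_t \to \theta^*$ almost surely, where $\theta^* = w^*(\theta^*) = w^*_\eta$.
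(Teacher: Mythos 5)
Your overall route is the same as the paper's: identify $w^*(\theta)$ as the equilibrium of the frozen-$\theta$ fast dynamics for the joint iterate $[u^\top, w^\top]^\top$, verify Assumptions~\ref{assu target net main net}--\ref{assu target net contraction}, and invoke Theorem~\ref{thm:target-net}. The drift computation, the observation that the symmetric part of the joint system matrix is $\mathrm{diag}(-C_{\mu_\theta}, -\eta I)$, and the resulting closed form for $w^*(\theta)$ all match the paper (which packages the tracking step as Lemma~\ref{lem:main-gradient-q} via a Konda-style two-timescale result for Markovian noise).

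There is, however, a gap in your contraction step. You assert that the Lipschitz constant of $w^*$ is a function of $\norm{X}$, $1/\sigma_{\min}(X)$, $1/\eta$, and $R_{B_2}$ ``whose leading behavior in $\norm{X}$ is a strictly positive power,'' so that shrinking $C_0$ forces it below one. This does not follow as stated: since $\sigma_{\min}(X) \leq \norm{X}$, ratios such as $\norm{X}/\sigma_{\min}(X)$ or $\norm{X}^2 \norm{C_{\mu_\theta}^{-1}}$ are invariant under rescaling $X \mapsto cX$ and do not vanish as $\norm{X} \to 0$, so ``small $\norm{X}$'' alone does not control them. The paper closes this with Lemma~\ref{lem w star gradient q lipschitz}: writing everything in terms of $\tilde X \doteq X/\norm{X}$, it shows the Lipschitz constant factors as $\norm{X}\, L_w$ with $L_w$ depending on $X$ \emph{only through} $\tilde X$, so exactly one non-invariant factor of $\norm{X}$ survives and the condition $\norm{X} < C_0 \doteq \min\qty{1, (1-\xi)/L_w}$ is achievable by scaling. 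Your argument needs this normalization (or an equivalent bookkeeping of which factors are scale-invariant) to be complete. Separately, your claimed dependence of the Lipschitz constant on $R_{B_2}$ is spurious and would actually be problematic: the theorem quantifies ``for all $R_{B_1} > R_{B_2} > C_1$'' after $C_0$ is fixed, so $C_0$ cannot depend on the radii. Here it does not, precisely because $\theta$ enters $w^*(\theta)$ only through the policies $\pi_\theta, \mu_\theta$ (there is no $P_{\pi_\theta} X \Gamma_{B_1}(\theta)$ bootstrap term as in Algorithm~\ref{alg:linear-q}); this independence is the main selling point of Theorem~\ref{thm:gradient-q} over Theorem~\ref{thm:linear-q}.
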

We defer the proof to Section~\ref{sec:proof-of-gradient-q}.
Importantly, the $C_0$ here does not depend on $R_{B_1}$ and $R_{B_2}$.
More importantly,
the condition on $\norm{X}$ (or equivalently, $\eta$) in Theorem~\ref{thm:gradient-q} is only used to fulfill Assumption~\ref{assu target net contraction},
without which $\qty{\theta_t}$ in Algorithm~\ref{alg:gradient-q} still converges to an invariant set of the ODE~\eqref{eq ode}.
This condition is to investigate where the iterate converges to instead of whether it converges or not.
If we assume $w^*_0 \doteq \lim_{\eta \to 0} w^*_\eta$ exists and $A_{\pi_{w^*_0}, \mu_{w^*_0}}$ is invertible,
we can see
$A_{\pi_{w^*_0}, \mu_{w^*_0}} w^*_0 - b_{\mu_{w^*_0}} = \tb{0}$,
indicating $w^*_0$ is a TD fixed point.
$w^*_\eta$ can therefore be regarded as a regularized TD fixed point,
though how the regularization is imposed here (c.f.~\eqref{eq obj diff graident q}) is different from that in Algorithm~\ref{alg:linear-q} (c.f.~\eqref{eq linear q obj}).

\textbf{Average-reward Setting}.
Similar to Algorithm~\ref{alg:differential-off-policy-td},
introducing a target network and ridge regularization in~\eqref{eq diff Q update}
yields Differential $Q$-learning with a Target Network (Algorithm~\ref{alg:diff-linear-q}).
Similar to Algorithm~\ref{alg:differential-off-policy-td}, $\qty{B_i}$ are now balls in $R^{K+1}$.

\begin{algorithm}
\caption{Diff. $Q$-learning with a Target Network}
\label{alg:diff-linear-q}
\begin{algorithmic}
\STATE
\textbf{INPUT:} $\eta > 0, R_{B_1} > R_{B_2} > 0$\\
\STATE Initialize $[\theta^r_0,  \theta^{w\top}_0]^\top \in B_1$ and $S_0$
\STATE Sample $A_0 \sim \mu(\cdot | S_0)$
\FOR{$t = 0, 1, \dots$}
\STATE Execute $A_t$, get $R_{t+1}$ and $S_{t+1}$
\STATE Sample $A_{t+1} \sim \mu_{\theta_t^w}(\cdot | S_{t+1})$ 
\STATE $\delta_t \doteq R_{t+1} - \theta^r_t + \max_{a'} x(S_{t+1}, a')^\top \theta^w_t - x_t^\top w_t$
\STATE $w_{t+1} \doteq w_t + \alpha_t \delta_t x_t - \alpha_t \eta w_t$
\STATE $\delta_t' \doteq R_{t+1} + \max_{a'} x(S_{t+1}, a')^\top \theta^w_t - x_t^\top \theta^w_t - \bar r_t$
\STATE $\bar r_{t+1} \doteq \bar r_t + \alpha_t \delta_t'$ 
\STATE $\mqty[\theta^r_{t+1} \\ \theta^w_{t+1}] \doteq \Gamma_{B_1} \big(\mqty[\theta^r_{t} \\ \theta^w_{t}] + \beta_t (\Gamma_{B_2}(\mqty[\bar r_t \\ w_t]) - \mqty[\theta^r_{t} \\ \theta^w_{t}]) \big)$
\ENDFOR
\end{algorithmic}
\end{algorithm}
\begin{figure*}[t]
\subfloat[]{\includegraphics[width=0.25\linewidth]{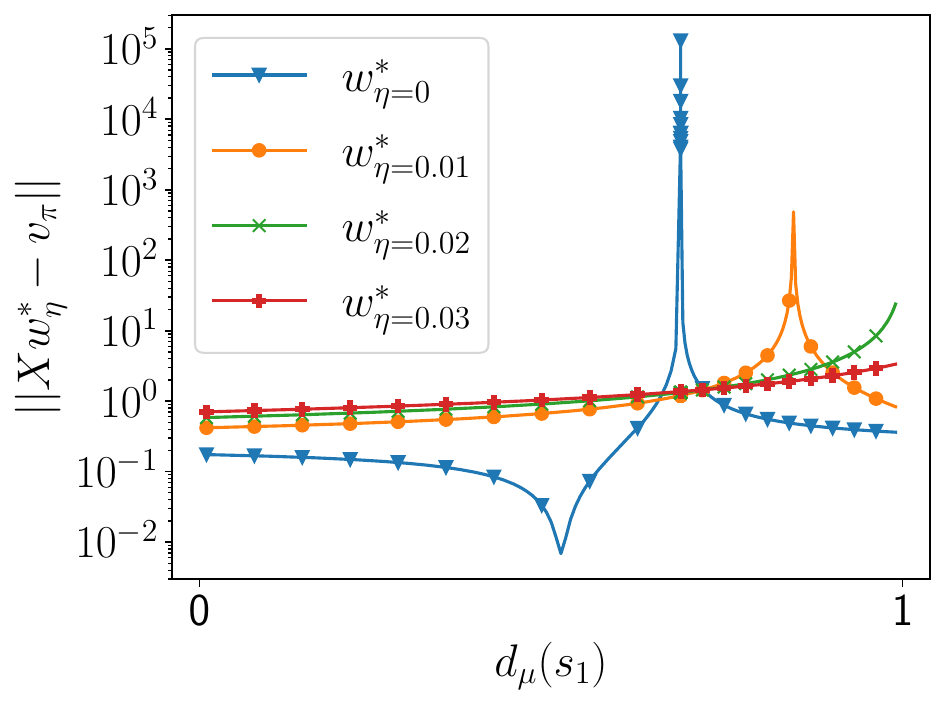}}
\subfloat[]{\includegraphics[width=0.25\linewidth]{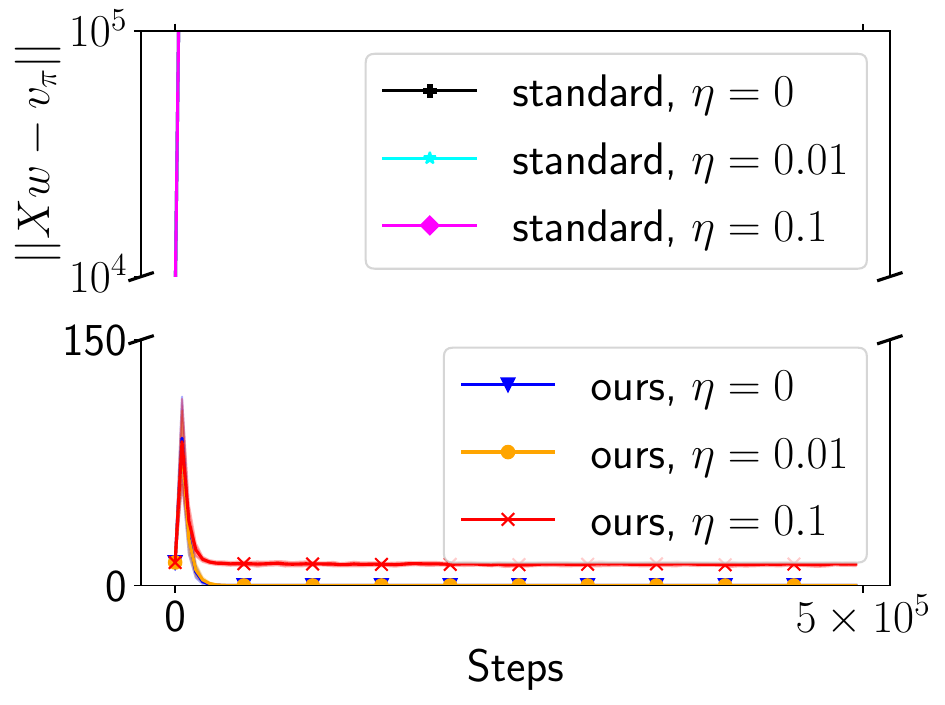}}
\subfloat[]{\includegraphics[width=0.25\linewidth]{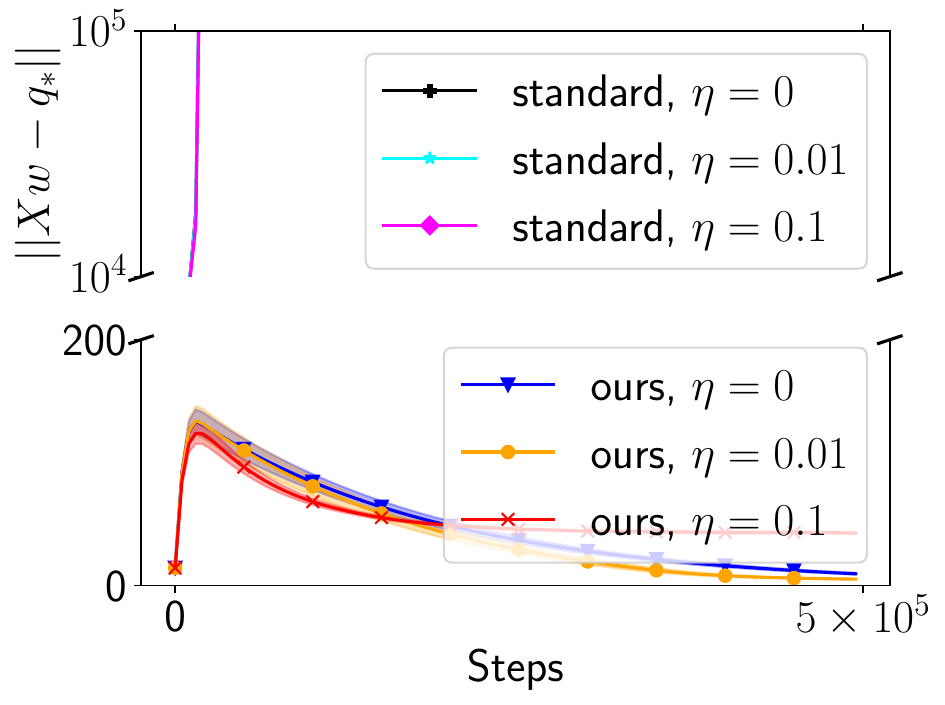}}
\subfloat[]{\includegraphics[width=0.25\linewidth]{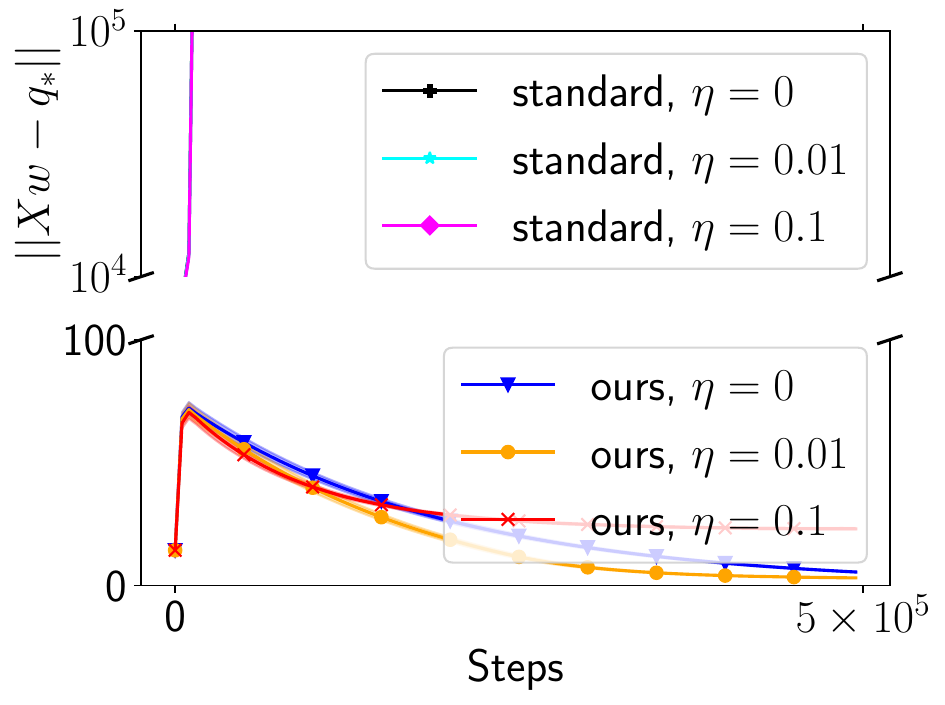}}
\caption{
\label{fig:expts}
(a) Effect of regularization on Kolter's example. 
$v_\pi$ is the true state-value function.
(b) Policy evaluation on Baird's example. 
(c) Control on Baird's example with a fixed behavior policy. 
(d) Control on Baird's example with an action-value-dependent behavior policy.
In (b)(c)(d),
the curves are averaged over 30 independent runs with shaded region indicating one standard deviation.
$q_*$ is the optimal action-value function.
$\eta$ is the weight for the ridge term.
Those marked ``ours'' are curves of algorithms we propose; those marked ``standard'' are standard semi-gradient off-policy algorithms.
Interestingly, 
the three ``standard'' curves overlap and get unbounded quickly.
}
\end{figure*}

\begin{theorem}
\label{thm:diff-linear-q}
Under Assumptions~\ref{assu full rank X},~\ref{assu target net lr},~\ref{assu target net all lrs},~\ref{assu:control-ergodic}, \&~\ref{assu:control-parameterization},
let $L_\mu$ denote the Lipschitz constant of $\mu_\theta$,
for any $\xi \in (0, 1), R_{B_1} > R_{B_2} > R_{B_1} - \xi > 0$,
there exist constants $C_0$ and $C_1$ such that for all 
$\norm{X} < C_0, L_\mu < C_1$,
the iterate $\qty{w_t}$ generated by Algorithm~\ref{alg:diff-linear-q} satisfies 
\begin{align}
\textstyle{ \lim_{t \to \infty} w_t = w^*_\eta} \qq{almost surely,}
\end{align}
where $w^*_\eta$ is the unique solution of 
\\$(\bar A_{\pi_w, \mu_w}+\eta I) w - \bar b_{\mu_w} = \tb{0}$
inside $B_1$,
where
\begin{align}
\bar A_{\pi_w, \mu_w} &\doteq X(D_{\mu_w} - d_{\mu_w} d_{\mu_w}^\top) (I - P_{\pi_w}) X, \\
\bar b_{\mu_w} &\doteq X^\top (D_{\mu_w} - d_{\mu_w} d_{\mu_w}^\top)r,
\end{align}
and $\pi_w$ is a greedy policy w.r.t. $x(s, \cdot)^\top w$.
\end{theorem}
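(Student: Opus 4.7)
The plan is to stack the fast iterates as $z_t \doteq [\bar r_t; w_t]\in\R^{K+1}$ and the target iterates as $\theta_t \doteq [\theta^r_t;\theta^w_t]\in\R^{K+1}$, so that Algorithm~\ref{alg:diff-linear-q} fits exactly into the two-timescale template of Section~\ref{sec target network}, and then to invoke Theorem~\ref{thm:target-net} after verifying Assumptions~\ref{assu target net main net}--\ref{assu target net contraction}. Along the way, the fixed point of the induced map $z^*(\cdot)$ must be identified with the regularized TD fixed point of the statement.

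For Assumption~\ref{assu target net main net}, I would freeze $\theta$ and observe that, with $\theta$ constant, the two fast updates decouple. The $\bar r$-iterate is a scalar linear stochastic approximation with expected drift $d_{\mu_{\theta^w}}^\top(r+P_{\pi_{\theta^w}}X\theta^w-X\theta^w)-\bar r$, whose unique zero is $\bar r^*(\theta)$. The $w$-iterate has expected drift $X^\top D_{\mu_{\theta^w}}(r-\theta^r\tb{1}+P_{\pi_{\theta^w}}X\theta^w)-(X^\top D_{\mu_{\theta^w}}X+\eta I)w$, and the ridge term makes the drift matrix negative definite, so $w_t$ tracks $w^*(\theta)=(X^\top D_{\mu_{\theta^w}}X+\eta I)^{-1}X^\top D_{\mu_{\theta^w}}(r-\theta^r\tb{1}+P_{\pi_{\theta^w}}X\theta^w)$. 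Assumption~\ref{assu:control-ergodic} supplies uniform ergodicity of the chain driven by the frozen $\mu_{\theta^w}$, and Assumption~\ref{assu target net all lrs} supplies the required timescale separation, so standard two-timescale Markovian stochastic approximation arguments (in the spirit of the proofs of Theorems~\ref{thm:op-td}--\ref{thm:linear-q}, combined with Lemma~\ref{lem target net changes slowly}) yield $\|z_t-z^*(\theta_t)\|\to 0$ almost surely.

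The main obstacle is verifying the contraction of $z^*$ (Assumption~\ref{assu target net contraction}); boundedness (Assumption~\ref{assu target net boundedness}), which fixes $C_1$, is a direct computation from the formulas for $w^*(\theta)$ and $\bar r^*(\theta)$ together with $\theta\in B_1$. For the contraction, I would decompose $w^*(\theta_1)-w^*(\theta_2)$ according to (i) the change of $\mu_{\theta^w}$ and hence of $D_{\mu_{\theta^w}}$ and $d_{\mu_{\theta^w}}$, (ii) the linear change in $\theta^r$, and (iii) the change of the Bellman-like term $P_{\pi_{\theta^w}}X\theta^w$. Piece (iii) is the delicate one since $\pi_{\theta^w}$ is greedy and not continuous; however, $(P_{\pi_{\theta^w}}X\theta^w)(s,a)=\sum_{s'}p(s'|s,a)\max_{a'}x(s',a')^\top\theta^w$ is a maximum of linear functions, hence globally Lipschitz in $\theta^w$ with constant $\mathcal{O}(\|X\|)$. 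Piece (i) contributes a factor $\mathcal{O}(L_\mu\|X\|)$ via Assumption~\ref{assu:control-parameterization} and the smoothness of the stationary distribution of an ergodic chain in its kernel, and piece (ii) contributes $\mathcal{O}(\|X\|)$. The ridge adds the factor $1/\eta$ through $(X^\top D_{\mu_{\theta^w}}X+\eta I)^{-1}$, and an analogous bound applies to $\bar r^*$. Assembling these gives a joint Lipschitz constant of order $\|X\|^2/\eta+L_\mu\|X\|/\eta$, which I drive below $1$ by demanding $\|X\|<C_0$ and $L_\mu<C_1$ for explicit constants $C_0,C_1$ read off from the bound.

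Once the four assumptions of Theorem~\ref{thm:target-net} are in hand, we conclude $z_t,\theta_t\to z^*=(\bar r^*,w^*_\eta)$ almost surely. To identify $w^*_\eta$, substitute $\theta^r=\bar r^*$ and $\theta^w=w^*_\eta$ into the defining equation for $w^*$ and use $D_{\mu_w}\tb{1}=d_{\mu_w}$ to absorb the $\bar r^*$-dependent term into a rank-one correction, yielding
\begin{align}
X^\top(D_{\mu_w}-d_{\mu_w}d_{\mu_w}^\top)(I-P_{\pi_w})Xw^*_\eta+\eta w^*_\eta=X^\top(D_{\mu_w}-d_{\mu_w}d_{\mu_w}^\top)r,
\end{align}
which is $(\bar A_{\pi_w,\mu_w}+\eta I)w^*_\eta-\bar b_{\mu_w}=\tb{0}$ as claimed; uniqueness inside $B_1$ is inherited from the contraction of $z^*$. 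The main grind will be the Lipschitz estimate for the stationary distribution $d_{\mu_{\theta^w}}$ and the inversion of the bound into clean closed-form expressions for $C_0$ and $C_1$.
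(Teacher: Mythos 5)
Your proposal is correct and follows essentially the same route as the paper: stack $[\bar r; w]$ and $[\theta^r;\theta^w]$, verify Assumptions~\ref{assu target net main net}--\ref{assu target net contraction} (tracking via Markovian stochastic approximation using Lemma~\ref{lem target net changes slowly} and Assumption~\ref{assu target net all lrs}; contraction via Lipschitz estimates on $d_{\mu_\theta}$ and on the piecewise-linear term $P_{\pi_{\theta^w}}X\theta^w$, with constant of order $\max\{1,\eta^{-1}\}(\fO(\norm{X})+\fO(L_\mu))+\max\{1,\eta^{-2}\}\fO(\norm{X})$ driven below $1-\xi$), invoke Theorem~\ref{thm:target-net}, and eliminate $\bar r^*$ via $D_{\mu_w}\tb{1}=d_{\mu_w}$ to obtain $(\bar A_{\pi_w,\mu_w}+\eta I)w^*_\eta=\bar b_{\mu_w}$. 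The only cosmetic difference is that the paper identifies the limit through a Brouwer fixed-point argument on the unprojected map, whereas you substitute directly into the fixed-point equation; both are valid since the limit lies in $B_2\subset B_1$ where the projection is the identity.
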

We defer the proof to Section~\ref{sec:proof-diff-linear-q}.
Theorem~\ref{thm:diff-linear-q} requires $\mu_\theta$ to be sufficiently smooth,
which is a standard assumption even in the on-policy setting (e.g., \citet{melo2008analysis,zou2019finite}).
It is easy to see that if~\eqref{eq diff Q update} converges,
it converges to a solution of
$\bar A_{\pi_w, \mu_w} w - \bar b_{\mu_w} = \tb{0}$,
which we call a TD fixed point for control in the average-reward setting.
Theorem~\ref{thm:diff-linear-q}, which shows that Algorithm~\ref{alg:diff-linear-q} finds a regularized TD fixed point, is to the best of our knowledge the first theoretical study for linear $Q$-learning in the average-reward setting.

\section{Experiments}
\label{sec expts}
All the implementations are publicly available.~\footnote{\url{https://github.com/ShangtongZhang/DeepRL}}

We first use Kolter's example \citep{kolter2011fixed} to investigate how $\eta$ influences the performance of $w^*_\eta$ in the policy evaluation setting.
Details are provided in Section~\ref{sec kolter}.
This example is a two-state MDP with small representation error (i.e., $\norm{\Pi_{D_\mu} v_\pi - v_\pi}$ is small).
We vary the sampling probability of one state ($d_\mu(s_1)$) and compute corresponding $w^*_\eta$ analytically.
Figure~\ref{fig:expts}a shows that with $\eta = 0$,
the performance of $w^*_\eta$ becomes arbitrarily poor when $d_\mu(s_1)$ approaches around 0.71.
With $\eta = 0.01$, 
the spike exists as well.
If we further increase $\eta$ to $0.02$ and $0.03$,
the performance for $w^*_\eta$ becomes well bounded.
This confirms the potential advantage of the regularized TD fixed points.

We then use Baird's example \citep{baird1995residual} to empirically investigate the convergence of the algorithms we propose.
We use exactly the same setup as Chapter 11.2 of \citet{sutton2018reinforcement}.
Details are provided in Section~\ref{sec baird}.
In particular, we consider three settings:
policy evaluation (Figure~\ref{fig:expts}b),
control with a fixed behavior policy (Figure~\ref{fig:expts}c),
and control with an action-value dependent behavior policy (Figure~\ref{fig:expts}d).
For the policy evaluation setting,
we compare a TD version of Algorithm~\ref{alg:off-policy-td} and standard Off-Policy Linear TD (possibly with ridge regularization).
For the two control settings,
we compare Algorithm~\ref{alg:linear-q} with standard linear $Q$-learning (possibly with ridge regularization).
We use constant learning rates and do {not} use any projection in all the compared algorithms.
The exact update rules are provided in Section~\ref{sec baird}.
Interestingly,
Figures~\ref{fig:expts}b-d show that even with $\eta = 0$, i.e., no ridge regularization,
our algorithms with target network still converge in the tested domains.
By contrast,
without a target network,
even when mild regularization is imposed,
standard off-policy algorithms still diverge.
This confirms the importance of the target network.

\section{Discussion and Related Work}

For all the algorithms we propose,
both the target network and the ridge regularization are at play.
One may wonder if it is possible to ensure convergence with only ridge regularization without the target network. 
In the policy evaluation setting,
the answer is affirmative.
Applying ridge regularization in~\eqref{eq TD update} directly yields
\begin{align}
\label{eq td with ridge}
w_{t+1} \gets w_t + \alpha_t \delta_t x_t - \alpha_t \eta w_t,
\end{align}
where $\delta_t$ is defined in~\eqref{eq TD update}.
The expected update of~\eqref{eq td with ridge} is  
\begin{align}
\Delta_w &\doteq b - (A + \eta I)w \\
&\doteq b - X^\top D_\mu Xw + \gamma X^\top D_\mu (P_\pi Xw) - \eta w.
\end{align}
If its Jacobian w.r.t. $w$, denoted as $J_w(\Delta_w)$,
is negative definite,
the convergence of $\qty{w_t}$ is expected (see, e.g., Section 5.5 of \citet{vidyasagar2002nonlinear}).
This negative definiteness can be easily achieved by ensuring 
$\eta > \norm{X}^2 \norm{D_\mu(I-\gamma P_\pi)}$ (see \citet{diddigi2019convergent} for similar techniques).
This direct ridge regularization, however,
would not work in the control setting.
Consider, for example, linear $Q$-learning with ridge regularization (i.e.,~\eqref{eq td with ridge} with $\delta_t$ defined in~\eqref{eq Q update}).
The Jacobian of its expected update is $J_w(b_{\mu_w} - (A_{\pi_w, \mu_w} + \eta I) w)$.
It is, however, not clear how to ensure this Jacobian is negative definite by tuning $\eta$.
By using a target network for bootstrapping, 
$P_\pi Xw$ becomes $P_\pi X\theta$.
So
$J_w(\Delta_w)$ becomes $-J_w(X^\top D_\mu Xw + \eta w)$,
which is always negative definite.
Similarly,
$J_w(b_{\mu_w} - (A_{\pi_w, \mu_w} + \eta ) w)$ becomes $-J_w(X^\top D_{\mu_\theta} X w + \eta w)$ in Algorithm~\ref{alg:linear-q},
which is always negative definite regardless of $\theta$.
The convergence of the main network $\qty{w_t}$ can, therefore, be expected. 
The convergence of the target network $\qty{\theta_t}$ is then delegated to Theorem~\ref{thm:target-net}.
Now it is clear that in the deadly triad setting,
the target network stabilizes training by ensuring the Jacobian of the expected update is negative definite.
One may also wonder if it is possible to ensure convergence with only the target network without ridge regularization.
The answer is unclear.
In our analysis,
the conditions on $\norm{X}$ (or equivalently, $\eta$) are only sufficient and not necessarily necessary.
We do see in Figure~\ref{fig:expts} that even with $\eta = 0$,
our algorithms still converge in the tested domains.
How small $\eta$ can be in general and under what circumstances $\eta$ can be 0 are still open problems,
which we leave for future work.
Further, ridge regularization usually affects the convergence rate of the algorithm,
which we also leave for future work.

In this paper,
we investigate target network as one possible solution for the deadly triad.
Other solutions include 
Gradient TD methods (\citet{sutton2009convergent,sutton2009fast,sutton2016emphatic} for the discounted setting; \citet{zhang2020average} for the average-reward setting) and Emphatic TD methods (\citet{sutton2016emphatic} for the discounted setting).
Other convergence results of $Q$-learning with function approximation include \citet{tsitsiklis1996feature,szepesvari2004interpolation},
which require special approximation architectures,
\citet{wen2013efficient,du2020agnostic},
which consider deterministic MDPs,
\citet{,li2011knows,du2019provably},
which require a special oracle to guide exploration,
\citet{chen2019zap},
which require matrix inversion every time step,
and \citet{wang2019optimism,yang2019sample,yang2020reinforcement,jin2020provably},
which consider linear MDPs (i.e., both $p$ and $r$ are assumed to be linear).
\citet{achiam2019towards} characterize the divergence of $Q$-learning with nonlinear function approximation via Taylor expansions and use preconditioning to empirically stabilize training.
\citet{van2018deep} empirically study the role of a target network in the deadly triad setting in deep RL,
which is complementary to our theoretical analysis.

Regularization is also widely used in RL.
\citet{yu2017convergence} introduce a general regularization term to improve the robustness of Gradient TD algorithms.
\citet{du2017stochastic} use ridge regularization in MSPBE to improve its convexity.
\citet{zhang2019provably} use ridge regularization to stabilize the training of critic in an off-policy actor-critic algorithm.
\citet{kolter2009regularization,johns2010linear,petrik2010feature,painter2012l1,liu2012regularized} use Lasso regularization in policy evaluation,
mainly for feature selection.

\section{Conclusion}
In this paper,
we proposed and analyzed a novel target network update rule,
with which we improved several linear RL algorithms that are known to diverge previously
due to the deadly triad.
Our analysis provided a theoretical understanding,
in the deadly triad setting,
of the conventional wisdom that a target network stabilizes training.
A possibility for future work is to introduce nonlinear function approximation,
possibly over-parameterized neural networks,
into our analysis.

\section*{Acknowledgments}
The authors thank Handong Lim for an insightful discussion.
SZ is generously funded by the Engineering and Physical Sciences Research Council (EPSRC).
SZ was also partly supported by DeepDrive. Inc from September to December 2020 during an internship. This project has received funding from the European Research Council under the European Union's Horizon 2020 research and innovation programme (grant agreement number 637713). The experiments were made possible by a generous equipment grant from NVIDIA. 

\bibliography{ref.bib}
\bibliographystyle{icml2021}

\onecolumn
\newpage
\appendix

\section{Convergence of Target Networks}
We first state a result from \citet{borkar2009stochastic} regarding the convergence of a linear system. 
Consider updating the parameter $y \in \R^K$ recursively as
\begin{align}
\label{eq:borkar}
y_{t+1} \doteq y_t + \beta_t (h(y_t) + \epsilon_t),
\end{align} 
where $h: \R^K \to \R^K$ and $\qty{\epsilon_t}$ is a deterministic or random bounded sequence satisfying $\lim_{t \to \infty} \norm{\epsilon_t} = 0$. 
Assuming
\begin{assumption}
\label{assu:borkar-lipschitz}
$h$ is Lipschitz continuous.
\end{assumption}
\begin{assumption}
\label{assu:borkar-step}
The learning rates $\qty{\beta_t}$ satisfies $\sum_t \beta_t = \infty, \sum_t \beta_t^2 < \infty$.
\end{assumption}
\begin{assumption}
\label{assu:borkar-boundedness}
$\sup_t \norm{y_t} < \infty$ almost surely.
\end{assumption}
\begin{theorem}
\label{thm:borkar}
(The third extension of Theorem 2 in Chapter 2 of \citet{borkar2009stochastic}) Under Assumptions~\ref{assu:borkar-lipschitz}-~\ref{assu:borkar-boundedness},
almost surely,
the sequence $\qty{y_t}$ generated by~\eqref{eq:borkar} converges to a compact connected internally chain transitive invariant set of the ODE
\begin{align}
\dv{t} y(t) = h(y(t)).
\end{align}
\end{theorem}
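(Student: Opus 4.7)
The plan is to follow the classical ODE method: build a continuous-time interpolation of $\{y_t\}$, show it is an \emph{asymptotic pseudotrajectory} of the semiflow generated by $\dot y = h(y)$, and then invoke Benaim's characterization to conclude that its $\omega$-limit set is a compact connected internally chain transitive invariant set of that flow.

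First I would introduce the timescale $t_n \doteq \sum_{k=0}^{n-1}\beta_k$, which by Assumption~\ref{assu:borkar-step} satisfies $t_n\uparrow\infty$, and define the piecewise-linear interpolation $\bar y:[0,\infty)\to\mathbb{R}^K$ with $\bar y(t_n)=y_n$ and linear on each $[t_n,t_{n+1}]$. For each $s\geq 0$ let $y^s(\cdot)$ denote the unique solution on $[s,\infty)$ of $\dot y = h(y)$ with $y^s(s)=\bar y(s)$; existence and uniqueness come from Assumption~\ref{assu:borkar-lipschitz}. By Assumption~\ref{assu:borkar-boundedness}, $\bar y$ almost surely takes values in a compact set $K_0$ on which $h$ is bounded and Lipschitz with some constant $L$.

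The central technical step is the \emph{tracking lemma}: for every $T>0$,
\begin{align}
\lim_{s\to\infty}\sup_{t\in[s,s+T]}\|\bar y(t)-y^s(t)\|=0 \quad \text{a.s.}
\end{align}
To prove this, telescope the recursion~\eqref{eq:borkar} to obtain, for $n\leq m$,
\begin{align}
y_m-y_n=\sum_{k=n}^{m-1}\beta_k\, h(y_k)+\sum_{k=n}^{m-1}\beta_k\,\epsilon_k.
\end{align}
Recognize the first sum as a step-function approximation of $\int_{t_n}^{t_m} h(\bar y(r))\,dr$; the approximation error is controlled by $L$, by $\sup_k\beta_k\to 0$, and by $\sum_k\beta_k^2<\infty$. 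The noise sum is bounded by $(t_m-t_n)\sup_{k\geq n}\|\epsilon_k\|\leq T\sup_{k\geq n}\|\epsilon_k\|$, which vanishes as $n\to\infty$ by hypothesis $\epsilon_t\to 0$. Combining with the integral form $y^s(t)-y^s(s)=\int_s^t h(y^s(r))\,dr$ yields
\begin{align}
\|\bar y(t)-y^s(t)\|\leq \rho(s)+L\int_s^t \|\bar y(r)-y^s(r)\|\,dr,
\end{align}
with an error function $\rho(s)\to 0$; Gronwall's inequality then gives $\sup_{t\in[s,s+T]}\|\bar y(t)-y^s(t)\|\leq \rho(s)e^{LT}$, as required.

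Finally, having shown that $\bar y$ is an asymptotic pseudotrajectory of the flow of $\dot y=h(y)$ and lives eventually in a compact set, I would invoke Benaim's theorem (see Benaim 1999, also Lemma 1 and Theorem 2 in Chapter 2 of \citet{borkar2009stochastic}): the $\omega$-limit set of a bounded asymptotic pseudotrajectory is a non-empty compact connected internally chain transitive invariant set of the flow. Applied to $\bar y$, this is exactly the claim. The main obstacle is a clean accounting of the three error contributions in the tracking lemma—the discretization error in approximating $\sum_k \beta_k h(y_k)$ by $\int h(\bar y(r))\,dr$, the residual $\sum_k \beta_k\epsilon_k$ (the only place where the hypothesis $\epsilon_t\to 0$ enters, replacing the usual martingale-difference argument), and the $O(\beta_n)$ gap between $\bar y$ and the piecewise-constant version of $y_n$ on each subinterval—so that $\rho(s)$ can be made uniformly small on every window $[s,s+T]$ and Gronwall closes; the ICT characterization is then a standard invocation.
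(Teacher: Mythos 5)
Your proposal is correct and follows essentially the same route as the source the paper cites for this result: the paper gives no proof of its own but defers to \citet{borkar2009stochastic}, whose argument is exactly your piecewise-linear interpolation, Gronwall-based tracking lemma (with the extra error sum $\sum_k \beta_k \epsilon_k$ bounded by $T\sup_{k\ge n}\norm{\epsilon_k}\to 0$, which is what makes it the ``third extension''), and the Bena\"im characterization of limit sets of bounded asymptotic pseudotrajectories as internally chain transitive invariant sets. No gaps.
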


\subsection{Proof of Theorem~\ref{thm:target-net}}
\label{sec convergence proof general target net}
\begin{proof}
Similar to Chapter 5.4 of \citet{borkar2009stochastic},
we consider $\dot{\Gamma}_{B_1}$, 
the directional derivative of $\Gamma_{B_1}$.
At a point $x \in \R^K$, given a direction $y \in R^K$,
we have
\begin{align}
\dot{\Gamma}_{B_1}(x, y) &\doteq \lim_{\delta \to 0} \frac{\Gamma_{B_1}(x + \delta y) - \Gamma_{B_1}(x)}{\delta} \\
& = \begin{cases}
y, & x \in int(B_1) \\
y, & x \in \partial B_1, y \in F_x(B_1) \\
-\frac{xx^\top y}{\norm{x}^3} + \frac{y}{\norm{x}}, &\text{otherwise}
\end{cases}
\end{align}
where $int(B_1)$ is the interior of $B_1$,
$\partial B_1$ is the boundary of $B_1$,
$F_x(B_1) \doteq \qty{y \in \R^K \mid \exists \delta > 0,\qq{s.t.}x + \delta y \in B_1}$ is the feasible directions of $B_1$ w.r.t. $x$.
The first two cases are trivial and are easy to deal with.
The third case is complicated and is the source of the reflection term $\zeta(t)$ in~\eqref{eq projected ode}.
However,
thanks to the projection $\Gamma_{B_2}$,
we succeeded in getting rid of it.

By~\eqref{eq general target net update}, $\theta_t \in B_1$ always holds.
With the directional derivative, 
we can rewrite the update rule of $\qty{\theta_t}$ as
\begin{align}
\theta_{t+1} &= \Gamma_{B_1} \big( \theta_t + \beta_t (\Gamma_{B_2}(w_t) - \theta_{t}) \big) \\
& = \theta_t + \beta_t \frac{\Gamma_{B_1} \big( \theta_t + \beta_t (\Gamma_{B_2}(w_t) - \theta_{t}) \big) - \Gamma_{B_1}(\theta_t)}{\beta_t} \\
& = \theta_t + \beta_t \big( \dot{\Gamma}_{B_1}(\theta_t, \Gamma_{B_2}(w_t) - \theta_{t}) + o(\beta_t) \big) \qq{(Definition of limit)}.
\end{align}
We now compute $\dot{\Gamma}_{B_1}(\theta_t, \Gamma_{B_2}(w_t) - \theta_{t})$.
We proceed by showing that only the first two cases in $\dot \Gamma_{B_1}(x, y)$ can happen and the third case will never occur.

For $\theta_t \in \text{int}(B_1)$,
we have 
\begin{align}
\label{eq:app-target-net-case1}
\dot{\Gamma}_{B_1}(\theta_t, \Gamma_{B_2}(w_t) - \theta_{t}) = \Gamma_{B_2}(w_t) - \theta_{t}.
\end{align} 
For $\theta_t \in \partial B_1$,
\begin{align}
\label{eq:app-target-net-inner-product}
\braket{\theta_t}{\Gamma_{B_2}(w_t) - \theta_t} &= \braket{\theta_t}{\Gamma_{B_2}(w_t)} - R_{B_1}^2 \leq R_{B_1} R_{B_2} - R_{B_1}^2 < 0.
\end{align}
Let $y_0 \doteq \Gamma_{B_2}(w_t) - \theta_t$,
\eqref{eq:app-target-net-inner-product} implies that we can decompose $y_0$ as $y_0 = y_1 + y_2$,
where $\braket{\theta_t}{y_1} = 0$ and $\braket{\theta_t}{y_2} = -\norm{\theta_t}\norm{y_2}$.
Here $y_2$ is the projection of $y_0$ onto $\theta_t$, which is in the opposite direction of $\theta_t$ and
$y_1$ is the remaining orthogonal component. 
By Pythagoras's theorem, for any $\delta > 0$,
\begin{align}
\norm{\theta_t + \delta y_0}^2 &= \norm{\delta y_1}^2 + \norm{\theta_t + \delta y_2}^2 \\
&= \delta^2 \norm{y_1}^2 + \norm{\theta_t}^2 - 2\delta \norm{\theta_t}\norm{y_2} + \delta^2 \norm{y_2}^2.
\end{align}
For sufficiently small $\delta$, e.g., $\delta^2 \norm{y_1}^2 - 2\delta \norm{\theta_t}\norm{y_2} + \delta^2 \norm{y_2}^2 < 0$,
we have 
\begin{align}
\norm{\theta_t + \delta y_0}^2 < \norm{\theta_t}^2 = R_{B_1}^2,
\end{align}

implying $\Gamma_{B_2}(w_t) - \theta_t \in F_{\theta_t}(B_1)$.
So we have
\begin{align}
\label{eq:app-target-net-case2}
\dot{\Gamma}_{B_1}(\theta_t, \Gamma_{B_2}(w_t) - \theta_{t}) = \Gamma_{B_2}(w_t) - \theta_{t}.
\end{align} 
Combining~\eqref{eq:app-target-net-case1} and~\eqref{eq:app-target-net-case2} yields
\begin{align}
\theta_{t+1} &= \theta_t + \beta_t \big( \Gamma_{B_2}(w_t) - \theta_{t} + o(\beta_t) \big) \\
&=\theta_t + \beta_t \big( \Gamma_{B_2}(w^*(\theta_t)) - \theta_{t} + o(1) + o(\beta_t) \big) 
\qq{(Assumption~\ref{assu target net main net} and the continuity of $\Gamma_{B_2}$)} \\
&=\theta_t + \beta_t \big(w^*(\theta_t) - \theta_{t} + o(1) + o(\beta_t)) \qq{(Assumption~\ref{assu target net boundedness})}
\end{align}
Assumption~\ref{assu:borkar-lipschitz} is verified by Assumption~\ref{assu target net contraction};
Assumption~\ref{assu:borkar-step} is verified by Assumption~\ref{assu target net lr};
Assumption~\ref{assu:borkar-boundedness} is verified directly by the projection in~\eqref{eq general target net update}. 
By Theorem~\ref{thm:borkar},
almost surely,
$\qty{\theta_t}$ converges to a compact connected internally chain transitive invariant set of the ODE
\begin{align}
\dv{t} \theta(t) = w^*(\theta(t)) - \theta(t).
\end{align}
Under Assumption~\ref{assu target net contraction},
the Banach fixed-point theorem asserts that there is a unique $\theta^*$ satisfying $w^*(\theta^*) = \theta^*$,
i.e., $\theta^*$ is the unique equilibrium of the ODE above.
We now show $\theta^*$ is globally asymptotically stable.
Consider the candidate Lyapunov function
\begin{align}
V(\theta) \doteq \frac{1}{2}\norm{\theta - \theta^*}^2.
\end{align}
We have 
\begin{align}
\dv{t} V(\theta(t)) = &\braket{\theta(t) - \theta^*}{\dv{t} \theta(t)} \\
=&\braket{\theta(t) - \theta^*}{w^*(\theta(t)) - \theta(t)} \\
=&\braket{\theta(t) - \theta^*}{w^*(\theta(t)) - \theta(t) - w^*(\theta^*) + \theta^*} \\
\leq& \varrho \norm{\theta(t) - \theta^*}^2 - \norm{\theta(t) - \theta^*}^2,
\end{align}
where $\varrho < 1$ is the Lipschitz constant of $w^*$.
It is easy to see
\begin{itemize}
	\item $V(\theta) \geq 0$
	\item $V(\theta) = 0 \iff \theta = \theta^*$
	\item $\dv{t} V(\theta(t)) \leq 0$
	\item $\dv{t} V(\theta(t)) = 0 \iff \theta = \theta^*$
\end{itemize}
Consequently, 
$\theta^*$ is globally asymptotically stable,
implying
\begin{align}
\lim_{t \to \infty} \theta_t &= \theta^*, \\
\lim_{t \to \infty} w_t &= \lim_{t \to \infty} w^*(\theta_t) = w^*(\theta^*) = \theta^*. 
\end{align}
\end{proof}

\subsection{Proof of Lemma~\ref{lem target net changes slowly}}
\label{sec target net changes slowly}
\begin{proof}
By~\eqref{eq general target net update},
$\theta_t \in B_1$ holds for all $t$. 
So
\begin{align}
&\norm{\theta_{t+1} - \theta_t} \\
=& \norm{\Gamma_{B_1} \Big( \theta_t + \beta_t (\Gamma_{B_2}(w_t) - \theta_{t}) \Big) - \Gamma_{B_1}(\theta_t)} \\
\leq& \beta_t \norm{\Gamma_{B_2}(w_t) - \theta_{t}} \qq{(Nonexpansiveness of projection)} \\
\leq& \beta_t (R_{B_1} + R_{B_2}).
\end{align}
\end{proof}

\subsection{Proof of Theorem~\ref{thm:op-td}}
\label{sec:proof-of-td}
\begin{proof}
Consider the Markov process $Y_t \doteq (S_t, A_t, S_{t+1})$. 
By Assumption~\ref{assu:op-td-ergodic},
$Y_t$ adopts a unique stationary distribution,
which we refer to as $d_\Y$.
We have $d_\Y(s, a, s') = d_\mu(s) \mu(a|s) p(s'|s, a)$.
We define
\begin{align}
\label{eq:app-of-td-reward}
h_\theta(s, a, s') &\doteq \big( r(s, a) + \gamma \sum_{a'} \pi(a'|s') x(s', a')^\top \Gamma_{B_1}(\theta) \big) x(s, a), \\
G_\theta(s, a, s') &\doteq x(s, a) x(s, a)^\top + \eta I.
\end{align}
As $\theta_t \in B_1$ holds for all $t$,
we can rewrite the update of $w_t$ in Algorithm~\ref{alg:off-policy-td} as
\begin{align}
w_{t+1} = w_t + \alpha_t (h_{\theta_t}(Y_t) - G_{\theta_t}(Y_t)w_t).
\end{align}
The asymptotic behavior of $\qty{w_t}$ is then governed by
\begin{align}
\label{eq:app-of-td-gain}
\bar{h}(\theta) &\doteq \E_{(s, a, s')\sim d_\Y}[h_\theta(s, a, s')] \\
& = X^\top D_\mu r + \gamma X^\top D_\mu P_\pi X \Gamma_{B_1}(\theta), \\
\bar{G}(\theta) &\doteq \E_{(s, a, s')\sim d_\Y}[G_\theta(s, a, s')] \\
& = X^\top D_\mu X + \eta I.
\end{align}
Define
\begin{align}
\label{eq:app-of-td-1}
w^*(\theta) \doteq \bar{G}(\theta)^{-1}\bar{h}(\theta)
= (X^\top D_\mu X + \eta I)^{-1} X^\top D_\mu (r + \gamma P_\pi X \Gamma_{B_1}(\theta)).
\end{align}
We now verify Assumptions~\ref{assu target net main net},~\ref{assu target net boundedness}, \&~\ref{assu target net contraction} to invoke Theorem~\ref{thm:target-net}.

Assumption~\ref{assu target net main net} is verified in Lemma~\ref{lem:main-of-td}.

To verify Assumption~\ref{assu target net contraction},
we use SVD and get 
\begin{align}
D_\mu^\frac{1}{2}X = U^\top \Sigma V,
\end{align}
where $U, V$ are two orthogonal matrices, $\Sigma = \mqty[\Sigma_+ \\ 0]$ is a rectangular diagonal matrix with 
$\Sigma_+ \doteq diag([\dots, \sigma_i, \dots])$ being a diagonal matrix.
Assumptions~\ref{assu:op-td-ergodic} \& \ref{assu full rank X} imply that $\sigma_i > 0$.
We have 
\begin{align}
\label{eq:op-td-contraction}
\norm{(X^\top D_\mu X + \eta I)^{-1} X^\top D_\mu^\frac{1}{2}} =& \norm{V^\top (\Sigma^\top \Sigma + \eta I)^{-1} \Sigma^\top U } 
= \norm{(\Sigma^\top \Sigma + \eta I)^{-1} \Sigma^\top} \\ 
=& \norm{diag([\cdots, \frac{\sigma_i}{\sigma_i^2 + \eta}, \cdots])}
= \norm{diag([\cdots, \frac{1}{\sigma_i + \eta / \sigma_i}, \cdots])}
= \max_i \frac{1}{\sigma_i + \eta / \sigma_i} \\
\leq& \frac{1}{2\sqrt{\eta}} 
\end{align}
According to~\eqref{eq:app-of-td-1}, it is then easy to see
\begin{align}
\norm{w^*(\theta_1) - w^*(\theta_2)} &\leq \frac{\gamma}{2\sqrt{\eta}} \norm{D_\mu^\frac{1}{2} P_\pi} \norm{X} \norm{\Gamma_{B_1}(\theta_1) - \Gamma_{B_1}(\theta_2)} \\
&\leq \frac{\gamma}{2\sqrt{\eta}} \norm{D_\mu^\frac{1}{2} P_\pi} \norm{X} \norm{\theta_1 - \theta_2} \\
&\leq \frac{\gamma}{2\sqrt{\eta}} \norm{D_\mu^\frac{1}{2} P_\pi D_\mu^{-\frac{1}{2}}} \norm{D_\mu^{\frac{1}{2}}} \norm{X} \norm{\theta_1 - \theta_2} \\
&= \frac{\gamma}{2\sqrt{\eta}} \norm{P_\pi}_{D_\mu} \norm{D_\mu^{\frac{1}{2}}} \norm{X} \norm{\theta_1 - \theta_2} \qq{(See (20) in \citet{kolter2011fixed})}\\
&\leq \frac{\gamma}{2\sqrt{\eta}} \norm{P_\pi}_{D_\mu} \norm{X} \norm{\theta_1 - \theta_2}. \\
\end{align}
Take any $\xi \in (0, 1)$, assuming
\begin{align}
\label{eq:op-td-inequ1}
\norm{X} \leq \frac{2(1 - \xi) \sqrt{\eta}}{\gamma \norm{P_\pi}_{D_\mu}},
\end{align}

then
\begin{align}
\norm{w^*(\theta_1) - w^*(\theta_2)} \leq (1 - \xi)\norm{\theta_1 - \theta_2}.
\end{align}
Assumption~\ref{assu target net contraction}, therefore, holds.

We now select proper $R_{B_1}$ and $R_{B_2}$ to fulfill Assumption~\ref{assu target net boundedness}.
Plugging \eqref{eq:op-td-contraction} and \eqref{eq:op-td-inequ1} into~\eqref{eq:app-of-td-1} yields 
\begin{align}
\norm{w^*(\theta)} &\leq \frac{1}{2\sqrt{\eta}} \norm{D_\mu^\frac{1}{2}r} + (1 - \xi) R_{B_1} \\
&= R_{B_1} - \xi + (\frac{1}{2\sqrt{\eta}}\norm{D_\mu^\frac{1}{2}r} + \xi - \xi R_{B_1})
\end{align}
For sufficiently large $R_{B_1}$, e.g.,
\begin{align}
\label{eq:op-td-inequ2}
R_{B_1} \geq \frac{1}{2\xi\sqrt{\eta}}\norm{D_\mu^\frac{1}{2}r} + 1,
\end{align}
we have $\sup_\theta \norm{w^*(\theta)} \leq R_{B_1} - \xi$.
Selecting $R_{B_2} \in (R_{B_1} - \xi, R_{B_1})$ then fulfills Assumption~\ref{assu target net boundedness}.

With Assumptions~\ref{assu target net lr} -~\ref{assu target net contraction} satisfied, Theorem~\ref{thm:target-net} then implies that there exists a unique $\theta_\infty$ such that
\begin{align}
w^*(\theta_\infty) = \theta_\infty \qq{and} \lim_{t\to \infty} \theta_t = \lim_{t \to \infty} w_t = \theta_\infty. 
\end{align}

Next we show what $\theta_\infty$ is.
We define
\begin{align}
f(\theta) \doteq (X^\top D_\mu X + \eta I)^{-1} X^\top D_\mu (r + \gamma P_\pi X \theta).
\end{align}
Note this is just the right side of equation \eqref{eq:app-of-td-1} without the projection.
\eqref{eq:op-td-contraction} and \eqref{eq:op-td-inequ1} imply that $f$ is a contraction.
The Banach fixed-point theorem then asserts that $f$ adopts a unique fixed point,
which we refer to as $w^*_\eta$.
We have
\begin{align}
\norm{w^*_\eta} &= \norm{f(w^*_\eta)} \leq \frac{1}{2\sqrt{\eta}} (\norm{r} + \gamma \norm{P_\pi}_{D_\mu} \norm{X} \norm{w^*_\eta}) \\
&\leq \frac{\norm{r}}{2 \sqrt{\eta}} + (1 - \xi) \norm{w^*_\eta} \qq{(By~\eqref{eq:op-td-inequ1})} \\
\implies \norm{w^*_\eta} &\leq \frac{\norm{r}}{2 \xi \sqrt{\eta}}
\end{align}
Then for sufficiently large $R_{B_1}$, e.g.,
\begin{align}
\label{eq:app-op-td-ineq3}
R_{B_1} \geq \frac{\norm{r}}{2 \xi \sqrt{\eta}},
\end{align}

we have $w^*_\eta = \Gamma_{B_1}(w^*_\eta)$, implying 
$w^*_\eta$ is a fixed point of $w^*(\cdot)$ (i.e., the right side of \eqref{eq:app-of-td-1}) as well.
As $w^*(\cdot)$ is a contraction, we have $\theta_\infty = w^*_\eta$.
Rewriting $f(w^*_\eta) = w^*_\eta$ yields
\begin{align}
Aw^*_\eta + \eta w^*_\eta - b &= \tb{0}.
\end{align}
In other words, $w^*_\eta$ is the unique (due to the contraction of $f$) solution of $(A + \eta I)w - b = \tb{0}$.

Combining~\eqref{eq:op-td-inequ1}, ~\eqref{eq:op-td-inequ2}, and~\eqref{eq:app-op-td-ineq3},
the desired constants are
\begin{align}
C_0 &\doteq \frac{2(1 - \xi)\sqrt{\eta}}{\gamma \norm{P_\pi}_{D_\mu}}, \\
C_1 &\doteq \frac{\norm{r}}{2\xi\sqrt{\eta}} + 1.
\end{align}
We now bound $\norm{Xw^*_\eta - q_\pi}$.
For any $y \in \R^\ns$, we define the ridge regularized projection $\Pi^\eta_{D_\mu}$ as
\begin{align}
\Pi^\eta_{D_\mu} y &\doteq X \arg\min_w \left(\norm{Xw - y}^2_{D_\mu} + \eta \norm{w}^2 \right) \\
 &= X (X^\top D_\mu X + \eta I)^{-1} X^\top D_\mu y.
\end{align}
$\Pi_{D_\mu}^\eta$ is connected with $f$ as
$\Pi_{D_\mu}^\eta \vop_\pi (Xw) = Xf(w)$.
We have
\begin{align}
\norm{X w^*_\eta - q_\pi} &\leq \norm{X w^*_\eta - \Pi^\eta_{D_\mu} q_\pi} + \norm{\Pi^\eta_{D_\mu} q_\pi - q_\pi} \\
&=\norm{X f(w^*_\eta) - \Pi^\eta_{D_\mu} q_\pi} + \norm{\Pi^\eta_{D_\mu} q_\pi - q_\pi} \\
&=\norm{\Pi^\eta_{D_\mu}\vop_\pi (Xw^*_\eta) - \Pi^\eta_{D_\mu} \vop_\pi q_\pi} + \norm{\Pi^\eta_{D_\mu} q_\pi - q_\pi} \\
&=\norm{\Pi^\eta_{D_\mu}\gamma P_\pi Xw^*_\eta - \Pi^\eta_{D_\mu} \gamma P_\pi q_\pi} + \norm{\Pi^\eta_{D_\mu} q_\pi - q_\pi} \\
&=\norm{\gamma\Pi^\eta_{D_\mu} P_\pi (Xw^*_\eta - q_\pi)} + \norm{\Pi^\eta_{D_\mu} q_\pi - q_\pi} \\
&\leq \norm{\gamma X (X^\top D_\mu X + \eta I)^{-1} X^\top D_\mu P_\pi} \norm{Xw^*_\eta - q_\pi} + \norm{\Pi^\eta_{D_\mu} q_\pi - q_\pi} \\
&\leq (1 - \xi) \norm{Xw^*_\eta - q_\pi} + \norm{\Pi^\eta_{D_\mu} q_\pi - q_\pi} \qq{(By \eqref{eq:op-td-contraction} and \eqref{eq:op-td-inequ1})} \\
\end{align}
The above equation implies
\begin{align}
\norm{X w^*_\eta - q_\pi} &\leq \frac{1}{\xi} \norm{\Pi^\eta_{D_\mu} q_\pi - q_\pi} \leq \frac{1}{\xi} \left(\norm{\Pi^\eta_{D_\mu} q_\pi - \Pi_{D_\mu} q_\pi} + \norm{\Pi_{D_\mu} q_\pi - q_\pi}\right),
\end{align}
where $\Pi_{D_\mu}$ is shorthand for $\Pi_{D_\mu}^{\eta = 0}$.
We now bound $\norm{\Pi^\eta_{D_\mu} - \Pi_{D_\mu}}$.
\begin{align}
\label{eq:regularized projection bound}
\norm{\Pi^\eta_{D_\mu} - \Pi_{D_\mu}} &\leq \norm{X}\norm{(X^\top D_\mu X + \eta I)^{-1} - (X^\top D_\mu X)^{-1}} \norm{X^\top D_\mu} \\
&= \norm{D_\mu^{-\frac{1}{2}} D_\mu^{\frac{1}{2}} X} \norm{V^\top \big( (\Sigma_+^2 + \eta I)^{-1} - \Sigma_+^{-2} \big) V} \norm{X^\top D_\mu^{\frac{1}{2}} D_\mu^{\frac{1}{2}}} \\
&\leq \norm{D_\mu^{-\frac{1}{2}}} \norm{\Sigma} \norm{(\Sigma_+^2 + \eta I)^{-1} - \Sigma_+^{-2}} \norm{\Sigma} \norm{D_\mu^{\frac{1}{2}}} \\
&\leq \norm{D_\mu^{-\frac{1}{2}}} \norm{\Sigma} \norm{diag([\dots, \frac{\eta}{\sigma_i^2(\eta + \sigma_i^2)} ,\dots])} \norm{\Sigma} \norm{D_\mu^{\frac{1}{2}}} \\
&\leq \norm{D_\mu^{-\frac{1}{2}}} \norm{\Sigma} \max_i\frac{\eta}{\sigma_i^2(\eta + \sigma_i^2)} \norm{\Sigma} \norm{D_\mu^{\frac{1}{2}}} \\
&\leq \norm{D_\mu^{-\frac{1}{2}}} \norm{\Sigma}^2 \max_i\frac{\eta}{\sigma_i^4}  \norm{D_\mu^{\frac{1}{2}}} \\
&\leq \norm{D_\mu^{-\frac{1}{2}}} \frac{\sigma_{\max}(\Sigma)^2}{\sigma_{\min}(\Sigma)^4}  \norm{D_\mu^{\frac{1}{2}}} \eta
\intertext{\hfill ($\sigma_{\max}(\cdot)$ and $\sigma_{\min}(\cdot)$ indicate the largest and smallest singular values)}
&= \frac{\sigma_{\max}(D_\mu^{\frac{1}{2}}X)^2}{\sigma_{\min}(D_\mu^{\frac{1}{2}}X)^4} \frac{\sigma_{\max}(D_\mu^{\frac{1}{2}})}{\sigma_{\min}(D_\mu^{\frac{1}{2}})} \eta \\
&\leq \frac{\sigma_{\max}(X)^2}{\sigma_{\min}(X)^4 \sigma_{\min}(D_\mu)^{2.5}} \eta 
\intertext{\hfill ($\sigma_{\max}(XY) \leq \sigma_{\max}(X)\sigma_{\max}(Y); \sigma_{\min}(XY) \geq \sigma_{\min}(X)\sigma_{\min}(Y)$ )}.
\end{align}
Finally, we arrive at
\begin{align}
\norm{X w^*_\eta - q_\pi} &\leq \frac{1}{\xi} \Big( \frac{\sigma_{\max}(X)^2}{\sigma_{\min}(X)^4 \sigma_{\min}(D_\mu)^{2.5}} \norm{q_\pi} \eta + \norm{\Pi_{D_\mu} q_\pi - q_\pi} \Big),
\end{align}
which completes the proof.
\end{proof}

\subsection{Proof of Theorem~\ref{thm:diff-op-td}}
\label{sec:proof-diff-of-td}
\begin{proof}
The proof is similar to the proof of Theorem~\ref{thm:op-td} in Section~\ref{sec:proof-of-td}.
We, therefore, highlight only the difference to avoid verbatim repetition.
Define 
\begin{align}
\theta &\doteq \mqty[\theta^r \\ \theta^w], u \doteq \mqty[\bar r \\ w], \\
h_\theta(s, a, s') &\doteq \mqty[ r(s, a) \\ x(s, a) r(s, a)] + \mqty[0 & \sum_{a'} \pi(a'|s')x(s', a')^\top - x(s, a)^\top \\
-x(s, a) & x(s, a) \sum_{a'} \pi(a'|s')x(s', a')^\top] \Gamma_{B_1}(\theta), \\
G_\theta(s, a, s') &\doteq \mqty[1 & \tb{0}^\top \\ \tb{0} & x(s, a) x(s, a)^\top + \eta I],
\end{align}
We can then rewrite the update of $\bar r$ and $w$ in Algorithm~\ref{alg:differential-off-policy-td} as
\begin{align}
u_{t+1} = u_t + \alpha_t (h_{\theta_t}(Y_t) - G_{\theta_t}(Y_t) u_t).
\end{align}
Similarly,
we define
\begin{align}
\bar{h}(\theta) &\doteq \E_{(s, a, s') \sim d_\Y}[h_\theta(s, a, s')] = \bar h_1 + \bar H_2  \Gamma_{B_1}(\theta), \\
\bar h_1 &\doteq \mqty[d_\mu^\top r \\ X^\top D_\mu r], \bar H_2 \doteq \mqty[0 & d_\mu^\top (P_\pi - I) X \\ -X^\top d_\mu & X^\top D_\mu P_\pi X] \\
\bar{G}(\theta) &\doteq \E_{(s, a, s') \sim d_\Y}[G_\theta(s, a, s')] = \mqty[1 & \tb{0}^\top \\ \tb{0} & X^\top D_\mu X + \eta I], \\
u^*(\theta) &\doteq \bar{G}(\theta)^{-1} \bar h(\theta).
\end{align}

We proceed to verifying Assumptions~\ref{assu target net main net},~\ref{assu target net boundedness}, \&~\ref{assu target net contraction} to invoke Theorem~\ref{thm:target-net}.

Assumption~\ref{assu target net main net} is verified in Lemma~\ref{lem:main-diff-of-td}.

For Assumption~\ref{assu target net contraction} to hold,
note
\begin{align}
\norm{\bar G(\theta)^{-1}} &= \max\qty{1, \norm{(X^\top D_\mu X + \eta I)^{-1}}} \leq \max \qty{1, \frac{1}{\eta}} \\
\norm{\bar H_2}^2 &= \max_{\norm{u} = 1} \norm{\bar H_2 u}^2 = \max_{\norm{u} = 1} \norm{\mqty[d_\mu^\top(P_\pi - I)Xw \\ - X^\top d_\mu \bar r + X^\top D_\mu P_\pi X w]}^2 \\
&= \max_{\norm{u} = 1} \norm{d_\mu^\top(P_\pi - I)Xw}^2 + \norm{- X^\top d_\mu \bar r + X^\top D_\mu P_\pi X w}^2 \\
&\leq \max_{\norm{u} = 1} \norm{d_\mu^\top(P_\pi - I)X}^2 \norm{w}^2 + 2 \norm{X^\top d_\mu}^2 \norm{\bar r}^2 + 2 \norm{X^\top D_\mu P_\pi X}^2 \norm{w}^2 \\
&\leq \max \qty{\norm{d_\mu^\top(P_\pi - I)X}^2 + 2 \norm{X^\top D_\mu P_\pi X}^2, 2 \norm{X^\top d_\mu}^2} \\
\implies \norm{\bar H_2} &\leq \norm{X} \max \qty{\norm{d_\mu^\top(P_\pi - I)} + \sqrt{2} \norm{D_\mu P_\pi}, \sqrt{2} \norm{d_\mu}}.
\end{align}
The above equations suggest that
\begin{align}
\norm{u^*(\theta_1) - u^*(\theta_2)} &\leq \max \qty{1, \frac{1}{\eta}} \norm{X} \max \qty{\norm{d_\mu^\top(P_\pi - I)} + \sqrt{2} \norm{D_\mu P_\pi}, \sqrt{2} \norm{d_\mu}} \norm{\theta_1 - \theta_2}.
\end{align}
Take any $\xi \in (0, 1)$, assuming
\begin{align}
\label{eq:diff-op-td-inequ1}
\norm{X} \leq \frac{1 - \xi}{\max \qty{1, \frac{1}{\eta}} \max \qty{\norm{d_\mu^\top(P_\pi - I)} + \sqrt{2} \norm{D_\mu P_\pi}, \sqrt{2} \norm{d_\mu}}},
\end{align}
then
\begin{align}
\norm{u^*(\theta_1) - u^*(\theta_2)} \leq (1 - \xi)\norm{\theta_1 - \theta_2}.
\end{align}
Assumption~\ref{assu target net contraction}, therefore, holds.

We now select proper $R_{B_1}$ and $R_{B_2}$ to fulfill Assumption~\ref{assu target net boundedness}.
Using~\eqref{eq:diff-op-td-inequ1},
it is easy to see
\begin{align}
\norm{u^*(\theta)} \leq \max \qty{1, \frac{1}{\eta}} \norm{\bar h_1} + (1 - \xi) R_{B_1}.
\end{align}
For sufficiently large $R_{B_1}$, e.g.,
\begin{align}
\label{eq:diff-op-td-inequ2}
R_{B_1} \geq \max \qty{1, \frac{1}{\eta}} \frac{\norm{\bar h_1}}{\xi} + 1,
\end{align}
we have $\sup_\theta \norm{w^*(\theta)} \leq R_{B_1} - \xi$.
Selecting $R_{B_2} \in (R_{B_1} - \xi, R_{B_1})$ then fulfills Assumption~\ref{assu target net boundedness}.

With Assumptions~\ref{assu target net lr} -~\ref{assu target net contraction} satisfied, Theorem~\ref{thm:target-net} then implies that there exists a unique $\theta_\infty$ such that
\begin{align}
u^*(\theta_\infty) = \theta_\infty \qq{and} \lim_{t\to \infty} \theta_t = \lim_{t \to \infty} u_t = \theta_\infty. 
\end{align}

Next we show what $\theta_\infty$ is.
We define
\begin{align}
f(\theta) \doteq \bar G(\theta)^{-1}(\bar h_1 + \bar H_2 \theta).
\end{align}
Note this is just $u^*(\theta)$ without the projection.
Under~\eqref{eq:diff-op-td-inequ1},
it is easy to show
$f$ is a contraction.
The Banach fixed-point theorem then asserts that $f$ adopts a unique fixed point,
which we refer to as $u^*_\eta$.
Using~\eqref{eq:diff-op-td-inequ1} again,
we get
\begin{align}
\norm{u^*_\eta} &= \norm{f(u^*_\eta)} \leq \max \qty{1, \frac{1}{\eta}} \norm{\bar h_1} + (1 - \xi) \norm{w^*_\eta}\\
\implies \norm{u^*_\eta} &\leq \max \qty{1, \frac{1}{\eta}} \frac{\norm{\bar h_1}}{\xi} 
\end{align}
Then for sufficiently large $R_{B_1}$, e.g.,
\begin{align}
\label{eq:app-diff-op-td-ineq3}
R_{B_1} \geq \max \qty{1, \frac{1}{\eta}} \frac{\norm{\bar h_1}}{\xi},
\end{align}
we have $u^*_\eta = \Gamma_{B_1}(u^*_\eta)$, implying 
$u^*_\eta$ is a fixed point of $u^*(\cdot)$ as well.
As $u^*(\cdot)$ is a contraction, we have $\theta_\infty = u^*_\eta$.
Writing $u^*_\eta$ as $\mqty[\bar r^*_\eta \\ w^*_\eta]$
and expanding $f(u^*_\eta) = u^*_\eta$ yields
\begin{align}
\bar r^*_\eta &= d_\mu^\top (r + P_\pi X w^*_\eta - X w^*_\eta), \\
(X^\top D_\mu X + \eta I) w^*_\eta &= X^\top D_\mu r - X^\top d_\mu \bar r^*_\eta + X^\top D_\mu P_\pi X w^*_\eta.
\end{align}
Rearranging terms yields $(\bar A + \eta I) w^*_\eta - \bar b = 0$, i.e., 
$w^*_\eta$ is the unique (due to the contraction of $f$) solution of $(\bar A + \eta I)w - \bar b = 0$.

We now bound $\norm{Xw^*_\eta - \bar q_\pi^c}$.
\begin{align}
\norm{X w^*_\eta - \bar q_\pi^c} &\leq \norm{X w^*_\eta - \Pi^\eta_{D_\mu} \bar q_\pi^c} + \norm{\Pi^\eta_{D_\mu} \bar q_\pi^c - \bar q_\pi^c} \\
&=\norm{X (X^\top D_\mu X + \eta I)^{-1}(X^\top D_\mu r - X^\top d_\mu \bar r^*_\eta + X^\top D_\mu P_\pi X w^*_\eta)  - \Pi^\eta_{D_\mu} \bar q_\pi^c} + \norm{\Pi^\eta_{D_\mu} \bar q_\pi^c - \bar q_\pi^c} \\
&=\norm{\Pi_{D_\mu}^\eta (r + P_\pi Xw^*_\eta) - \Pi_{D_\mu}^\eta (r + P_\pi \bar q_\pi^c)} + \norm{\Pi^\eta_{D_\mu} \bar q_\pi^c - \bar q_\pi^c} \qq{(Using $X^\top d_\mu = \tb{0}$)}\\
&=\norm{\Pi^\eta_{D_\mu} P_\pi (Xw^*_\eta - \bar q_\pi^c)} + \norm{\Pi^\eta_{D_\mu} \bar q_\pi^c - \bar q_\pi^c} \\
&\leq \norm{X (X^\top D_\mu X + \eta I)^{-1} X^\top D_\mu P_\pi} \norm{Xw^*_\eta - \bar q_\pi^c} + \norm{\Pi^\eta_{D_\mu} \bar q_\pi^c - \bar q_\pi^c} \\
&\leq \frac{1}{\eta} \norm{X}^2 \norm{D_\mu P_\pi} \norm{Xw^*_\eta - \bar q_\pi^c} + \norm{\Pi^\eta_{D_\mu} \bar q_\pi^c - \bar q_\pi^c}.
\end{align}
Assuming
\begin{align}
\label{eq:app-diff-op-td-ineq4}
\norm{X}^2 \leq \frac{(1 - \xi) \eta}{\norm{D_\mu P_\pi}},
\end{align}
we have
\begin{align}
\norm{X w^*_\eta - \bar q_\pi^c} \leq \frac{1}{\xi} \norm{\Pi^\eta_{D_\mu} \bar q_\pi^c - \bar q_\pi^c} \leq \frac{1}{\xi} \Big( \frac{\sigma_{\max}(X)^2}{\sigma_{\min}(X)^4 \sigma_{\min}(D_\mu)^{2.5}} \norm{\bar q_\pi^c} \eta + \norm{\Pi_{D_\mu} \bar q_\pi^c - \bar q_\pi^c} \Big) \qq{(c.f.~\eqref{eq:regularized projection bound})}.
\end{align}
It is then easy to see
\begin{align}
|\bar r^*_\eta - \bar r_\pi| \leq \norm{d_\mu^\top (P_\pi - I) (Xw^*_\eta - \bar q_\pi^c)} \leq \frac{1}{\xi} \norm{d_\mu^\top (P_\pi - I)}\Big( \frac{\sigma_{\max}(X)^2}{\sigma_{\min}(X)^4 \sigma_{\min}(D_\mu)^{2.5}} \norm{\bar q_\pi^c} \eta + \norm{\Pi_{D_\mu} \bar q_\pi^c - \bar q_\pi^c} \Big).
\end{align}
Taking infimum for $c \in \R$ then yields the desired results.

Combining~\eqref{eq:diff-op-td-inequ1},~\eqref{eq:diff-op-td-inequ2},~\eqref{eq:app-diff-op-td-ineq3}, and~\eqref{eq:app-diff-op-td-ineq4},
the desired constants are
\begin{align}
C_0 &\doteq \min \qty{ \frac{1 - \xi}{\max \qty{1, \frac{1}{\eta}} \max \qty{\norm{d_\mu^\top(P_\pi - I)} + \sqrt{2} \norm{D_\mu P_\pi}, \sqrt{2} \norm{d_\mu}}}, \sqrt{\frac{(1 - \xi) \eta}{\norm{D_\mu P_\pi}}}}, \\
C_1 &\doteq \max \qty{1, \frac{1}{\eta}} \frac{\norm{\bar h_1}}{\xi} + 1,
\end{align}
which completes the proof.
\end{proof}

\subsection{Proof of Theorem~\ref{thm:linear-q}}
\label{sec:proof-of-linear-q}
\begin{proof}
The proof is similar to the proof of Theorem~\ref{thm:op-td} in Section~\ref{sec:proof-of-td} but is more involving.
We define
\begin{align}
\label{eq:app-linear-q-reward}
h_\theta(s, a, s') &\doteq \big( r(s, a) + \gamma \max_{a'} x(s', a')^\top \bar{\theta} \big) x(s, a), \\
G_\theta(s, a, s') &\doteq x(s, a) x(s, a)^\top + \eta I,
\end{align}
where $\bar{\theta} \doteq \Gamma_{B_1}(\theta)$ is shorthand.
As $\theta_t \in B_1$ holds for all $t$,
we can rewrite the update of $w_t$ in Algorithm~\ref{alg:linear-q} as
\begin{align}
w_{t+1} = w_t + \alpha_t (h_{\theta_t}(Y_t) - G_{\theta_t}(Y_t)w_t).
\end{align}
The expected update given $\theta$ is then controlled by
\begin{align}
\label{eq:app-linear-q-gain}
\bar{h}(\theta) &\doteq \E_{(s, a)\sim d_{\mu_\theta}, s' \sim p(\cdot|s, a)}[h_\theta(s, a, s')] \\
& = X^\top D_{\mu_\theta} r + \gamma X^\top D_{\mu_\theta} P_{\pi_{\bar{\theta}}} X \bar{\theta}, \\
\bar{G}(\theta) &\doteq \E_{(s, a)\sim d_{\mu_\theta}, s' \sim p(\cdot|s, a)}[G_\theta(s, a, s')] \\
& = X^\top D_{\mu_\theta} X + \eta I,
\end{align}
where Assumption~\ref{assu:control-ergodic} ensures the existence of $d_{\mu_\theta}$
and $\pi_\theta$ is the target policy, i.e.
a greedy policy with random tie breaking defined as follows.

Let $\mathcal{A}^{\max}_{s,\theta} \doteq \arg\max_a x(s, a)^\top \theta$ be the set of maximizing actions
for state $s$,
we define
\begin{align}
\pi_\theta(a|s) \doteq \begin{cases}
\frac{1}{|\mathcal{A}^{\max}_{s,\theta}|}, & a \in \mathcal{A}^{\max}_{s,\theta} \\
0, &\text{otherwise} 
\end{cases}.
\end{align}
Similar to the proof in Section~\ref{sec:proof-of-td},
we define
\begin{align}
\label{eq w star definition in linear q}
w^*(\theta) \doteq \bar{G}(\theta)^{-1}\bar{h}(\theta)
= (X^\top D_{\mu_\theta} X + \eta I)^{-1} X^\top D_{\mu_\theta} (r + \gamma P_{\pi_{\bar{\theta}}} X \bar{\theta})
\end{align}
and proceed to verify Assumptions~\ref{assu target net main net} -~\ref{assu target net contraction} to invoke Theorem~\ref{thm:target-net}.

Assumption~\ref{assu target net main net} is proved in Lemma~\ref{lem:main-linear-q}.

For Assumption~\ref{assu target net contraction},
Lemma~\ref{lem linear q lipschitz} shows that 
$w^*(\theta)$ is Lipschitz continuous in $\theta$ with 
\begin{align}
C_w \doteq \eta^{-1}\norm{X}\norm{r}L_D + \eta^{-2} \norm{X}^3 \norm{r} L_D + \gamma \eta^{-1} L_0 \norm{X}^2 + \gamma U_P \norm{X} R_{B_1} (\eta^{-1}\norm{X} L_D + \eta^{-2} \norm{X}^3 L_D)
\end{align}
being a Lipschitz constant.
Here $L_D$, $L_0$, and $U_P$ are positive constants detailed in the proof of Lemma~\ref{lem linear q lipschitz}.
Assuming 
\begin{align}
\label{eq:app-linear-q-x-cond1}
\norm{X} \leq 1 \qq{and} \gamma U_P \norm{X} R_{B_1} \leq 1,
\end{align}
we have
\begin{align}
C_w \leq \eta^{-2}\norm{X}(\eta \norm{r} L_D + \norm{r} L_D + \gamma \eta L_0 + \eta L_D +  L_D).
\end{align}
Take any $\xi \in (0, 1)$,
assuming
\begin{align}
\label{eq:app-linear-q-x-cond2}
\norm{X} \leq (1 - \xi) \eta^2 (\eta \norm{r} L_D + \norm{r} L_D + \gamma \eta L_0 + \eta  L_D + L_D)^{-1},
\end{align}
it then follows that $C_w \leq 1 - \xi$.
Assumptions~\ref{assu target net contraction}, therefore, holds.

We now select proper $R_{B_1}$ and $R_{B_2}$ to fulfill Assumption~\ref{assu target net boundedness}.
Similar to Lemma~\ref{lem linear q lipschitz} (see, e.g., the last three rows of Table~\ref{tab:linear-q-bounds} in the proof of Lemma~\ref{lem linear q lipschitz}),
we can easily get
\begin{align}
\norm{w^*(\theta)} \leq \eta^{-1} \norm{X}\norm{r} + \gamma \eta^{-1} \norm{X} U_P \norm{X}R_{B_1}.
\end{align}
Using~\eqref{eq:app-linear-q-x-cond1} yields
\begin{align}
\norm{w^*(\theta)} \leq \eta^{-1} \norm{X} (\norm{r} + 1).
\end{align}
For sufficiently large $R_{B_1}$, e.g.,
\begin{align}
\label{eq:app-linear-q-r-cond1}
R_{B_1} > \eta^{-1} \norm{X} (\norm{r} + 1) + \xi,
\end{align}
we have $\sup_\theta \norm{w^*(\theta)} < R_{B_1} - \xi$.
Taking $R_{B_2} \in (R_{B_1} - \xi, R_{B_1})$ then fulfills Assumption~\ref{assu target net boundedness}.

With Assumptions~\ref{assu target net lr} -~\ref{assu target net contraction} satisfied,
Theorem~\ref{thm:target-net} implies that there exists a unique $\theta_\infty$ such that
\begin{align}
w^*(\theta_\infty) = \theta_\infty \qq{and} \lim_{t\to \infty} \theta_t = \lim_{t \to \infty} w_t = \theta_\infty.
\end{align}

We now show what $\theta_\infty$ is.
We define
\begin{align}
&f(\theta) \doteq (X^\top D_{\mu_\theta} X + \eta I)^{-1} X^\top D_{\mu_\theta} (r + \gamma P_{\pi_{{\theta}}} X {\theta})
\end{align}
and consider a ball $B_0 \doteq \qty{\theta \in \R^K \mid \norm{\theta} \leq R_{B_0}}$ with $R_{B_0}$ to be tuned (for the Brouwer fixed-point theorem).
We have
\begin{align}
\norm{f(\theta)} &\leq \eta^{-1}\norm{X} \norm{r} + \gamma \eta^{-1} \norm{X}^2 U_P \norm{\theta}
\end{align}
Assuming
\begin{align}
\label{eq:app-linear-q-x-cond3}
\gamma \eta^{-1} \norm{X}^2 U_P < 1 - \xi,
\end{align}
we have
\begin{align}
\norm{f(\theta)} &\leq \eta^{-1}\norm{X} \norm{r} + (1 - \xi) \norm{\theta} \\
&= R_{B_0} - (R_{B_0} - (1 - \xi) \norm{\theta} - \eta^{-1}\norm{X} \norm{r} ).
\end{align}
Then for sufficiently large $R_{B_0}$, e.g.,
\begin{align}
R_{B_0} \geq \frac{\norm{X}\norm{r}}{\eta \xi},
\end{align}
we have 
\begin{align}
\theta \in B_0 \implies f(\theta) \in B_0.
\end{align}
The Brouwer fixed-point theorem then asserts that there exists a $w^*_\eta \in B_0$ such that $f(w^*_\eta) = w^*_\eta$.
For sufficiently large $R_{B_1}$, e.g., 
\begin{align}
\label{eq:app-linear-q-r-cond2}
R_{B_1} > R_{B_0},
\end{align}
we have $\Gamma_{B_1}(w^*_\eta) = w^*_\eta$, i.e.,
$w^*_\eta$ is also a fixed point of $w^*(\cdot)$.
The contraction of $w^*(\cdot)$ then implies $\theta_\infty = w^*_\eta$. 
Rewriting $f(w^*_\eta) = w^*_\eta$ yields
\begin{align}
A_{\pi_{w^*_\eta}, \mu_{w^*_\eta}}w^*_\eta + \eta w^*_\eta - b_{\mu_{w^*_\eta}} &= \tb{0}.
\end{align}
In other words, $w^*_\eta$ is the unique solution of $(A_{\pi_{w}, \mu_w} + \eta I)w - b_{\mu_w} = \tb{0}$ inside $B_1$ (due to the contraction of $w^*(\cdot)$).
Combining~\eqref{eq:app-linear-q-x-cond1}~\eqref{eq:app-linear-q-x-cond2}~\eqref{eq:app-linear-q-x-cond3}~\eqref{eq:app-linear-q-r-cond1}~\eqref{eq:app-linear-q-r-cond2},
the desired constant is
\begin{align}
C_0 \doteq \min \{&1, \frac{1}{\gamma U_P R_{B_1}}, \frac{\eta (R_{B_1} - \xi)}{\norm{r} + 1}, \sqrt{\frac{\eta(1 - \xi)}{\gamma U_P}}, \frac{R_{B_1}\eta \xi}{\norm{r}}, \\
& \frac{(1 - \xi) \eta^2}{\eta \norm{r} L_D + \norm{r} L_D + \gamma \eta L_0 + \eta L_D + L_D}\},
\end{align}
which completes the proof.
As $R_{B_1}$ is usually large,
in general $C_0$ is $\mathcal{O}(R_{B_1}^{-1})$.
Though $C_0$ is potentially small, 
we can use small $\eta$ as well.
So a small $C_0$ (i.e., $\norm{X}$) does not necessarily implies a large regularization bias.
\end{proof}

\subsection{Proof of Theorem~\ref{thm:gradient-q}}
\label{sec:proof-of-gradient-q}
\begin{proof}
Let $\kappa_t \doteq [u_t^\top, w_t^\top]^\top$,
Algorithms~\ref{alg:gradient-q} implies that
\begin{align}
\kappa_{t+1} = \kappa_t + \alpha_t (h_{\theta_t}(Y_t) - G_{\theta_t}(Y_t) \kappa_t),
\end{align}
where
\begin{align}
G_\theta(s, a, s') &\doteq \mqty[x(s, a) x(s, a)^\top & -x(s, a)(\gamma \sum_{a'} \pi_\theta(a'|s')x(s, a) - x(s, a))^\top \\(\gamma \sum_{a'} \pi_\theta(a'|s')x(s, a) - x(s, a))x(s, a)^\top & \eta I], \\
h_\theta(s, a, s') &\doteq \mqty[x(s, a) r(s, a) \\ \tb{0}].
\end{align}
We define
\begin{align}
C_{\mu_\theta} &\doteq X^\top D_{\mu_\theta} X, \\
\bar{G}(\theta) &\doteq \E_{(s, a)\sim d_{\mu_\theta}, s'\sim p(\cdot | s, a)}[G_\theta(s, a, s')] = \mqty[C_{\mu_{\theta}}  & A_{\pi_\theta, \mu_\theta} \\ -A_{\pi_\theta, \mu_\theta}^\top & \eta I], \\
\bar{h}(\theta) &\doteq \E_{(s, a)\sim d_{\mu_\theta}, s'\sim p(\cdot | s, a)}[h_\theta(s, a, s')] = \mqty[X^\top D_{\mu_\theta} r \\ \tb{0}], \\
\label{eq w star graident q}
w^*(\theta) &\doteq [\bar{G}(\theta)^{-1}\bar{h}(\theta)]_{K+1:2K} = (\eta I + A_{\pi_\theta, \mu_\theta}^\top C_{\mu_\theta}^{-1} A_{\pi_\theta, \mu_\theta})^{-1}A_{\pi_\theta, \mu_\theta}^\top C_{\mu_\theta}^{-1} X^\top D_{\mu_\theta}r,
\end{align}
where $[\cdot]_{K+1:2K}$ is the subvector indexed from $K+1$ to $2K$.

We proceed to verifying Assumptions~\ref{assu target net main net},~\ref{assu target net boundedness}, \&~\ref{assu target net contraction} thus invoke Theorem~\ref{thm:target-net}.

Assumption~\ref{assu target net main net} is verified in Lemma~\ref{lem:main-gradient-q}. 
Lemma~\ref{lem w star gradient q lipschitz} shows that if $\norm{X} \leq 1$,
there exist a constant $L_w > 0$, 
which depends on $X$ through only $\frac{X}{\norm{X}}$,
such that
\begin{align}
\norm{w^*(\theta_1) - w^*(\theta_2)} \leq L_w \norm{X} \norm{\theta_1 - \theta_2}.
\end{align}
As $w^*(\cdot)$ is independent of $R_{B_1}$,
so does $L_w$.
So as long as
\begin{align}
	\label{eq gq x condition}
\norm{X} \leq C_0 \doteq \min\qty{1, \frac{1 - \xi}{L_w}},
\end{align}
$w^*(\cdot)$ is contractive and Assumption~\ref{assu target net contraction} is satisfied.
Since $L_w$ depends on $X$ only through $\frac{X}{\norm{X}}$,
there are indeed $X$ satisfying \eqref{eq gq x condition}.
E.g.,
if some $X'$ does not satisfy \eqref{eq gq x condition},
we can simply scale $X'$ down by some scalar.
In the proof of Lemma~\ref{lem w star gradient q lipschitz},
we show $\sup_\theta \norm{C_{\mu_\theta}^{-1}} < \infty$.
It is then easy to see $\sup_\theta \norm{w^*(\theta)} < \infty$.
Consequently, we can choose sufficiently large $R_{B_1}$ and $R_{B_2}$ such that Assumption~\ref{assu target net boundedness} holds. 

With Assumptions~\ref{assu target net lr} -~\ref{assu target net contraction} satisfied,
Theorem~\ref{thm:target-net} implies that there exists a unique $w^*_\eta$ such that
\begin{align}
w^*(w^*_\eta) = w^*_\eta \qq{and} \lim_{t\to \infty} \theta_t = \lim_{t \to \infty} w_t = w^*_\eta.
\end{align}
Expanding $w^*(w^*_\eta) = w^*_\eta$ yields
\begin{align}
w^*_\eta = (A_{\pi_{w^*_\eta}, \mu_{w^*_\eta}}^\top C_{\mu_{w^*_\eta}}^{-1} A_{\pi_{w^*_\eta}, \mu_{w^*_\eta}} + \eta I)^{-1} A_{\pi_{w^*_\eta}, \mu_{w^*_\eta}}^\top C_{\mu_{w^*_\eta}}^{-1} b_{\mu_{w^*_\eta}},
\end{align}
which completes the proof.
\end{proof}

\subsection{Proof of Theorem~\ref{thm:diff-linear-q}}
\label{sec:proof-diff-linear-q}
\begin{proof}
The proof is combination of the proofs of Theorem~\ref{thm:diff-op-td} and Theorem~\ref{thm:linear-q}.
To avoid verbatim repetition,
in this proof,
we show only the existence of the constants $C_0$ and $C_1$ without showing the exact expressions.
We define 
\begin{align}
\theta &\doteq \mqty[\theta^r \\ \theta^w], u \doteq \mqty[\bar r \\ w], \mqty[\bar \theta^r \\ \bar \theta^w] \doteq \Gamma_{B_1}(\mqty[\theta^r \\ \theta^w]), \\
h_\theta(s, a, s') &\doteq \mqty[ r(s, a) \\ x(s, a) r(s, a)] + \mqty[0 & \sum_{a'} \pi_{\bar \theta^w}(a'|s')x(s', a')^\top \bar \theta^w - x(s, a)^\top \bar \theta^w\\
-x(s, a) \bar \theta^r & x(s, a) \sum_{a'} \pi_{\bar \theta^w}(a'|s')x(s', a')^\top \bar \theta^w], \\
G_\theta(s, a, s') &\doteq \mqty[1 & \tb{0}^\top \\ \tb{0} & x(s, a) x(s, a)^\top + \eta I],
\end{align}
We can then rewrite the update of $\bar r$ and $w$ in Algorithm~\ref{alg:diff-linear-q} as
\begin{align}
u_{t+1} = u_t + \alpha_t (h_{\theta_t}(Y_t) - G_{\theta_t}(Y_t) u_t).
\end{align}
In the rest of this proof,
we write $\mu_\theta$ and $\pi_\theta$ as shorthand for $\mu_{\theta^w}$ and $\pi_{\theta^w}$.
We define
\begin{align}
\bar{h}(\theta) &\doteq \E_{(s, a) \sim d_{\mu_{\theta}}, s' \sim p(\cdot |s, a)}[h_\theta(s, a, s')] = \bar h_1(\theta) + \bar H_2(\theta), \\
\bar h_1(\theta) &\doteq \mqty[d_{\mu_\theta}^\top r \\ X^\top D_{\mu_\theta} r], \bar H_2(\theta) \doteq \mqty[0 & d_{\mu_\theta}^\top (P_{\pi_{\bar \theta^w}} - I) X \bar \theta^w \\ -(X^\top d_{\mu_\theta}) \bar \theta^r & X^\top D_{\mu_\theta} P_{\pi_{\bar \theta^w}} X \bar \theta^w], \\
\bar{G}(\theta) &\doteq \E_{(s, a) \sim d_{\mu_\theta}, s' \sim p(\cdot |s, a)}[G_\theta(s, a, s')] = \mqty[1 & \tb{0}^\top \\ \tb{0} & X^\top D_{\mu_\theta} X + \eta I], \\
\label{eq u star definition diff linear q}
u^*(\theta) &\doteq \bar{G}(\theta)^{-1} \bar h(\theta).
\end{align}

We proceed to verifying Assumptions~\ref{assu target net main net},~\ref{assu target net boundedness}, \&~\ref{assu target net contraction} to invoke Theorem~\ref{thm:target-net}.

Assumption~\ref{assu target net main net} is verified in Lemma~\ref{lem:main-diff-linear-q}.

For Assumption~\ref{assu target net contraction},
Lemma~\ref{lem diff linear q lipschitz} suggests that assuming $\norm{X} \leq 1$, $L_\mu \leq 1$, then
\begin{align}
C_u = \max\qty{1, \eta^{-1}} (\fO(\norm{X}) + \fO(L_\mu)) + \max\qty{1, \eta^{-2}} \fO(\norm{X})
\end{align}
is a Lipschitz constant of $u^*(\theta)$.
Take any $\xi \in (0,  1)$,
it is easy to see 
there exists positive constants $C_2$ and $C_3$ such that 
\begin{align}
\label{eq diff q inequality 1}
\norm{X} \leq C_2, L_\mu \leq C_3 \implies C_u \leq 1 - \xi.
\end{align}
Assumption~\ref{assu target net contraction}, therefore, holds.

We now select proper $R_{B_1}$ and $R_{B_2}$ to fulfill Assumption~\ref{assu target net boundedness}.
Using~\eqref{eq diff q inequality 1},
it is easy to see
\begin{align}
\norm{u^*(\theta)} \leq C_4 + (1 - \xi) R_{B_1}
\end{align}
for some positive constant $C_4$.
For sufficiently large $R_{B_1}$, e.g.,
\begin{align}
\label{eq diff q inequality 2}
R_{B_1} \geq \max \frac{C_4}{\xi} + 1,
\end{align}
we have $\sup_\theta \norm{u^*(\theta)} \leq R_{B_1} - \xi$.
Selecting $R_{B_2} \in (R_{B_1} - \xi, R_{B_1})$ then fulfills Assumption~\ref{assu target net boundedness}.

With Assumptions~\ref{assu target net lr} -~\ref{assu target net contraction} satisfied, Theorem~\ref{thm:target-net} then implies that there exists a unique $\theta_\infty$ such that
\begin{align}
u^*(\theta_\infty) = \theta_\infty \qq{and} \lim_{t\to \infty} \theta_t = \lim_{t \to \infty} u_t = \theta_\infty. 
\end{align}

We now show what $\theta_\infty$ is.
We define
\begin{align}
f(\theta) \doteq \bar G(\theta)^{-1}(\bar h_1(\theta) + \mqty[0 & d_{\mu_\theta}^\top (P_{\pi_{\theta}} - I) X  \\ -X^\top d_{\mu_\theta} & X^\top D_{\mu_\theta} P_{\pi_{\theta}} X ] \theta).
\end{align}
Similar to the proof of Theorem~\ref{thm:linear-q} in Section~\ref{sec:proof-of-linear-q},
we can use the Brouwer fixed point theorem to find a $u^*_\eta \in \Gamma_{B_1}$ such that $f(u^*_\eta) = u^*_\eta$ if 
\begin{align}
\label{eq diff q inquality 3}
R_{B_1} \geq C_5
\end{align}
for some constant $C_5$.
Then it is easy to see $u^*_\eta$ is also the fixed point of $u^*(\cdot)$, implying
$\theta_\infty = u^*_\eta$.
Rearranging terms of $u^*_\eta = f(u^*_\eta)$ yields 
\begin{align}
(\bar A_{\pi_w, \mu_w}+\eta I) w - \bar b_{\mu_w} = \tb{0},
\end{align}
and the desired constants $C_0$ and $C_1$ can be deduced from~\eqref{eq diff q inequality 1},~\eqref{eq diff q inequality 2}, \&~\eqref{eq diff q inquality 3}. 
\end{proof}

\section{Convergence of Main Networks}
This section is a collection of several convergence proofs of main networks.
We first state a general convergence result regarding the convergence of time varying linear systems from \citet{konda2002thesis},
which will be repeatedly used.

Consider a stochastic process $\qty{Y_t}$ taking values in a finite space $\Y$.
Let $\qty{P_\theta \in \R^{|\Y| \times |\Y|} \mid \theta \in \R^K}$ be a parameterized family of transition kernels on $\Y$.
We update the parameter $w \in \R^K$ recursively as
\begin{align}
w_{t+1} \doteq w_t + \alpha_t (h_{\theta_t}(Y_t) - G_{\theta_t}(Y_t)w_t),
\end{align}
where $h_\theta: \Y \to \R^K$ and $G_\theta: \Y \to \R^{K \times K}$ are two vector- and matrix-valued functions.
\begin{assumption}
\label{assu:konda-markov}
$\Pr(Y_{t+1} \mid Y_0, w_0, \theta_0, \cdots, Y_t, w_t, \theta_t) = P_{\theta_t}(Y_t, Y_{t+1})$.
\end{assumption}
\begin{assumption}
\label{assu:konda-faster}
The learning rates $\qty{\alpha_t}$ is positive deterministic nonincreasing and satisfies
\begin{align}
\sum_t \alpha_t = \infty, \sum_t \alpha_t^2 < \infty.
\end{align}
\end{assumption}
\begin{assumption}
\label{assu:konda-slower}
The random sequence $\qty{\theta_t}$ satisfies 
\begin{align}
\norm{\theta_{t+1} - \theta_t} \leq \beta_t C,
\end{align}
where $C > 0$ is a constant, $\qty{\beta_t}$ is a deterministic sequence satisfying 
\begin{align}
\sum_t \Big(\frac{\beta_t}{\alpha_t}\Big)^d < \infty.
\end{align}
for some $d > 0$.
\end{assumption}
\begin{assumption}
\label{assu:konda-poisson}
For each $\theta$, 
there exists $\bar{h}(\theta) \in \R^K, \bar{G}(\theta) \in \R^{K \times K}, \hat{h}_\theta: \Y \to \R^K, \hat{G}_\theta: \Y \to \R^{K \times K}$ such that for each $y \in \Y$,
\begin{align}
\hat{h}_\theta(y) &= h_\theta(y) - \bar{h}(\theta) + \sum_{y'} P_\theta(y, y') \hat{h}_\theta(y'),\\
\hat{G}_\theta(y) &= G_\theta(y) - \bar{G}(\theta) + \sum_{y'} P_\theta(y, y') \hat{G}_\theta(y').
\end{align}
\end{assumption}
\begin{assumption}
\label{assu:konda-bounded1}
$\sup_\theta \norm{\bar{h}(\theta)} < \infty, \sup_\theta \norm{\bar{G}(\theta)} < \infty$.
\end{assumption}
\begin{assumption}
\label{assu:konda-bounded2}
For any $f_\theta \in \qty{h_\theta, \hat{h}_\theta, G_\theta, \hat{G}_\theta}$,
$\sup_{\theta, y} \norm{f_{\theta}(y)} < \infty$.
\end{assumption}
\begin{assumption}
\label{assu:konda-lipschitz1}
There exists some constant $C > 0$ such that
\begin{align}
\norm{\bar{h}(\theta) - \bar{h}(\theta')} \leq C \norm{\theta - \theta'}, \norm{\bar{G}(\theta) - \bar{G}(\theta')} \leq C \norm{\theta - \theta'}.
\end{align}
\end{assumption}
\begin{assumption}
\label{assu:konda-lipschitz2}
There exists some constant $C > 0$ such that
for any $f_\theta \in \qty{h_\theta, \hat{h}_\theta, G_\theta, \hat{G}_\theta}$ and $y \in \Y$,
\begin{align}
\norm{f_\theta(y) - f_{\theta'}(y)} \leq C \norm{\theta - \theta'}.
\end{align}
\end{assumption}
\begin{assumption}
\label{assu:konda-pd}
There exists a constant $\zeta > 0$ such that for all $w, \theta$, $$w^\top \bar{G}(\theta)w \geq \zeta \norm{w}^2.$$
\end{assumption}
\begin{theorem}
\label{thm:konda}
(Theorem 3.2 in \citet{konda2002thesis})
Under Assumptions~\ref{assu:konda-markov}-~\ref{assu:konda-pd}, almost surely,
\begin{align}
\lim_{t \to \infty} \norm{\bar{h}(\theta_t) - \bar{G}(\theta_t)w_t} = 0.
\end{align}
\end{theorem}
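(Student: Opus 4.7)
The plan is to show that $w_t$ tracks the ``quasi-static'' solution $w^*(\theta) \doteq \bar{G}(\theta)^{-1} \bar{h}(\theta)$ by controlling the error $e_t \doteq w_t - w^*(\theta_t)$ with a Lyapunov-style argument. Assumption~\ref{assu:konda-pd} gives $\sup_\theta \norm{\bar{G}(\theta)^{-1}} \leq 1/\zeta$, and together with Assumption~\ref{assu:konda-lipschitz1} this makes $w^*$ Lipschitz in $\theta$; combining with Assumption~\ref{assu:konda-slower} yields $\norm{w^*(\theta_{t+1}) - w^*(\theta_t)} = O(\beta_t)$, so the tracking target drifts only on the slow timescale.

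First I would decompose the update into a mean-field part and a Markov-noise part,
\begin{align}
w_{t+1} - w_t = -\alpha_t \bar{G}(\theta_t) e_t + \alpha_t \delta_t,
\end{align}
where $\delta_t \doteq (h_{\theta_t}(Y_t) - \bar{h}(\theta_t)) - (G_{\theta_t}(Y_t) - \bar{G}(\theta_t)) w_t$. Accounting for the slow drift of $w^*(\theta_t)$, the error recursion becomes $e_{t+1} = (I - \alpha_t \bar{G}(\theta_t)) e_t + \alpha_t \delta_t + \rho_t$ with $\norm{\rho_t} = O(\beta_t)$. Squaring and using Assumption~\ref{assu:konda-pd} yields, modulo the usual higher-order terms,
\begin{align}
\norm{e_{t+1}}^2 \leq (1 - 2 \zeta \alpha_t) \norm{e_t}^2 + 2 \alpha_t \braket{e_t}{\delta_t} + O(\alpha_t^2) + O(\beta_t^2 / \alpha_t).
\end{align}

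The main obstacle is taming the cross-term $\sum_t \alpha_t \braket{e_t}{\delta_t}$ in the presence of both Markovian dependence in $Y_t$ and drift in $\theta_t$. This is exactly where the Poisson solutions $\hat{h}_\theta, \hat{G}_\theta$ from Assumption~\ref{assu:konda-poisson} enter: each component of $\delta_t$ can be rewritten as $\hat{f}_{\theta_t}(Y_t) - (P_{\theta_t} \hat{f}_{\theta_t})(Y_t)$, and an Abel-summation-by-parts pairing $(P_{\theta_t} \hat{f}_{\theta_t})(Y_t)$ with $\hat{f}_{\theta_{t+1}}(Y_{t+1})$ produces three pieces: (i) a martingale-difference sequence, whose $\ell^2$-summability follows from Assumptions~\ref{assu:konda-faster} and~\ref{assu:konda-bounded2} so that the martingale convergence theorem applies; (ii) telescoping boundary terms bounded by Assumption~\ref{assu:konda-bounded2}; and (iii) a parameter-drift residual of the form $\hat{f}_{\theta_{t+1}}(Y_{t+1}) - \hat{f}_{\theta_t}(Y_{t+1})$, controlled by Assumption~\ref{assu:konda-lipschitz2} together with $\norm{\theta_{t+1} - \theta_t} \leq C \beta_t$.

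Finally, combining the one-step contraction factor $(1 - 2\zeta \alpha_t)$ with the Robbins-Monro summability of $\{\alpha_t\}$ (Assumption~\ref{assu:konda-faster}) and the two-timescale summability $\sum_t (\beta_t/\alpha_t)^d < \infty$, a Robbins-Siegmund almost-supermartingale argument yields $\norm{e_t} \to 0$ almost surely. Because $\bar{h}(\theta_t) - \bar{G}(\theta_t) w_t = -\bar{G}(\theta_t) e_t$ and $\sup_\theta \norm{\bar{G}(\theta)} < \infty$ by Assumption~\ref{assu:konda-bounded1}, the desired conclusion $\lim_t \norm{\bar{h}(\theta_t) - \bar{G}(\theta_t) w_t} = 0$ follows. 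The genuine difficulty is entirely in the Poisson-equation bookkeeping that separates Markov noise from the slow $\theta$-drift; once that step is in hand, the rest is a standard contraction-plus-supermartingale computation enabled by the two-timescale gap.
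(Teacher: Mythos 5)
This theorem is not proved in the paper at all: it is imported verbatim as Theorem~3.2 of \citet{konda2002thesis}, so there is no in-paper proof to compare against. Your outline is, in architecture, the same argument Konda actually gives for linear two-timescale iterations with Markovian noise: track the quasi-static solution $w^*(\theta)=\bar G(\theta)^{-1}\bar h(\theta)$ (well defined and Lipschitz by Assumptions~\ref{assu:konda-pd} and~\ref{assu:konda-lipschitz1}), kill the Markov noise with the Poisson solutions of Assumption~\ref{assu:konda-poisson} via summation by parts into a martingale part, telescoping boundary terms, and a $\theta$-drift residual controlled by Assumption~\ref{assu:konda-lipschitz2} and $\norm{\theta_{t+1}-\theta_t}\le C\beta_t$, and then close with the uniform positive definiteness of $\bar G$. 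So the plan is sound and correctly identifies where each hypothesis is used.

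Two points deserve repair before this would count as a proof. First, the term you write as $O(\beta_t^2/\alpha_t)$ after Young's inequality is \emph{not} summable under the stated hypotheses: Assumption~\ref{assu:konda-slower} only gives $\sum_t(\beta_t/\alpha_t)^d<\infty$ for \emph{some} $d>0$, and for large $d$ one can have $\sum_t \alpha_t(\beta_t/\alpha_t)^2=\infty$ while $\sum_t\alpha_t^2<\infty$ holds (e.g.\ $\beta_t/\alpha_t\sim t^{-1/50}$, $\alpha_t\sim t^{-0.51}$). The Robbins--Siegmund route therefore does not close as written. The standard fix within your framework is to keep the drift as $\alpha_t\cdot O\qty(\beta_t/\alpha_t)\cdot(1+\norm{e_t})$ and use only $\beta_t/\alpha_t\to 0$, i.e.\ treat it as an asymptotically vanishing bias in the contraction recursion $u_{t+1}\le(1-2\zeta\alpha_t)u_t+\alpha_t\epsilon_t$ with $\epsilon_t\to 0$, rather than forcing it into a summable term; the full strength of $\sum_t(\beta_t/\alpha_t)^d<\infty$ is needed elsewhere (for the almost-sure, Borel--Cantelli part of Konda's argument). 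Second, you implicitly assume $\sup_t\norm{w_t}<\infty$: the noise $\delta_t$ and your ``higher-order terms'' all carry factors of $\norm{w_t}$, and boundedness is a conclusion of Konda's theorem, not a hypothesis. This is recoverable because Assumption~\ref{assu:konda-bounded2} makes every perturbation of the form $(\text{summable})\times(1+\norm{e_t}^2)$, so the same supermartingale inequality yields boundedness and convergence simultaneously, but that verification is where most of the real work in Konda's proof lives and should be made explicit.
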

We start with the convergence of the main network in $Q$-evaluation with a Target Network (Algorithm~\ref{alg:off-policy-td}).
\begin{lemma}
\label{lem:main-of-td}
Almost surely, $\lim_{t\to \infty} \norm{w_t - w^*(\theta_t)} = 0$.
\end{lemma}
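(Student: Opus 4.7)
The plan is to invoke Theorem~\ref{thm:konda} directly with the identifications already laid out in the proof of Theorem~\ref{thm:op-td}: take $Y_t = (S_t, A_t, S_{t+1})$, $h_\theta$ and $G_\theta$ as in~\eqref{eq:app-of-td-reward}, and $\bar h, \bar G$ as in~\eqref{eq:app-of-td-gain}. Under Assumption~\ref{assu:op-td-ergodic} the chain $\{Y_t\}$ is ergodic on a finite state space with a unique stationary distribution $d_\Y$, and its transition kernel is independent of $\theta$ in the policy-evaluation setting. Thus $\bar G(\theta_t) w^*(\theta_t) = \bar h(\theta_t)$ by definition, and Theorem~\ref{thm:konda} will give $\|\bar h(\theta_t) - \bar G(\theta_t) w_t\| \to 0$ almost surely; since $\bar G(\theta) = X^\top D_\mu X + \eta I$ is uniformly invertible with $\|\bar G(\theta)^{-1}\| \leq 1/\eta$, this upgrades to $\|w_t - w^*(\theta_t)\| \to 0$. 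The only work, therefore, is to verify Assumptions~\ref{assu:konda-markov}--\ref{assu:konda-pd}.

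Assumption~\ref{assu:konda-markov} is immediate because $Y_t$ evolves according to $\mu$ and $p$, independently of $\theta_t$ and $w_t$. Assumption~\ref{assu:konda-faster} is exactly Assumption~\ref{assu target net all lrs}. For Assumption~\ref{assu:konda-slower} I would combine Lemma~\ref{lem target net changes slowly}, which gives $\|\theta_{t+1} - \theta_t\| \leq \beta_t C_0$, with the summability $\sum_t (\beta_t/\alpha_t)^d < \infty$ from Assumption~\ref{assu target net all lrs}. Assumption~\ref{assu:konda-pd} is direct since $\bar G(\theta) = X^\top D_\mu X + \eta I \succeq \eta I$, so $w^\top \bar G(\theta) w \geq \eta \|w\|^2$.

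The remaining assumptions concern boundedness and Lipschitz continuity in $\theta$. Because $h_\theta$ and $G_\theta$ depend on $\theta$ only through $\Gamma_{B_1}(\theta)$, which satisfies $\|\Gamma_{B_1}(\theta)\| \leq R_{B_1}$ and is $1$-Lipschitz, and because $y = (s,a,s')$ ranges over a finite set so that $h_\theta(y)$, $G_\theta(y)$ are affine in $\Gamma_{B_1}(\theta)$ with uniformly bounded coefficients, Assumptions~\ref{assu:konda-bounded1}, \ref{assu:konda-bounded2}, \ref{assu:konda-lipschitz1} and~\ref{assu:konda-lipschitz2} for the ``bar'' and raw quantities follow from elementary estimates.

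The one step that requires care is Assumption~\ref{assu:konda-poisson}: for each $\theta$ one must solve the Poisson equations defining $\hat h_\theta$ and $\hat G_\theta$, and then verify that these solutions are themselves uniformly bounded and Lipschitz in $\theta$. This is the main (mild) obstacle. Since $\{Y_t\}$ is a finite ergodic Markov chain whose kernel is independent of $\theta$, standard Markov chain theory gives an explicit solution $\hat h_\theta(y) = \sum_{k=0}^{\infty}(P^k h_\theta(y) - \bar h(\theta))$ (and similarly for $\hat G_\theta$), with geometric convergence controlled by the mixing rate of $P$. Since the sum of geometric terms commutes with the $\theta$-dependence that is affine through $\Gamma_{B_1}(\theta)$, the uniform bounds in Assumption~\ref{assu:konda-bounded2} and the Lipschitz bounds in Assumption~\ref{assu:konda-lipschitz2} follow from the corresponding bounds on $h_\theta$ and $G_\theta$ themselves. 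With all hypotheses in hand, Theorem~\ref{thm:konda} yields the claim.
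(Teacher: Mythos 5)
Your proposal is correct and follows essentially the same route as the paper: the paper's own proof of this lemma simply defers to the proof of Lemma~\ref{lem:main-linear-q}, which verifies Assumptions~\ref{assu:konda-markov}--\ref{assu:konda-pd} and invokes Theorem~\ref{thm:konda} exactly as you do, with your case being simpler because the transition kernel of $Y_t$ does not depend on $\theta$. Your final step of upgrading $\norm{\bar h(\theta_t) - \bar G(\theta_t) w_t} \to 0$ to $\norm{w_t - w^*(\theta_t)} \to 0$ via the uniform bound $\norm{\bar G(\theta)^{-1}} \leq 1/\eta$ is also the right (and needed) observation.
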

\begin{proof}
This proof is a special case of the proof of Lemma~\ref{lem:main-linear-q}.
We, therefore, omit it to avoid verbatim repetition.
\end{proof}
We then show the convergence of the main network in Differential $Q$-evaluation with a Target Network (Algorithm~\ref{alg:differential-off-policy-td}).
\begin{lemma}
\label{lem:main-diff-of-td}
Almost surely, $\lim_{t\to \infty} \norm{u_t - u^*(\theta_t)} = 0$.
\end{lemma}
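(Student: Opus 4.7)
The plan is to invoke Theorem~\ref{thm:konda} with $Y_t \doteq (S_t, A_t, S_{t+1})$ and with $h_\theta, G_\theta, \bar h, \bar G$ exactly as defined in the proof sketch of Theorem~\ref{thm:diff-op-td}. Since Algorithm~\ref{alg:differential-off-policy-td} uses a fixed behavior policy $\mu$ (independent of $\theta$), the process $\qty{Y_t}$ is a time-homogeneous Markov chain on the finite space $\mathcal{S} \times \mathcal{A} \times \mathcal{S}$ with a single transition kernel $P$, verifying Assumption~\ref{assu:konda-markov} trivially (the family $\qty{P_\theta}$ is constant in $\theta$). Assumption~\ref{assu:op-td-ergodic} ensures this chain is ergodic, hence admits a unique stationary distribution $d_\Y$.

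The key structural observation is that $h_\theta$ and $G_\theta$ depend on $\theta$ \emph{only through $\Gamma_{B_1}(\theta)$, and linearly}: $h_\theta$ is an affine function of $\Gamma_{B_1}(\theta)$, and $G_\theta$ is in fact independent of $\theta$. Combined with the fact that $\Gamma_{B_1}$ is nonexpansive and takes values in the bounded set $B_1$, this immediately gives uniform boundedness of $h_\theta(y), G_\theta(y), \bar h(\theta), \bar G(\theta)$ in $(\theta, y)$ (Assumptions~\ref{assu:konda-bounded1}, \ref{assu:konda-bounded2}) and global Lipschitz continuity in $\theta$ (Assumptions~\ref{assu:konda-lipschitz1}, \ref{assu:konda-lipschitz2}). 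For Assumption~\ref{assu:konda-poisson}, since the state space is finite and the chain ergodic, the Poisson equation admits solutions $\hat h_\theta, \hat G_\theta$ given explicitly by the fundamental matrix $(I - P + \tb{1} d_\Y^\top)^{-1}$; as this operator does not depend on $\theta$ and acts linearly on $h_\theta - \bar h(\theta)$ and $G_\theta - \bar G(\theta)$, the solutions inherit the uniform boundedness and Lipschitz properties required.

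Assumption~\ref{assu:konda-pd} is the cleanest: by its block-diagonal form,
\begin{align}
\bar G(\theta) = \begin{bmatrix} 1 & \tb{0}^\top \\ \tb{0} & X^\top D_\mu X + \eta I \end{bmatrix},
\end{align}
whose smallest eigenvalue is at least $\min\qty{1, \eta + \lambda_{\min}(X^\top D_\mu X)} > 0$ uniformly in $\theta$, where positivity uses Assumptions~\ref{assu full rank X} and~\ref{assu:op-td-ergodic}. The two-timescale conditions are handled directly: Assumption~\ref{assu:konda-faster} is Assumption~\ref{assu target net all lrs} on $\qty{\alpha_t}$, while Assumption~\ref{assu:konda-slower} follows by combining Lemma~\ref{lem target net changes slowly} (which yields $\norm{\theta_{t+1} - \theta_t} \leq \beta_t C_0$ thanks to the dual projection in~\eqref{eq general target net update}) with the summability $\sum_t (\beta_t/\alpha_t)^d < \infty$ from Assumption~\ref{assu target net all lrs}.

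Applying Theorem~\ref{thm:konda} then yields $\lim_{t \to \infty} \norm{\bar h(\theta_t) - \bar G(\theta_t) u_t} = 0$ almost surely. Since $\bar G(\theta)^{-1}$ is uniformly bounded in $\theta$ by the eigenvalue estimate above, we can multiply through by $\bar G(\theta_t)^{-1}$ to conclude $\lim_{t \to \infty} \norm{u_t - \bar G(\theta_t)^{-1} \bar h(\theta_t)} = \lim_{t \to \infty} \norm{u_t - u^*(\theta_t)} = 0$ almost surely. The only nontrivial conceptual step is the verification of the slow-variation Assumption~\ref{assu:konda-slower}, which is precisely why the second projection $\Gamma_{B_2}$ in~\eqref{eq general target net update} was introduced (via Lemma~\ref{lem target net changes slowly}); all remaining hypotheses reduce to routine checks that exploit the linearity of $h_\theta, G_\theta$ in $\Gamma_{B_1}(\theta)$ and the $\theta$-independence of the underlying Markov chain.
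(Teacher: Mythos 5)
Your proposal is correct and follows essentially the same route as the paper: the paper proves this lemma by deferring (via Lemma~\ref{lem:main-of-td}) to the verification of Assumptions~\ref{assu:konda-markov}--\ref{assu:konda-pd} carried out in the proof of Lemma~\ref{lem:main-linear-q}, and then invoking Theorem~\ref{thm:konda}, which is exactly what you do. Your write-up is in fact a clean specialization of that argument, correctly exploiting the simplifications available here (fixed behavior policy, hence a $\theta$-independent kernel and fundamental matrix; affine dependence of $h_\theta$ on the bounded, nonexpansive $\Gamma_{B_1}(\theta)$; block-diagonal $\bar G$), including the final step of passing from $\norm{\bar h(\theta_t)-\bar G(\theta_t)u_t}\to 0$ to $\norm{u_t-u^*(\theta_t)}\to 0$ via the uniform bound on $\bar G(\theta)^{-1}$.
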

\begin{proof}
This proof is the same as the proof of Lemma~\ref{lem:main-of-td} up to change of notations.
We, therefore, omit it to avoid verbatim repetition.
\end{proof}

We now proceed to show the convergence of the main network in $Q$-learning with a Target Network (Algorithm~\ref{alg:linear-q}).
In Lemma~\ref{lem:main-linear-q} and its proof, 
we continue using the notations in Theorem~\ref{thm:linear-q} and its proof (Section~\ref{sec:proof-of-linear-q}).
\begin{lemma}
\label{lem:main-linear-q}
Almost surely, $\lim_{t\to \infty} \norm{w_t - w^*(\theta_t)} = 0$.
\end{lemma}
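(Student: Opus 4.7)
The plan is to recognize the update
$w_{t+1} = w_t + \alpha_t(h_{\theta_t}(Y_t) - G_{\theta_t}(Y_t) w_t)$
as exactly the time-varying linear stochastic approximation scheme covered by Theorem~\ref{thm:konda}. Once that theorem delivers $\lim_{t\to\infty}\norm{\bar h(\theta_t) - \bar G(\theta_t) w_t} = 0$, the conclusion follows immediately by noting that $\bar G(\theta) = X^\top D_{\mu_\theta} X + \eta I \succeq \eta I$, so
$\norm{w_t - w^*(\theta_t)} = \norm{\bar G(\theta_t)^{-1}(\bar G(\theta_t) w_t - \bar h(\theta_t))} \leq \eta^{-1}\norm{\bar G(\theta_t)w_t - \bar h(\theta_t)} \to 0$ almost surely. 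Thus the whole proof reduces to verifying Assumptions~\ref{assu:konda-markov}--\ref{assu:konda-pd}.

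The straightforward checks come first. Take $Y_t \doteq (S_t, A_t, S_{t+1})$ on the finite space $\mathcal{Y} \doteq \mathcal{S}\times \mathcal{A}\times\mathcal{S}$ with transition kernel $P_\theta$ induced by the behavior policy $\mu_\theta$; Assumption~\ref{assu:konda-markov} is then built into the MDP dynamics. Assumption~\ref{assu:konda-faster} is Assumption~\ref{assu target net all lrs}, and Assumption~\ref{assu:konda-slower} follows by combining Lemma~\ref{lem target net changes slowly} (which gives $\norm{\theta_{t+1}-\theta_t}\leq \beta_t C_0$) with the ratio condition $\sum_t(\beta_t/\alpha_t)^d<\infty$ in Assumption~\ref{assu target net all lrs}. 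Since $\mathcal{Y}$ is finite and $\theta$ enters $h_\theta$ only through $\bar\theta = \Gamma_{B_1}(\theta)\in B_1$, both $h_\theta(y)$ and $G_\theta(y)$ are uniformly bounded in $(\theta,y)$; moreover $G_\theta$ does not depend on $\theta$ at all, and $h_\theta(s,a,s')$ is Lipschitz in $\theta$ because $\max_{a'} x(s',a')^\top \bar\theta$ is $\max_{a'}\norm{x(s',a')}$-Lipschitz and $\Gamma_{B_1}$ is nonexpansive, yielding Assumption~\ref{assu:konda-lipschitz2} for $h_\theta, G_\theta$. Positive definiteness (Assumption~\ref{assu:konda-pd}) is immediate with constant $\zeta = \eta$.

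The main obstacle is Assumption~\ref{assu:konda-poisson}, together with the boundedness and Lipschitz continuity (Assumptions~\ref{assu:konda-bounded1}, \ref{assu:konda-bounded2}, \ref{assu:konda-lipschitz1}) of the Poisson solutions $\hat h_\theta, \hat G_\theta$ and the averaged objects $\bar h(\theta), \bar G(\theta)$. Here I would use Assumption~\ref{assu:control-ergodic}: every $P\in\fP$, the closure of $\{P_{\mu_\theta}\}$, induces an ergodic chain on the finite space $\mathcal{S}\times\mathcal{A}$, and by a standard compactness argument this upgrades to \emph{uniform} ergodicity on $\fP$ (a uniform mixing rate and a uniform lower bound on the spectral gap). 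Uniform ergodicity gives the Poisson equation a bounded solution $\hat h_\theta(y) = \sum_{k\geq 0}\sum_{y'} (P_\theta^k(y,y') - d_\theta(y'))(h_\theta(y') - \bar h(\theta))$ with $\sup_{\theta,y}\norm{\hat h_\theta(y)}<\infty$, and analogously for $\hat G_\theta$. For Lipschitz continuity, Assumption~\ref{assu:control-parameterization} makes $\theta\mapsto P_\theta$ Lipschitz (through $X_s\theta$), so the stationary distribution $d_{\mu_\theta}$ and the resolvent $\sum_{k\geq 0}(P_\theta^k - \tb{1} d_{\mu_\theta}^\top)$ are both Lipschitz in $\theta$ on compact sets by standard perturbation theory for uniformly ergodic chains; combined with the Lipschitz dependence of $h_\theta(y)$ on $\theta$, this yields the required Lipschitz bounds for $\bar h, \bar G, \hat h_\theta, \hat G_\theta$. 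This perturbation step is the technically delicate part; once it is in hand, all hypotheses of Theorem~\ref{thm:konda} are satisfied and the lemma follows.
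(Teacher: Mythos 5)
Your proposal is correct and follows essentially the same route as the paper: reduce to Theorem~\ref{thm:konda} by verifying Assumptions~\ref{assu:konda-markov}--\ref{assu:konda-pd}, with the Poisson solutions obtained from the ergodicity of $P_\theta$ and their uniform boundedness/Lipschitzness extracted from the compactness of $\fP$ in Assumption~\ref{assu:control-ergodic} (the paper does this explicitly via the fundamental matrix $(I - P_\theta + \tb{1}d_{\Y,\theta}^\top)^{-1}(I-\tb{1}d_{\Y,\theta}^\top)$ and Lemma~\ref{lem ergodic dist lipschitz}, where you appeal to standard perturbation theory for uniformly ergodic chains). Your explicit final step $\norm{w_t - w^*(\theta_t)} \leq \eta^{-1}\norm{\bar h(\theta_t) - \bar G(\theta_t)w_t} \to 0$ is a nice touch that the paper leaves implicit.
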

\begin{proof}
We proceed by verifying Assumptions~\ref{assu:konda-markov}-~\ref{assu:konda-pd}.
Let $Y_t \doteq (S_t, A_t, S_{t+1})$ and 
\begin{align}
\Y \doteq \qty{(s, a, s') \mid s \in \mathcal{S}, a \in \mathcal{A}, s' \in \mathcal{S}, p(s'|s, a) > 0}.
\end{align}
Let $y_1 = (s_1, a_1, s_1'), y_2 = (s_2, a_2, s_2')$,
we define $P_\theta \in \R^{|\Y| \times |\Y|}$ as 
\begin{align}
\label{eq:app-linear-q-transition-kernel}
P_\theta(y_1, y_2) \doteq \mathbb{I}_{s_1' = s_2} \mu_\theta(a_2|s_2) p(s_2'|s_2, a_2).
\end{align}
Recall in Section~\ref{sec:proof-of-linear-q}, we define
\begin{align}
\bar{\theta} &\doteq \Gamma_{B_1}(\theta), \\
h_\theta(s, a, s') &\doteq \big( r(s, a) + \gamma \max_{a'} x(s', a')^\top \bar{\theta} \big) x(s, a), \\
G_\theta(s, a, s') &\doteq x(s, a) x(s, a)^\top + \eta I, \\
\bar{h}(\theta) &\doteq \E_{(s, a)\sim d_{\mu_\theta}, s' \sim p(\cdot|s, a)}[h_\theta(s, a, s')] \\
& = X^\top D_{\mu_\theta} r + \gamma X^\top D_{\mu_\theta} P_{\pi_{\bar{\theta}}} X \bar{\theta}, \\
\bar{G}(\theta) &\doteq \E_{(s, a)\sim d_{\mu_\theta}, s' \sim p(\cdot|s, a)}[G_\theta(s, a, s')] \\
& = X^\top D_{\mu_\theta} X + \eta I.
\end{align}
According to Algorithm~\ref{alg:linear-q}, 
we have
\begin{align}
\Pr(Y_{t+1} | Y_0, w_0, \theta_0, \dots, Y_t, w_t, \theta_t) &= \Pr(S_{t+1}, A_{t+1}, S_{t+2} | S_t, A_t, S_{t+1}, \theta_t) \\
&= \mu_{\theta_t}(A_{t+1} | S_{t+1}) p(S_{t+2} | S_{t+1}, A_{t+1}) \\
&= P_{\theta_t}(Y_t, Y_{t+1}).
\end{align}
Assumption~\ref{assu:konda-markov} therefore holds.
Assumptions \ref{assu:konda-faster} \& \ref{assu:konda-slower} are satisfied automatically by Assumptions~\ref{assu target net lr},~\ref{assu target net all lrs}, and Lemma~\ref{lem target net changes slowly}.

Consider a Markov Reward Process (MRP) in $\Y$ with transition kernel $P_\theta$ and reward function $h_\theta(y)_i$, the $i$-th element of $h_\theta(y)$ (we need an MRP for each $i$).
The stationary distribution of this MRP is $d_{\Y, \theta}(s, a, s') = d_{\mu_\theta}(s, a) p(s'|s, a)$.
The reward rate 
of this MRP is $\sum_y d_{\Y, \theta}(y) h_\theta(y)_i = \bar{h}(\theta)_i$,
the $i$-the element of $\bar{h}(\theta)$.
We define 
\begin{align}
\hat{h}_\theta(y) \doteq \E[\sum_{k=0}^\infty h_\theta(Y_k) - \bar{h}(\theta) | Y_0 = y, P_\theta].
\end{align}
By definition, $\hat{h}_\theta(y)_i$ is the differential value function of this MRP. 
The differential Bellman equation~\eqref{eq differential bellman} then implies the first equation in Assumption~\ref{assu:konda-poisson} holds.
Similarly, we define
\begin{align}
\hat{G}_\theta(y) \doteq \E[\sum_{k=0}^\infty G_\theta(Y_k) - \bar{G}(\theta) | Y_0 = y, P_\theta],
\end{align}
the second equation in Assumption~\ref{assu:konda-poisson} holds as well.

Assumption~\ref{assu:konda-bounded1} follows directly from the definition of $\bar{h}(\theta)$ and $\bar{G}(\theta)$.
The boundedness of $h_\theta(y)$ and $G_\theta(y)$ in Assumption~\ref{assu:konda-bounded2} follows directly from their definitions. 
The differential value function of the MRP can be equivalently expressed as (see, e.g, Eq (8.2.2) in \citet{puterman2014markov})
\begin{align}
\label{eq:app-of-td-gain-bias}
\hat{h}_{\theta,i} &= (I - P_\theta + \tb{1}d_{\Y, \theta}^\top)^{-1}(I - \tb{1}d_{\Y, \theta}^\top) h_{\theta, i}, \\
\hat{G}_{\theta,ij} &= (I - P_\theta + \tb{1}d_{\Y, \theta}^\top)^{-1}(I - \tb{1}d_{\Y, \theta}^\top) G_{\theta, ij},
\end{align}
where $h_{\theta, i}$ denotes the vector in $\R^{|\Y|}$ whose $y$-th element is $h_{\theta}(y)_i$ and $\hat{h}_{\theta, i}, \hat{G}_{\theta, ij}, G_{\theta, ij}$ are similarly defined as $h_{\theta, i}$.
Note the existence of the matrix inversion results directly from the ergodicity of the chain (see, e.g., Section A.5 in the appendix of \citet{puterman2014markov}).
To show the boundedness of $\hat h_\theta(y)$ and $\hat G_\theta(y)$ required in Assumption~\ref{assu:konda-bounded2},
it suffices to show the boundedness of $(I - P_\theta + \tb{1}d_{\Y, \theta}^\top)^{-1}(I - \tb{1}d_{\Y, \theta}^\top)$.
The intuition is simple:
it is a continuous function in a compact set $\fP$ (defined in Assumption \ref{assu:control-ergodic}), 
so it obtains its maximum.
We now formalize this intuition.
We first define a function $f: \fP \to \R^{|\Y| \times |\Y|}$ that maps a transition matrix in $\mathcal{S} \times \mathcal{A}$ to a transition matrix in $\Y$.
For $P \in \fP, y_1 \in \Y, y_2 \in \Y$, 
we have $f(P)(y_1, y_2) \doteq \mathbb{I}_{s_1' = s_2} P((s_1, a_1), (s_2, a_2)) p(s_2' | s_2, a_2) / p(s_1'|s_1, a_1)$.
Let $\fP^\Y \doteq \qty{f(P) \mid P \in \fP}$.
As $f$ is continuous and $\fP$ is compact, 
it is easy to see $\fP^\Y$ is also compact.
It is easy to verify that $f(P)$ is a valid transition kernel in $\Y$.
As Assumption~\ref{assu:control-ergodic} asserts $P$ is ergodic,
we use $d_{P}$ to denote the stationary distribution of the chain induced by $P$.
Let $d_{f(P)}(s, a, s') \doteq d_{P}(s, a)p(s'|s, a)$,
it is easy to verify that $d_{f(P)}$ is the stationary distribution of the chain induced by $f(P)$,
from which it is easy to see the chain in $\Y$ induced by $f(P)$ is also ergodic, i.e.,
any $P' \in \fP^\Y$ is ergodic.
Consider the function $g(P') \doteq (I - P' + \textbf{1} d_{P'}^\top)^{-1}(I - \textbf{1} d_{P'}^\top)$.
The ergodicity of $P'$ ensures $g(P')$ is well defined.
As $\fP^\Y$ is compact and $g$ is continuous,
$g$ attains its maximum in $\fP^\Y$,
say, e.g., $U_g$.
For any $\theta \in \R^K$, 
it is easy to verify that $P_\theta = f(P_{\mu_\theta}) \in \fP^\Y$,
where $P_\theta$ is defined in~\eqref{eq:app-linear-q-transition-kernel}.
Consequently, $\norm{g(P_\theta)} \leq U_g$.
The boundedness of $\hat{h}_\theta(y)$ and $\hat{G}_\theta(y)$ then follows directly from the boundedness of $h_\theta(y)$ and $G_\theta(y)$.
Assumption~\ref{assu:konda-bounded2} therefore holds.

The Lipschitz continuity of $h_\theta(y), \bar{h}(\theta), G_\theta(y), \bar{G}(\theta)$ in $\theta$ follows directly from the Lipschitz continuity of $D_{\mu_\theta}$ (Lemma~\ref{lem ergodic dist lipschitz}), 
the Lipschitz continuity of $P_{\pi_{\bar{\theta}}}X \bar{\theta}$ (the proof of Lemma~\ref{lem linear q lipschitz}), and Lemma~\ref{lem:product-lipschitz}.
We can easily show the boundedness of $(I - P_\theta + \tb{1}d_{\Y, \theta}^\top)^{-1}$ similar to the boundedness of $(I - P_\theta + \tb{1}d_{\Y, \theta}^\top)^{-1}(I - \tb{1}d_{\Y, \theta}^\top)$.
The Lipschitz continuity of $\hat{h}_\theta(y)$ and $\hat{G}_\theta(y)$ in $\theta$ is then an exercise of Lemmas~\ref{lem:product-lipschitz} \&~\ref{lem:inverse-lipschitz}.
Assumptions~\ref{assu:konda-lipschitz1} \&~\ref{assu:konda-lipschitz2} are then satisfied.

Assumptions~\ref{assu:konda-pd} follows directly from the positive definiteness of $\bar{G}(\theta)$.

With Assumptions~\ref{assu:konda-markov}-~\ref{assu:konda-pd} satisfied, 
invoking Theorem~\ref{thm:konda} completes the proof. 
\end{proof}

We now show the convergence of the main network in Gradient $Q$-learning with a Target Network (Algorithm~\ref{alg:gradient-q}).
In Lemma~\ref{lem:main-gradient-q} and its proof, 
we continue using the notations in Theorem~\ref{thm:gradient-q} and its proof (Section~\ref{sec:proof-of-gradient-q}).
\begin{lemma}
\label{lem:main-gradient-q}
Almost surely, $\lim_{t\to \infty} \norm{w_t - w^*(\theta_t)} = 0$.
\end{lemma}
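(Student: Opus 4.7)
The plan is to apply Theorem~\ref{thm:konda} to the joint iterate $\kappa_t = [u_t^\top, w_t^\top]^\top$, following the same template as the proof of Lemma~\ref{lem:main-linear-q}. I will set $Y_t \doteq (S_t, A_t, S_{t+1})$ and equip $\mathcal{Y}$ with the parameterized kernel $P_\theta(y_1, y_2) \doteq \mathbb{I}_{s_1' = s_2}\mu_\theta(a_2\mid s_2)p(s_2'\mid s_2, a_2)$, exactly as in~\eqref{eq:app-linear-q-transition-kernel}. Assumption~\ref{assu:konda-markov} then follows from the update rule of Algorithm~\ref{alg:gradient-q}, and Assumptions~\ref{assu:konda-faster}--\ref{assu:konda-slower} are immediate from Assumptions~\ref{assu target net lr},~\ref{assu target net all lrs} and Lemma~\ref{lem target net changes slowly}.

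For Assumption~\ref{assu:konda-poisson}, I will interpret $\hat h_\theta$ and $\hat G_\theta$ entrywise as differential value functions of Markov reward processes on $\mathcal{Y}$ with kernel $P_\theta$ and one-step rewards given by the entries of $h_\theta$ and $G_\theta$ respectively. The compactness of $\mathcal{P}$ and Assumption~\ref{assu:control-ergodic} give a uniform bound on $(I - P_\theta + \tb{1}d_{\mathcal{Y},\theta}^\top)^{-1}(I - \tb{1}d_{\mathcal{Y},\theta}^\top)$, from which Assumptions~\ref{assu:konda-bounded1}--\ref{assu:konda-bounded2} follow exactly as in the proof of Lemma~\ref{lem:main-linear-q}. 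The Lipschitz conditions (Assumptions~\ref{assu:konda-lipschitz1}--\ref{assu:konda-lipschitz2}) come from Lemma~\ref{lem ergodic dist lipschitz} together with Assumption~\ref{assu target policy continuous}; notably, Assumption~\ref{assu target policy continuous} makes the current analysis slightly easier than the linear $Q$-learning case because $\pi_\theta$ is now Lipschitz, so $A_{\pi_\theta, \mu_\theta}$ is Lipschitz as a product of Lipschitz factors via Lemma~\ref{lem:product-lipschitz}, and Lemma~\ref{lem:inverse-lipschitz} handles the matrix inverses.

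The only genuinely new step, and the one I expect to be the main obstacle, is verifying the uniform positive-definiteness Assumption~\ref{assu:konda-pd}, because $\bar G(\theta)$ is \emph{not} symmetric:
\begin{align}
\bar G(\theta) = \mqty[ C_{\mu_\theta} & A_{\pi_\theta, \mu_\theta} \\ -A_{\pi_\theta, \mu_\theta}^\top & \eta I ].
\end{align}
The key observation is that the off-diagonal block is skew-symmetric, so for any $\kappa = [u^\top, w^\top]^\top$ the cross terms cancel:
\begin{align}
\kappa^\top \bar G(\theta)\kappa = u^\top C_{\mu_\theta}u + u^\top A_{\pi_\theta, \mu_\theta} w - w^\top A_{\pi_\theta, \mu_\theta}^\top u + \eta \norm{w}^2 = u^\top C_{\mu_\theta}u + \eta \norm{w}^2.
\end{align}
Assumption~\ref{assu full rank X} and the uniform ergodicity from Assumption~\ref{assu:control-ergodic} together with compactness of $\mathcal{P}$ give $\inf_\theta \sigma_{\min}(C_{\mu_\theta}) \geq c > 0$, so $\kappa^\top \bar G(\theta)\kappa \geq \min(c,\eta)\norm{\kappa}^2$, verifying Assumption~\ref{assu:konda-pd}.

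With all assumptions of Theorem~\ref{thm:konda} in place, I conclude that $\lim_{t\to\infty}\norm{\bar h(\theta_t) - \bar G(\theta_t)\kappa_t} = 0$ almost surely, which by the block structure of $\bar G(\theta)$ and $\bar h(\theta)$ and the definition of $w^*(\theta)$ in~\eqref{eq w star graident q} implies $\lim_{t\to\infty} \norm{w_t - w^*(\theta_t)} = 0$ almost surely, as required.
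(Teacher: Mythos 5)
Your proposal is correct and follows essentially the same route as the paper: reduce to Theorem~\ref{thm:konda} on the joint iterate $\kappa_t = [u_t^\top, w_t^\top]^\top$, verify Assumptions~\ref{assu:konda-markov}--\ref{assu:konda-lipschitz2} exactly as in the proof of Lemma~\ref{lem:main-linear-q}, and handle the only new point --- Assumption~\ref{assu:konda-pd} for the non-symmetric $\bar G(\theta)$ --- by noting the cross terms cancel so that $\kappa^\top \bar G(\theta)\kappa = u^\top C_{\mu_\theta}u + \eta\norm{w}^2$, with the uniform lower bound on $u^\top C_{\mu_\theta}u$ obtained from compactness of $\fP$ and Assumptions~\ref{assu full rank X} and~\ref{assu:control-ergodic}. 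This is precisely the paper's argument.
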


\begin{proof}
We proceed by verifying Assumptions~\ref{assu:konda-markov}-\ref{assu:konda-pd}.
Assumptions~\ref{assu:konda-markov}-\ref{assu:konda-lipschitz2} can be verified in the same way as the proof of Lemma~\ref{lem:main-linear-q}.
We, therefore, omit it to avoid verbatim repetition. 
For Assumption~\ref{assu:konda-pd},
consider $\kappa \doteq [u^\top, w^\top]^\top$,
we have
\begin{align}
\kappa^\top \bar{G}(\theta) \kappa = u^\top C_{\mu_\theta} u + \eta \norm{w}^2,
\end{align}
where $C_{\mu_\theta} = X^\top D_{\mu_\theta} X$.
We first show that 
there exists a constant $C_1 > 0$ such that
$u^\top C_{\mu_\theta} u \geq C_1 \norm{u}^2$ holds for any $u$ and $\theta$,
which is equivalent to showing for some $C_1 > 0$,
\begin{align}
u^\top C_{\mu_\theta} u \geq C_1 
\end{align}
holds for any $u \in \mathcal{U} \doteq \qty{u\mid\norm{u} = 1}$ and $\theta \in \R^K$.
Consider $\fP$ defined in \ref{assu:control-ergodic}.
For any $P \in \fP$, 
we use $D_P$ to denote a diagonal matrix whose diagonal term is the stationary distribution of the chain induced by $P$.
Assumptions~\ref{assu full rank X} \&~\ref{assu:control-ergodic} then ensure that $X^\top D_P X$ is positive definite,
i.e.,
\begin{align}
u^\top X^\top D_P X u > 0
\end{align}
holds for any $u \in \mathcal{U}$ and $P \in \fP$.
As $u^\top X^\top D_P X u$ is a continuous function in both $u$ and $P$,
it obtains its minimum in the compact set $\mathcal{U} \times \fP$,
say, e.g., $C_1 > 0$,
i.e.
\begin{align}
u^\top X^\top D_P X u \geq C_1
\end{align}
holds for any $u \in \mathcal{U}$ and $P \in \fP$.
It is then easy to see
\begin{align}
u^\top C_{\mu_\theta} u \geq C_1 \norm{u}^2
\end{align}
holds for any $u \in \R^K$ and $\theta \in \R^K$,
implying
\begin{align}
\kappa^\top \bar{G}(\theta) \kappa \geq \min\qty{\eta, C_1} \norm{\kappa}^2.
\end{align}
Assumption~\ref{assu:konda-pd}, therefore, holds.
Theorem~\ref{thm:konda} then implies that
\begin{align}
\lim_{t \to \infty} \norm{\kappa_t - \bar G(\theta_t)^{-1} \bar h(\theta_t) } = 0,
\end{align}
which completes the proof.
\end{proof}

We now show the convergence of the main network in Differential $Q$-learning with a Target Network (Algorithm~\ref{alg:diff-linear-q}).
\begin{lemma}
\label{lem:main-diff-linear-q}
Almost surely, $\lim_{t\to \infty} \norm{u_t - u^*(\theta_t)} = 0$.
\end{lemma}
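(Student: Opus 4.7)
The plan is to invoke Theorem~\ref{thm:konda} in exactly the spirit of Lemmas~\ref{lem:main-of-td}, \ref{lem:main-diff-of-td}, and \ref{lem:main-linear-q}. Set $Y_t \doteq (S_t, A_t, S_{t+1})$ on $\Y \doteq \{(s,a,s') \mid p(s'|s,a) > 0\}$ with parameterized kernel
\begin{align}
P_\theta(y_1, y_2) \doteq \mathbb{I}_{s_1' = s_2}\, \mu_{\theta^w}(a_2 \mid s_2)\, p(s_2' \mid s_2, a_2),
\end{align}
so that Algorithm~\ref{alg:diff-linear-q} becomes $u_{t+1} = u_t + \alpha_t(h_{\theta_t}(Y_t) - G_{\theta_t}(Y_t)u_t)$, with $h_\theta, G_\theta$ as defined in the proof of Theorem~\ref{thm:diff-linear-q}. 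Assumption~\ref{assu:konda-markov} is immediate from this construction, Assumption~\ref{assu:konda-faster} is Assumption~\ref{assu target net all lrs}, and Assumption~\ref{assu:konda-slower} combines Lemma~\ref{lem target net changes slowly} with Assumption~\ref{assu target net all lrs}.

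The regularity Assumptions~\ref{assu:konda-poisson}--\ref{assu:konda-lipschitz2} I would verify by reusing wholesale the compact-set argument of Lemma~\ref{lem:main-linear-q}. Under Assumption~\ref{assu:control-ergodic}, the lifting $f$ introduced there sends the closure $\fP$ into a compact family $\fP^{\Y}$ of ergodic kernels on $\Y$, so $\hat h_\theta, \hat G_\theta$ admit the Drazin-type representation $(I - P_\theta + \tb{1} d_{\Y,\theta}^\top)^{-1}(I - \tb{1} d_{\Y,\theta}^\top)$ applied componentwise to $h_\theta, G_\theta$, and continuity of this operator on $\fP^{\Y}$ yields uniform bounds on both its norm and its Lipschitz constant in $\theta$. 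The nonexpansiveness of $\Gamma_{B_1}$, together with the fact that $\bar\theta^w \mapsto \max_{a'} x(s',a')^\top \bar\theta^w$ is globally Lipschitz as a max of finitely many linear functionals, gives uniform boundedness and Lipschitz continuity of $h_\theta, G_\theta$ themselves; Lipschitz continuity of $D_{\mu_{\theta^w}}$ (from Assumption~\ref{assu:control-parameterization}, as already used in the proof of Theorem~\ref{thm:diff-linear-q}) then propagates to $\bar h(\theta), \bar G(\theta), \hat h_\theta, \hat G_\theta$ via the product and matrix-inverse Lipschitz rules invoked earlier in the appendix.

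The only genuinely new step is Assumption~\ref{assu:konda-pd}. Because $\bar G(\theta)$ is block diagonal with blocks $1$ and $X^\top D_{\mu_{\theta^w}} X + \eta I$, every $u = [\bar r, w^\top]^\top$ satisfies
\begin{align}
u^\top \bar G(\theta) u = \bar r^2 + w^\top(X^\top D_{\mu_{\theta^w}} X + \eta I) w \geq \bar r^2 + \eta \norm{w}^2 \geq \min\{1, \eta\}\, \norm{u}^2,
\end{align}
so $\zeta \doteq \min\{1, \eta\}$ works uniformly in $\theta$; the ridge term applied only to the $w$-block disposes of positive definiteness for free, and no analogue of the compactness argument from Lemma~\ref{lem:main-gradient-q} is needed. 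With Assumptions~\ref{assu:konda-markov}--\ref{assu:konda-pd} verified, Theorem~\ref{thm:konda} delivers $\lim_{t\to\infty}\norm{\bar h(\theta_t) - \bar G(\theta_t) u_t} = 0$, and since $\bar G(\theta)^{-1}$ is uniformly bounded this is equivalent to the claim. The main obstacle, as in Lemma~\ref{lem:main-linear-q}, is the uniform Lipschitz continuity of $\hat h_\theta, \hat G_\theta$, for which the matrix inverses must be chased through the compactness argument on $\fP^{\Y}$; once that is in place, positive definiteness---usually the delicate step for control---is handled automatically by the $\eta I$ regularizer.
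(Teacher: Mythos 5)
Your proposal is correct and follows exactly the route the paper intends: the paper's own proof of this lemma is just the remark that it is the same as the proof of Lemma~\ref{lem:main-linear-q} up to change of notations, and your write-up is precisely that argument carried out for Algorithm~\ref{alg:diff-linear-q} (Markov chain lifting to $\Y$, compactness of $\fP^\Y$ for the Poisson-equation bounds, and block-diagonal positive definiteness of $\bar G(\theta)$ with $\zeta = \min\{1,\eta\}$). No gaps.
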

\begin{proof}
The proof is the same as the proof of Lemma~\ref{lem:main-linear-q} up to change of notations.
We, therefore, omit it to avoid verbatim repetition.
\end{proof}

\section{Technical Lemmas}
\label{sec:technical_lemmas}
\begin{lemma}
\label{lem:product-lipschitz}
Let $f_1(x)$ and $f_2(x)$ be two Lipschitz continuous functions with Lipschitz constants $C_1$ and $C_2$. 
If they are also bounded by $U_1$ and $U_2$, then
their product $f_1(x)f_2(x)$ is also Lipschitz continuous with $C_1U_2 + C_2U_1$ being a Lipschitz constant.
\end{lemma}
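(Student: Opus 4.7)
The plan is to apply the classical add-and-subtract trick that is standard for products of bounded Lipschitz quantities, so the argument will be short and essentially one display. I would first fix two arbitrary points $x$ and $y$ in the common domain and write
\[
f_1(x)f_2(x) - f_1(y)f_2(y) = \bigl(f_1(x) - f_1(y)\bigr)f_2(x) + f_1(y)\bigl(f_2(x) - f_2(y)\bigr),
\]
which splits the difference into two pieces, each of which varies only one factor at a time.

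Next I would take norms on both sides and apply the triangle inequality, followed by sub-multiplicativity of the relevant norm (this covers both scalar-, vector-, and matrix-valued $f_1, f_2$, which matters because the lemma is invoked in Lemma~\ref{lem:main-linear-q} on products of matrix-valued maps like $D_{\mu_\theta}$, $P_{\pi_{\bar\theta}} X\bar\theta$, etc.). This yields
\[
\norm{f_1(x)f_2(x) - f_1(y)f_2(y)} \leq \norm{f_2(x)}\,\norm{f_1(x) - f_1(y)} + \norm{f_1(y)}\,\norm{f_2(x) - f_2(y)}.
\]
Then I would bound the two outer norms by the uniform bounds $U_2$ and $U_1$ and the two difference norms by $C_1\norm{x-y}$ and $C_2\norm{x-y}$ using the Lipschitz hypotheses, to conclude
\[
\norm{f_1(x)f_2(x) - f_1(y)f_2(y)} \leq (C_1 U_2 + C_2 U_1)\,\norm{x - y},
\]
which is exactly the stated Lipschitz constant.

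There is no real obstacle here; the only thing to be slightly careful about is that in the applications of this lemma inside the appendix the ``product'' is in general a matrix (or matrix-vector) product rather than a scalar product, so I would explicitly remark that the same derivation works verbatim as long as the norm used is sub-multiplicative (e.g., the induced operator norm on matrices and the Euclidean norm on vectors), which is the convention fixed in Section~2 of the paper. No induction, no compactness, and no appeal to the earlier results are required.
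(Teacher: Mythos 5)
Your proposal is correct and uses essentially the same add-and-subtract decomposition as the paper's proof (the paper inserts the cross term $f_1(x)f_2(y)$ where you insert $f_1(y)f_2(x)$, which is an immaterial difference), arriving at the same constant $C_1U_2 + C_2U_1$. Your explicit remark that the argument extends verbatim to matrix-valued factors under a sub-multiplicative norm is a sensible addition, since that is indeed how the lemma is used later in the appendix.
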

\begin{proof}
\begin{align}
\norm{f_1(x)f_2(x) - f_1(y)f_2(y)} &\leq \norm{f_1(x)(f_2(x) - f_2(y))} + \norm{f_2(y)(f_1(x) - f_1(y))} \\
&\leq (U_1C_2 + U_2C_1) \norm{x - y}.
\end{align}
\end{proof}

\begin{lemma}
\label{lem:inverse-lipschitz}
$\norm{Y_1^{-1} - Y_2^{-1}} \leq \norm{Y_1^{-1}} \norm{Y_1 - Y_2} \norm{Y_2^{-1}}$.
\end{lemma}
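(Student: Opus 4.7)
The plan is to reduce this inequality to the standard algebraic identity
\begin{equation}
Y_1^{-1} - Y_2^{-1} = Y_1^{-1}(Y_2 - Y_1) Y_2^{-1},
\end{equation}
after which submultiplicativity of the induced matrix norm yields the bound immediately. Implicit in the statement is that both $Y_1$ and $Y_2$ are invertible, which is how the lemma is used in the preceding convergence arguments (e.g., in establishing Lipschitz continuity of $\bar{G}(\theta)^{-1}$ in the proofs of Lemmas~\ref{lem:main-of-td} and~\ref{lem:main-linear-q}, where $\bar{G}(\theta)$ is always positive definite).

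First I would verify the identity by direct computation: expanding $Y_1^{-1}(Y_2 - Y_1)Y_2^{-1}$ gives $Y_1^{-1} Y_2 Y_2^{-1} - Y_1^{-1} Y_1 Y_2^{-1} = Y_1^{-1} - Y_2^{-1}$. Then I would take norms on both sides and invoke submultiplicativity $\norm{ABC} \leq \norm{A}\norm{B}\norm{C}$, which holds for the operator norm $\norm{\cdot}$ used throughout the paper. Together with $\norm{Y_2 - Y_1} = \norm{Y_1 - Y_2}$, this produces the claimed bound.

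There is no real obstacle: the lemma is essentially a one-line consequence of the factorization identity together with norm submultiplicativity, and is standard. The only thing worth noting is that the bound is symmetric in the sense that one could just as well write $\norm{Y_2^{-1}}\norm{Y_1-Y_2}\norm{Y_1^{-1}}$ by choosing the alternative factorization $Y_2^{-1}(Y_1 - Y_2)Y_1^{-1}$; either form is fine for the downstream uses.
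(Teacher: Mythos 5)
Your proof is correct and follows essentially the same route as the paper: factor the difference of inverses as $Y_1^{-1}(Y_2-Y_1)Y_2^{-1}$ and apply submultiplicativity of the operator norm. (Your version even gets the sign in the factorization right, whereas the paper writes $Y_1^{-1}(Y_1-Y_2)Y_2^{-1}$, which equals $Y_2^{-1}-Y_1^{-1}$; this is immaterial once norms are taken.)
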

\begin{proof}
\begin{align}
\norm{Y_1^{-1} - Y_2^{-1}} = \norm{Y_1^{-1}(Y_1 - Y_2)Y_2^{-1}} \leq \norm{Y_1^{-1}} \norm{Y_1 - Y_2} \norm{Y_2^{-1}}.
\end{align}
\end{proof}

\begin{lemma}
\label{lem ergodic dist lipschitz}
Under Assumption~\ref{assu:control-ergodic} \&~\ref{assu:control-parameterization},
$D_{\mu_\theta}$ is Lipschitz continuous in $\theta$.
\end{lemma}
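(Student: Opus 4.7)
The plan is to factor the map $\theta \mapsto D_{\mu_\theta}$ through the transition matrix $P_{\mu_\theta}$ and the stationary distribution $d_{\mu_\theta}$, and to establish Lipschitz continuity of each stage, with global constants obtained by exploiting compactness of the closure $\fP$ from Assumption~\ref{assu:control-ergodic}.

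First I would show that the map $\theta \mapsto P_{\mu_\theta}$ is globally Lipschitz on $\R^K$. The entries of $P_{\mu_\theta}$ are of the form $p(s'\mid s,a)\mu_\theta(a'\mid s')$, and by Assumption~\ref{assu:control-parameterization}, $\mu_\theta(a\mid s)$ is Lipschitz in $X_s\theta$, which is in turn Lipschitz in $\theta$ with constant at most $\norm{X}$. Summing (or taking norms) over the finite index set $\mathcal{S}\times\mathcal{A}\times\mathcal{S}\times\mathcal{A}$ yields a global Lipschitz constant $L_P$ for $\theta\mapsto P_{\mu_\theta}$.

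Next I would establish that the map $P \mapsto d_P$ is Lipschitz on the compact set $\fP$. For this, pick any index $i_0$ and let $A(P)$ denote the matrix obtained from $I - P^\top$ by replacing the $i_0$-th row with $\tb{1}^\top$; let $e_{i_0}$ be the corresponding standard basis vector. For any ergodic $P\in\fP$, the null space of $I - P^\top$ is one-dimensional and transverse to $\{v : \tb{1}^\top v = 0\}$, so $A(P)$ is invertible and $d_P = A(P)^{-1}e_{i_0}$. The map $P\mapsto A(P)$ is affine hence Lipschitz. Since $\fP$ is compact (being the closure of a subset of the compact set of row-stochastic matrices) and every $P\in\fP$ is ergodic by Assumption~\ref{assu:control-ergodic}, the continuous map $P\mapsto \norm{A(P)^{-1}}$ attains a finite maximum $U_A$ on $\fP$. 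Applying Lemma~\ref{lem:inverse-lipschitz} gives
\begin{align}
\norm{A(P_1)^{-1} - A(P_2)^{-1}} \leq U_A^2 \, \norm{A(P_1) - A(P_2)} \leq U_A^2 \norm{P_1 - P_2}
\end{align}
for all $P_1,P_2\in\fP$, so $P\mapsto d_P$ is Lipschitz on $\fP$ with some constant $L_d$.

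Composing, the range of $\theta\mapsto P_{\mu_\theta}$ lies in $\fP$ by definition of $\fP$, so for any $\theta_1,\theta_2\in\R^K$,
\begin{align}
\norm{d_{\mu_{\theta_1}} - d_{\mu_{\theta_2}}} \leq L_d \norm{P_{\mu_{\theta_1}} - P_{\mu_{\theta_2}}} \leq L_d L_P \norm{\theta_1 - \theta_2}.
\end{align}
Finally, since $D_{\mu_\theta}$ is diagonal with diagonal entries given by $d_{\mu_\theta}$, we have $\norm{D_{\mu_{\theta_1}} - D_{\mu_{\theta_2}}} = \max_i |d_{\mu_{\theta_1}}(i) - d_{\mu_{\theta_2}}(i)| \leq \norm{d_{\mu_{\theta_1}} - d_{\mu_{\theta_2}}}$, which yields the desired global Lipschitz bound.

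The main obstacle is justifying a uniform bound on $\norm{A(P)^{-1}}$ over all $\theta\in\R^K$; this is precisely where Assumption~\ref{assu:control-ergodic}'s closure condition pays off. Without requiring ergodicity on the closure $\fP$ (rather than just on the open family $\{P_{\mu_\theta}\}$), the inverse could blow up along sequences approaching a non-ergodic limit, and no global Lipschitz constant would be available.
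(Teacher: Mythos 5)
Your proof is correct, and it reaches the same two key reductions as the paper's — Lipschitz continuity of $\theta\mapsto P_{\mu_\theta}$ via Assumption~\ref{assu:control-parameterization}, and a uniform bound on an inverse obtained from compactness of $\fP$ plus ergodicity on the closure (Assumption~\ref{assu:control-ergodic}) — but it implements the second step differently. The paper characterizes $d_{\mu_\theta}$ through the overdetermined system $L(P)\doteq\mqty[P^\top - I\\ \tb{1}^\top]$, writes $d_{\mu_\theta}=\big(L(P_{\mu_\theta})^\top L(P_{\mu_\theta})\big)^{-1}L(P_{\mu_\theta})^\top\mqty[\tb{0}\\1]$, and controls the inverse via the adjugate-over-determinant representation, bounding $\abs{\det(L(P)^\top L(P))}$ away from zero on the compact set $\fP$. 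You instead form a square matrix $A(P)$ by replacing one row of $I-P^\top$ with $\tb{1}^\top$, write $d_P=A(P)^{-1}e_{i_0}$, and apply Lemma~\ref{lem:inverse-lipschitz} together with $\sup_{P\in\fP}\norm{A(P)^{-1}}<\infty$. Your route is somewhat cleaner: it avoids adjugates and determinants entirely and gives an explicit Lipschitz constant $U_A^2$ for $P\mapsto d_P$. The one point you should make explicit is why $A(P)$ is invertible and why $A(P)v=e_{i_0}$ recovers the full stationarity condition: since the rows of $P$ sum to one, $\tb{1}^\top(I-P^\top)=\tb{0}^\top$, so the deleted row of $I-P^\top$ lies in the span of the remaining rows; combined with $\mathrm{rank}(I-P^\top)=n-1$ for ergodic $P$, this shows both that $A(P)$ is nonsingular and that the equations $[(I-P^\top)v]_j=0$ for $j\neq i_0$ already imply $(I-P^\top)v=\tb{0}$. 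With that observation added, the argument is complete.
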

\begin{proof}
By the definition of stationary distribution,
we have
$\mqty[P_{\mu_\theta}^\top - I \\ \tb{1}^\top] d_{\mu_\theta} = \mqty[\tb{0} \\ 1]$.
Let $L(P) \doteq \mqty[P^\top - I \\ \tb{1}^\top]$.
Assumption~\ref{assu:control-ergodic} ensures that for any $\theta$,
$L(P_{\mu_\theta})$ has full column rank.
So
\begin{align}
d_{\mu_\theta} = \big(L(P_{\mu_\theta})^\top L(P_{\mu_\theta})\big)^{-1} L(P_{\mu_\theta})^\top \mqty[\tb{0} \\ 1].
\end{align}
It is easy to see $L(P_{\mu_\theta})$ is Lipschitz continuous in $\theta$ and $\sup_\theta \norm{L(P_{\mu_\theta})} < \infty$.
We now show $\big(L(P_{\mu_\theta})^\top L(P_{\mu_\theta})\big)^{-1}$ is Lipschitz continuous in $\theta$ and is uniformly bounded for all $\theta$,
invoking Lemma~\ref{lem:product-lipschitz} with which will then complete the proof.
Note
\begin{align}
\big(L(P_{\mu_\theta})^\top L(P_{\mu_\theta})\big)^{-1} = \frac{adj\big(L(P_{\mu_\theta})^\top L(P_{\mu_\theta})\big)}{\det\big(L(P_{\mu_\theta})^\top L(P_{\mu_\theta})\big)},
\end{align} 
where $adj(\cdot)$ denotes the adjugate matrix.
By the properties of determinants and adjugate matrices,
it is easy to see both the numerator and the denominator are Lipschitz continuous in $\theta$ and are uniformly bounded for all $\theta$.
It thus remains to show the denominator is bounded away from $0$.
Consider the compact set $\fP$ defined in Assumption~\ref{assu:control-ergodic},
for any $P \in \fP$,
Assumption~\ref{assu:control-ergodic} ensures $L(P)^\top L(P)$ is invertible,
i.e., $|\det(L(P)^\top L(P))| > 0$.
As $|\det(L(P)^\top L(P))|$ is continuous in $P$,
so it obtains its minimum in the compact set $\fP$,
say $C_0 > 0$.
It then follows $\inf_\theta |\det(L(P_{\mu_\theta})^\top L(P_{\mu_\theta}))| \geq C_0 > 0$,
which completes the proof.
\end{proof}

\begin{lemma}
\label{lem linear q lipschitz}
The $w^*(\theta)$ defined in~\eqref{eq w star definition in linear q} is Lipschitz continuous in $\theta$.
\end{lemma}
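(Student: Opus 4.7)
The plan is to decompose
\begin{align}
w^*(\theta) = M(\theta)\,\bigl(b(\theta) + \gamma\, v(\theta)\bigr),
\end{align}
where $M(\theta) \doteq (X^\top D_{\mu_\theta} X + \eta I)^{-1}$, $b(\theta) \doteq X^\top D_{\mu_\theta} r$, and $v(\theta) \doteq X^\top D_{\mu_\theta} P_{\pi_{\bar\theta}} X \bar\theta$ with $\bar\theta \doteq \Gamma_{B_1}(\theta)$. I will show each of these three factors is both (uniformly) bounded and Lipschitz continuous in $\theta$, and then glue everything together with Lemma~\ref{lem:product-lipschitz}. The Lipschitz continuity of $D_{\mu_\theta}$ (with constant $L_D$, say) is already supplied by Lemma~\ref{lem ergodic dist lipschitz}, and $\Gamma_{B_1}$ is $1$-Lipschitz and bounded by $R_{B_1}$, so I can effectively regard $\bar\theta$ as a Lipschitz, bounded function of $\theta$.

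First, the matrix factor $M(\theta)$: since $X^\top D_{\mu_\theta} X + \eta I \succeq \eta I$, one has $\norm{M(\theta)} \le \eta^{-1}$ uniformly in $\theta$, and Lemma~\ref{lem:inverse-lipschitz} combined with Lemma~\ref{lem:product-lipschitz} gives
\begin{align}
\norm{M(\theta_1)-M(\theta_2)} \le \eta^{-2}\norm{X}^2 L_D\,\norm{\theta_1 - \theta_2}.
\end{align}
The term $b(\theta) = X^\top D_{\mu_\theta} r$ is trivially bounded by $\norm{X}\norm{r}$ and Lipschitz with constant $\norm{X}\norm{r} L_D$.

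The interesting factor is $v(\theta)$. The key observation is that $\pi_{\bar\theta}$ is a greedy policy, so for every state $s'$
\begin{align}
\sum_{a'}\pi_{\bar\theta}(a'\mid s')\, x(s',a')^\top \bar\theta \;=\; \max_{a'} x(s',a')^\top \bar\theta,
\end{align}
and the maximum of finitely many affine functions of $\bar\theta$ is Lipschitz in $\bar\theta$ with constant $\max_{s',a'}\norm{x(s',a')} \le \norm{X}$. Composing with $\Gamma_{B_1}$ (nonexpansive) and multiplying by the row-stochastic matrix $P$ shows that $P_{\pi_{\bar\theta}} X \bar\theta$ is Lipschitz in $\theta$ with some constant $L_0 \le \norm{X}$, and it is bounded by $U_P \norm{X} R_{B_1}$ where $U_P$ is any uniform bound on $\norm{P_{\pi_{\bar\theta}}}$ (e.g.\ $U_P = \sqrt{\ns}$). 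Another application of Lemma~\ref{lem:product-lipschitz} to $v(\theta) = (X^\top D_{\mu_\theta})\,(P_{\pi_{\bar\theta}} X \bar\theta)$ then yields a Lipschitz constant of the form $\norm{X}(U_P\norm{X}R_{B_1} L_D + L_0)$ and a uniform bound $\norm{X}\cdot U_P \norm{X} R_{B_1}$.

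Finally, one more pass through Lemma~\ref{lem:product-lipschitz} on the outer product $M(\theta)(b(\theta)+\gamma v(\theta))$ gives an explicit Lipschitz constant for $w^*$, and collecting the terms reproduces the stated
\begin{align}
C_w = \eta^{-1}\norm{X}\norm{r}L_D + \eta^{-2}\norm{X}^3\norm{r}L_D + \gamma\eta^{-1}L_0\norm{X}^2 + \gamma U_P\norm{X}R_{B_1}(\eta^{-1}\norm{X}L_D + \eta^{-2}\norm{X}^3 L_D).
\end{align}
The main obstacle, and the only nontrivial step, is handling $P_{\pi_{\bar\theta}} X \bar\theta$: naively $\pi_{\bar\theta}$ itself is not continuous at states where the greedy action is not unique, so one must use the $\max$ representation above rather than trying to control $\pi_{\bar\theta}$ pointwise. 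Everything else is routine bookkeeping using the algebraic Lipschitz lemmas from Section~\ref{sec:technical_lemmas}.
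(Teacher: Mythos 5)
Your proposal is correct and follows essentially the same route as the paper's proof: the same factor-by-factor decomposition of $w^*(\theta)$, the same bounds and Lipschitz constants assembled via Lemmas~\ref{lem:product-lipschitz} and~\ref{lem:inverse-lipschitz} together with Lemma~\ref{lem ergodic dist lipschitz}, and the same key observation that $P_{\pi_{\bar\theta}}X\bar\theta$ must be handled through the max-of-affine-functions representation (composed with the nonexpansive projection $\Gamma_{B_1}$) rather than through pointwise continuity of the greedy policy. The resulting constant $C_w$ matches the paper's.
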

\begin{proof}
Recall
\begin{align}
w^*(\theta) \doteq \bar{G}(\theta)^{-1}\bar{h}(\theta)
= (X^\top D_{\mu_\theta} X + \eta I)^{-1} X^\top D_{\mu_\theta} (r + \gamma P_{\pi_{\bar{\theta}}} X \bar{\theta}).
\end{align}
We first show $P_{\pi_\theta}X\theta$ is Lipschitz continuous in $\theta$.
By definition of $\pi_\theta$,
\begin{align}
(P_{\pi_{{\theta}}} X {\theta})(s, a) &= \sum_{s'}p(s'|s, a) \max_{a'} x(s', a')^\top \theta.
\end{align}
Let $X_{s'} \in \R^{|\mathcal{A}| \times K}$ be a matrix whose $a'$-th row is $x(s', a')^\top$.
Then
\begin{align}
|(P_{\pi_{{\theta_1}}} X {\theta_1})(s, a) - (P_{\pi_{{\theta_2}}} X {\theta_2})(s, a)|
=& |\sum_{s'}p(s'|s, a) (\max_{a'}x(s', a')^\top \theta_1 - \max_{a'}x(s', a')^\top \theta_2 )| \\
\leq& \sum_{s'}p(s'|s, a) |\max_{a'}x(s', a')^\top \theta_1 - \max_{a'}x(s', a')^\top \theta_2 | \\
\leq& \sum_{s'}p(s'|s, a) \max_{a'}|x(s', a')^\top \theta_1 - x(s', a')^\top \theta_2 | \\
\leq& \sum_{s'}p(s'|s, a) \norm{X_{s'} \theta_1 - X_{s'} \theta_2}_\infty \\
\leq& \sum_{s'}p(s'|s, a) \norm{X_{s'}}_\infty \norm{\theta_1 - \theta_2}_\infty \\
\leq& \norm{X}_\infty \norm{\theta_1 - \theta_2}_\infty, \\
\implies \norm{P_{\pi_{{\theta_1}}} X {\theta_1} - P_{\pi_{{\theta_2}}} X {\theta_2}}_\infty \leq& \norm{X}_\infty \norm{\theta_1 - \theta_2}_\infty.
\end{align}
The equivalence between norms then asserts that there exists a constant $L_0 > 0$ such that 
\begin{align}
\norm{P_{\pi_{{\theta_1}}} X {\theta_1} - P_{\pi_{{\theta_2}}} X {\theta_2}} \leq& L_0 \norm{X} \norm{\theta_1 - \theta_2}.
\end{align}
It is easy to see $L_0 \norm{X}$ is also a Lipschitz constant of $P_{\pi_{\bar \theta}} X \bar \theta$ by the property of projection.

Lemma~\ref{lem ergodic dist lipschitz} ensures that $D_{\mu_\theta}$ is Lipschitz continuous in $\theta$ and we use $L_D$ to denote a Lipschitz constant.
We remark that if we assume $\norm{X} \leq 1$,
we can indeed select an $L_D$ that is independent of $X$.
To see this,
let $L_\mu$ be the Lipschitz constant in Assumption~\ref{assu:control-parameterization},
then we have
\begin{align}
	\abs{\mu_\theta(a|s) - \mu_{\theta'}(a|s)} \leq L_\mu \norm{X_s \theta - X_s \theta'}_\infty \leq L_\mu \norm{X_s}_\infty \norm{\theta - \theta'}_\infty \leq L_\mu \norm{X}_\infty \norm{\theta - \theta'}_\infty.
\end{align}
Due to the equivalence between norms,
there exists a constant $L_\mu' > 0$ such that
\begin{align}
	\label{eq lipschitz constant independent of x}
	\abs{\mu_\theta(a|s) - \mu_{\theta'}(a|s)} \leq L_\mu' \norm{X} \norm{\theta - \theta'} \leq L_\mu' \norm{\theta - \theta'}.
\end{align}

We can now use Lemmas~\ref{lem:product-lipschitz} \&~\ref{lem:inverse-lipschitz} to compute the bounds and Lipschitz constants for several terms of interest,
which are detailed in Table~\ref{tab:linear-q-bounds}.
From Table~\ref{tab:linear-q-bounds} and Lemma~\ref{lem:product-lipschitz},
a Lipschitz constant of $w^*(\theta)$ is 
\begin{align}
C_w \doteq \big( \eta^{-1}\norm{X}\norm{r}L_D + \eta^{-2} \norm{X}^3 \norm{r} L_D \big) + \eta^{-1} \norm{X} \gamma L_0 \norm{X} + \gamma U_P \norm{X} R_{B_1} (\eta^{-1}\norm{X} L_D + \eta^{-2} \norm{X}^3 L_D),
\end{align}
which completes the proof.

\begin{table}[h]
\centering
\begin{tabular}{ c |c | c}
  & Bound & Lipschitz constant  \\ \hline 
$D_{\mu_\theta}$ & $1$ & $L_D$ \\ \hline
$(X^\top D_{\mu_\theta}X + \eta I)^{-1}$ & $\eta^{-1}$ & $\eta^{-2} \norm{X}^2 L_D$ \\ \hline
$X^\top D_{\mu_\theta}r$ & $\norm{X}\norm{r}$ & $\norm{X}\norm{r}L_D$ \\ \hline
$(X^\top D_{\mu_\theta}X + \eta I)^{-1}X^\top D_{\mu_\theta}r$ & $\eta^{-1}\norm{X}\norm{r}$ & $\eta^{-1}\norm{X}\norm{r}L_D + \eta^{-2} \norm{X}^3 \norm{r} L_D$ \\ \hline
$(X^\top D_{\mu_\theta}X + \eta I)^{-1}X^\top D_{\mu_\theta}$ & $\eta^{-1}\norm{X}$ & $\eta^{-1}\norm{X} L_D + \eta^{-2} \norm{X}^3 L_D$ \\ \hline
$\gamma P_{\pi_{\bar{\theta}}} X^\top \bar{\theta}$ & $\gamma U_P \norm{X} R_{B_1}$ & $\gamma L_0 \norm{X}$ \\ \hline
\end{tabular}
\caption{\label{tab:linear-q-bounds}
$U_P \doteq \sup_\theta \norm{P_{\pi_{\theta}}}$.
} 
\end{table}
\end{proof}

\begin{lemma}
\label{lem w star gradient q lipschitz}
Under Assumptions~\ref{assu full rank X},~\ref{assu:control-ergodic} -~\ref{assu target policy continuous}, 
if $\norm{X} \leq 1$, 
then $w^*(\theta)$ defined in~\eqref{eq w star graident q} satisfies
\begin{align}
\norm{w^*(\theta_1) - w^*(\theta_2)} \leq \norm{X}L_w \norm{\theta_1 - \theta_2},
\end{align}
where $L_w$ is a positive constant that depends on $X$ through only $\frac{X}{\norm{X}}$.
\end{lemma}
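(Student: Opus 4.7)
The plan is to mimic the approach of Lemma~\ref{lem linear q lipschitz}: decompose $w^*(\theta)$ into a product of factors, bound each factor and its Lipschitz constant in $\theta$ via Lemmas~\ref{lem:product-lipschitz}~\&~\ref{lem:inverse-lipschitz}, and then multiply the estimates together. The twist relative to Lemma~\ref{lem linear q lipschitz} is that the Lipschitz constant must be of the special form $\norm{X} L_w$ with $L_w$ a function of $\hat X \doteq X/\norm{X}$ alone. To make this transparent, I would write $X = t\hat X$ with $t \doteq \norm{X}$ and carefully track powers of $t$ through every estimate.

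The starting observation is that $\mu_\theta$ and $\pi_\theta$ depend on $\theta$ only through $X\theta = t\hat X\theta$, so as $\theta$ ranges over $\R^K$ the set of realized policies is the same for $X$ and for $\hat X$. Combining this with Assumptions~\ref{assu:control-parameterization}~\&~\ref{assu target policy continuous} and the chain rule produces Lipschitz constants for $\mu_\theta$ and $\pi_\theta$ of the form $t\tilde L$ with $\tilde L$ a function of $\hat X$. Pushing this through Lemma~\ref{lem ergodic dist lipschitz}, whose adjugate/determinant representation is again a function of the realized transition kernels only, gives the same scaling for $D_{\mu_\theta}$ and $P_{\pi_\theta}$. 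Now I tabulate bounds and Lipschitz constants for the building blocks: both $A_{\pi_\theta,\mu_\theta}$ and $C_{\mu_\theta}$ carry an outer $X^\top(\cdot)X$ sandwich, giving norm bounds of order $t^2$ and Lipschitz constants of order $t^3$. The delicate step is $C_{\mu_\theta}^{-1}$, whose norm a priori blows up as $t \to 0$: the compactness argument used in the proof of Lemma~\ref{lem:main-gradient-q} transfers here to yield a lower bound $t^2 \hat C_1$ on the smallest eigenvalue of $C_{\mu_\theta}$ uniformly in $\theta$, with $\hat C_1 > 0$ depending on $X$ only through $\hat X$. Hence $\norm{C_{\mu_\theta}^{-1}} = \fO(t^{-2})$ and Lemma~\ref{lem:inverse-lipschitz} gives Lipschitz constant $\fO(t^{-1})$. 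The outer inverse $(\eta I + A_{\pi_\theta,\mu_\theta}^\top C_{\mu_\theta}^{-1} A_{\pi_\theta,\mu_\theta})^{-1}$ is uniformly bounded by $1/\eta$ because the perturbation is positive semidefinite, so no further negative power of $t$ appears.

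Multiplying all the factors via Lemma~\ref{lem:product-lipschitz} produces a Lipschitz constant for $w^*(\theta)$ of order $t^2$ with the hidden prefactor a function of $\hat X$ alone. Since $t = \norm{X} \leq 1$, we have $t^2 \leq t$, and one factor of $t$ can be absorbed into $\norm{X}$, leaving the remaining prefactor as the desired $L_w$. The main obstacle is neither analytical nor conceptual but accounting: \emph{every} constant that enters the final estimate, in particular $\tilde L$, $\hat C_1$, and the uniform norm bounds coming from Assumption~\ref{assu:control-ergodic}, must be traced back to data that depends on $X$ only through $\hat X$, since it is this invariance under the scaling $X \mapsto tX$ that legitimises extracting a single factor of $\norm{X}$ from $L_w$.
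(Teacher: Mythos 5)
Your proposal is correct and follows essentially the same route as the paper: rewrite everything in terms of $\tilde X \doteq X/\norm{X}$, use compactness of $\fP$ to bound $\norm{\tilde C_{\mu_\theta}^{-1}}$ uniformly by a constant depending only on $\tilde X$, bound the outer inverse by $1/\eta$, and assemble the pieces with Lemmas~\ref{lem:product-lipschitz} \&~\ref{lem:inverse-lipschitz}. The only difference is bookkeeping — the paper factors $\norm{X}$ out of the closed-form expression for $w^*(\theta)$ up front, whereas you power-count $\norm{X}$ through each factor, arrive at an $\mathcal{O}(\norm{X}^2)$ Lipschitz constant, and absorb the surplus factor using $\norm{X}\leq 1$ — which yields the same (in fact marginally stronger) conclusion.
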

\begin{proof}
We first recall that if $\norm{X} \leq 1$, 
the Lipschitz constants for both $\mu_\theta$ and $\pi_\theta$ in $\theta$ can be selected to be independent of $X$ (c.f. \eqref{eq lipschitz constant independent of x}).
Recall
\begin{align}
w^*(\theta) = (\eta I + A_{\pi_\theta, \mu_\theta}^\top C_{\mu_\theta}^{-1} A_{\pi_\theta, \mu_\theta})^{-1}A_{\pi_\theta, \mu_\theta}^\top C_{\mu_\theta}^{-1} X^\top D_{\mu_\theta}r.
\end{align}
Let
\begin{align}
\tilde{X} \doteq \frac{X}{\norm{X}}, 
\tilde{A}_{\pi_\theta, \mu_\theta} = \tilde{X}^\top D_{\mu_\theta}(I - \gamma P_{\pi_\theta}) \tilde{X}, \tilde C_{\mu_{\theta}} = \tilde{X}^\top D_{\mu_\theta} \tilde{X},
\end{align}
then
\begin{align}
w^*(\theta) = \norm{X} (\eta I + \norm{X}^2 \tilde{A}_{\pi_\theta, \mu_\theta}^\top \tilde{C}_{\mu_\theta}^{-1} \tilde{A}_{\pi_\theta, \mu_\theta})^{-1} \tilde{A}_{\pi_\theta, \mu_\theta}^\top \tilde{C}_{\mu_\theta}^{-1} \tilde{X} D_{\mu_\theta}r.
\end{align}
We now show $w^*(\theta) / \norm{X}$ is Lipschitz continuous in $\theta$ by invoking Lemmas~\ref{lem:product-lipschitz} \&~\ref{lem:inverse-lipschitz}.
Let $D_P \in \R^{\nsa \times \nsa}$ be a diagonal matrix whose diagonal entry is the stationary distribution of the chain induced by $P$.
For any $P \in \fP$,
Assumption~\ref{assu:control-ergodic} ensures that $D_P$ is positive definite.
Consequently, $\norm{(\tilde{X}^\top D_P \tilde{X})^{-1}}$ is well defined and is continuous in $P$,
implying it obtains its maximum in the compact set $\fP$,
say $U_g$.
So $\norm{\tilde C_{\mu_\theta}^{-1}} \leq U_g$ and importantly, $U_g$ depends on $X$ through only $\frac{X}{\norm{X}}$.
Using Lemma~\ref{lem:product-lipschitz},
it is easy to see the bound and the Lipschitz constant of $\tilde{A}_{\pi_\theta, \mu_\theta}^\top \tilde{C}_{\mu_\theta}^{-1} \tilde{A}_{\pi_\theta, \mu_\theta}$ depend on $X$ through only $\frac{X}{\norm{X}}$.
It is easy to see
$\norm{(\eta I + \norm{X}^2 \tilde{A}_{\pi_\theta, \mu_\theta}^\top \tilde{C}_{\mu_\theta}^{-1} \tilde{A}_{\pi_\theta, \mu_\theta})^{-1}} \leq 1/\eta$. 
If we further assuming $\norm{X} \leq 1$,
Lemma~\ref{lem:inverse-lipschitz} then implies that $(\eta I + \norm{X}^2 \tilde{A}_{\pi_\theta, \mu_\theta}^\top \tilde{C}_{\mu_\theta}^{-1} \tilde{A}_{\pi_\theta, \mu_\theta})^{-1}$ has a Lipschitz constant that depends on $X$ through only $\frac{X}{\norm{X}}$.
It is then easy to see there exists a constant $L_w > 0$, 
which depends on $X$ only through $\frac{X}{\norm{X}}$, such that
\begin{align}
\norm{w^*(\theta_1) - w^*(\theta_2)} \leq L_w \norm{X} \norm{\theta_1 - \theta_2},
\end{align}
which completes the proof.
\end{proof}

\begin{lemma}
\label{lem diff linear q lipschitz}
The $u^*(\theta)$ defined in~\eqref{eq u star definition diff linear q} is Lipschitz continuous in $\theta$.
\end{lemma}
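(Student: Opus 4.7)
The plan is to follow exactly the template of Lemma~\ref{lem linear q lipschitz}: write $u^*(\theta) = \bar G(\theta)^{-1} \bar h(\theta)$ as a product of matrix/vector functions of $\theta$, establish a uniform bound and a Lipschitz constant for each factor on $\R^{K+1}$, then invoke Lemmas~\ref{lem:product-lipschitz} and~\ref{lem:inverse-lipschitz} to assemble the conclusion. Since no explicit constant is claimed, this is a bookkeeping exercise rather than a new argument.

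First I would handle $\bar G(\theta)^{-1}$. Because $\bar G(\theta)$ is block diagonal with blocks $1$ and $X^\top D_{\mu_\theta} X + \eta I$, its inverse is block diagonal with blocks $1$ and $(X^\top D_{\mu_\theta} X + \eta I)^{-1}$. Lemma~\ref{lem ergodic dist lipschitz} asserts that $D_{\mu_\theta}$ is Lipschitz in $\theta$, the norm of the second block is bounded by $1/\eta$, and Lemma~\ref{lem:inverse-lipschitz} combined with Lemma~\ref{lem:product-lipschitz} then provides a uniform bound and a Lipschitz constant for $\bar G(\theta)^{-1}$.

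Next I would treat $\bar h(\theta) = \bar h_1(\theta) + \bar H_2(\theta)$. The summand $\bar h_1(\theta)$ is a product of $d_{\mu_\theta}$ (or $D_{\mu_\theta}$) with fixed vectors $r$ and $X^\top r$, and is immediately handled by Lemma~\ref{lem ergodic dist lipschitz} and Lemma~\ref{lem:product-lipschitz}. For $\bar H_2(\theta)$, the four blocks involve $d_{\mu_\theta}$, $D_{\mu_\theta}$, the projected variables $\bar \theta^r$ and $\bar \theta^w$, and the nonsmooth object $P_{\pi_{\bar \theta^w}} X \bar \theta^w$. The last object is exactly the one already analyzed inside the proof of Lemma~\ref{lem linear q lipschitz}, where a constant $L_0$ is produced so that $\norm{P_{\pi_{\theta_1}} X \theta_1 - P_{\pi_{\theta_2}} X \theta_2} \leq L_0 \norm{X} \norm{\theta_1 - \theta_2}$; non-expansiveness of $\Gamma_{B_1}$ preserves this bound after projection. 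The remaining factors $\bar \theta^r$ and $\bar \theta^w$ are $1$-Lipschitz in $\theta$ and bounded by $R_{B_1}$, so Lemma~\ref{lem:product-lipschitz} gives Lipschitz continuity of each block of $\bar H_2(\theta) \cdot [\bar \theta^{r}, \bar \theta^{w\top}]^\top$, and hence of $\bar h(\theta)$.

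Finally, a last application of Lemma~\ref{lem:product-lipschitz} to the product $\bar G(\theta)^{-1} \bar h(\theta)$ delivers the Lipschitz continuity of $u^*(\theta)$. The main (and only) obstacle worth flagging is the block $P_{\pi_{\bar \theta^w}} X \bar \theta^w$, whose non-smoothness (through the $\max$) precludes a direct Jacobian calculation; but the elementary argument from the proof of Lemma~\ref{lem linear q lipschitz}, based on the triangle inequality applied to $\norm{X_{s'}\theta}_\infty$ and the equivalence of norms on $\R^K$, dispatches it without modification. No new idea beyond what is already in Section~\ref{sec:proof-of-linear-q} is required.
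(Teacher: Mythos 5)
Your proposal is correct and follows essentially the same route as the paper: decompose $u^*(\theta) = \bar G(\theta)^{-1}\bar h(\theta)$ with $\bar h = \bar h_1 + \bar H_2$, obtain uniform bounds and Lipschitz constants for each factor via Lemma~\ref{lem ergodic dist lipschitz}, the $L_0$-bound on $P_{\pi_\theta}X\theta$ from the proof of Lemma~\ref{lem linear q lipschitz}, and non-expansiveness of the projections, then assemble everything with Lemmas~\ref{lem:product-lipschitz} and~\ref{lem:inverse-lipschitz}. The paper's proof is exactly this bookkeeping, recorded in Table~\ref{tab:diff-linear-q-bounds}, so no further comment is needed.
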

\begin{proof}
Recall
\begin{align}
\bar{h}(\theta) &\doteq \E_{(s, a) \sim d_{\mu_{\theta}}, s' \sim p(\cdot |s, a)}[h_\theta(s, a, s')] = \bar h_1(\theta) + \bar H_2(\theta), \\
\bar h_1(\theta) &\doteq \mqty[d_{\mu_\theta}^\top r \\ X^\top D_{\mu_\theta} r], \bar H_2(\theta) \doteq \mqty[0 & d_{\mu_\theta}^\top (P_{\pi_{\bar \theta^w}} - I) X \bar \theta^w \\ -(X^\top d_{\mu_\theta}) \bar \theta^r & X^\top D_{\mu_\theta} P_{\pi_{\bar \theta^w}} X \bar \theta^w], \\
\bar{G}(\theta) &\doteq \E_{(s, a) \sim d_{\mu_\theta}, s' \sim p(\cdot |s, a)}[G_\theta(s, a, s')] = \mqty[1 & \tb{0}^\top \\ \tb{0} & X^\top D_{\mu_\theta} X + \eta I], \\
u^*(\theta) &\doteq \bar{G}(\theta)^{-1} \bar h(\theta).
\end{align}
We now use Lemmas~\ref{lem:product-lipschitz} \&~\ref{lem:inverse-lipschitz} to compute the bounds and Lipschitz constants for several terms of interest,
which are detailed in Table~\ref{tab:diff-linear-q-bounds}.
From Table~\ref{tab:diff-linear-q-bounds} and Lemma~\ref{lem:product-lipschitz},
a Lipschitz constant of $u^*(\theta)$ is 
\begin{align}
C_u = \max\qty{1, \eta^{-1}} (\fO(\norm{X}) + \fO(L_\mu)) + \max\qty{1, \eta^{-2}} \fO(\norm{X})
\end{align}
which completes the proof.

\begin{table}[h]
\centering
\begin{tabular}{ c |c | c}
  & Bound & Lipschitz constant  \\ \hline 
$D_{\mu_\theta}$ & $1$ & $L_D$ \\ \hline
$\bar G(\theta)^{-1}$ & $\max\qty{1, \eta^{-1}}$ & $\max\qty{1, \eta^{-2}} \fO(\norm{X}^2)$ \\ \hline 
$\bar h_1(\theta)$ & $\fO(\norm{X}) + \fO(1)$ & $\fO(L_\mu)$ \\ \hline
$\bar H_2(\theta)$ & $\fO(\norm{X})$ & $\fO(\norm{X})$ \\ \hline
$\bar{h}(\theta)$ & $\fO(\norm{X}) + \fO(1)$ & $\fO(\norm{X}) + \fO(L_\mu)$ \\ \hline
\end{tabular}
\caption{\label{tab:diff-linear-q-bounds}
Bounds and Lipschitz constants of several terms, assuming $\norm{X} \leq 1, L_\mu \leq 1$.
} 
\end{table}
\end{proof}

\section{Experiment Details}
\subsection{Kolter's Example}
\label{sec kolter}
Kolter's example is a simple two-state Markov chain with $P_\pi \doteq \mqty[0.5 & 0.5 \\0.5 & 0.5]$.
The reward is set in a way such that the state-value function is $v_\pi = \mqty[1 \\ 1.05]$.
The feature matrix is $X \doteq \mqty[1 \\ 1.05 + \epsilon]$.
\citet{kolter2011fixed} shows that 
for any $\epsilon > 0, C > 0$,
there exists a $D_\mu = diag(\mqty[d_\mu(s_1) \\ d_\mu(s_2)])$  
such that 
\begin{align}
\norm{\Pi_{D_\mu} v_\pi - v_\pi} \leq \epsilon \qq{and} \norm{Xw^*_0 - v_\pi} \geq C,
\end{align}
where $w^*_0$ is the TD fixed point.
This suggests that as long as there is representation error (i.e., $\epsilon > 0$),
the performance of the TD fixed point can be arbitrarily poor.
In our experiments,
we set $\epsilon = 0.01$ and find when $d_\mu(s_1)$ approaches around 0.71,
$\norm{Xw^*_0 - v_\pi}$ becomes unbounded.
\subsection{Baird's Example}
\label{sec baird}
\begin{figure}[h]
\centering
\includegraphics[width=0.5\textwidth]{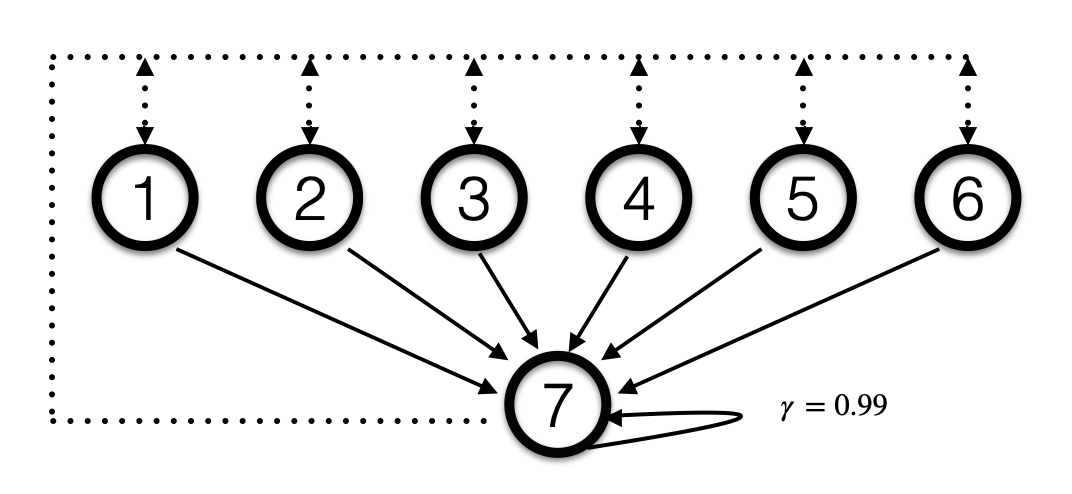}
\caption{
\label{fig baird}
The Baird's example used in Chapter 11.2 of \citet{sutton2018reinforcement}.
The figure is taken from \citet{zhang2019provably}.
At each state,
there are two actions. 
The solid action always leads to $s_7$;
the dashed action leads to one of $\qty{s_1, \dots, s_6}$ with equal probability.
The discount factor $\gamma$ is 0.99.
}
\end{figure}

Figure~\ref{fig baird} shows Baird's example.
In the policy evaluation setting (corresponding to Figure~\ref{fig:expts}b),
we use the same state features as \citet{sutton2018reinforcement}, i.e.,
\begin{align}
X \doteq \mqty [2I & \tb{0} & \tb{1} \\ \tb{0}^\top & 1 & 2] \in \R^{7 \times 8}.
\end{align} 
The weight $w$ is initialized as $[1, 1, 1, 1, 1, 1, 10, 1]^\top$ as suggested by \citet{sutton2018reinforcement}.
The behavior policy is $\mu_0(\text{dashed} | s_i) = 6 / 7$ and $\mu_0(\text{solid}| s_i) = 1 / 7$ for $i = 1, \dots, 7$. 
And the target policy always selects the solid action.
The standard Off-Policy Linear TD (with ridge regularization) updates $w_t$ as
\begin{align}
w_{t+1} &\gets w_t + \alpha \frac{\pi(A_t | S_t)}{\mu(A_t | S_t)}(R_{t+1} + x_{t+1}^\top w_t - x_t^\top w_t) x_t - \alpha \eta w_t,
\end{align}
where we use $\alpha = 0.01$ as used by \citet{sutton2018reinforcement}.
Note as long as $\alpha > 0$, it diverges. 
Here we overload $x$ to denote the state feature and $x_t \doteq x(S_t)$.
We use a TD version of Algorithm~\ref{alg:off-policy-td} to estimate $v_\pi$,
which updates $w_t$ as
\begin{align}
w_{t+1} &\gets w_t + \alpha \frac{\pi(A_t | S_t)}{\mu(A_t | S_t)}(R_{t+1} + x_{t+1}^\top \theta_t - x_t^\top w_t) x_t - \alpha \eta w_t, \\
\theta_{t+1} &\gets \theta_t + \beta (w_t - \theta_t),
\end{align}
where we set $\alpha = \beta = 0.01, \theta_0 = w_0$.

For the control setting (corresponding to Figures~\ref{fig:expts}c \& \ref{fig:expts}d),
we construct the state-action feature in the same way as the Errata of \citet{baird1995residual}, i.e.,
\begin{align}
X \doteq \mqty[
\mqty [2I & \tb{0} & \tb{1} & \\ \tb{0}^\top & 1 & 2] &  \tb{0}^\top \tb{0} \\
\tb{0}^\top \tb{0} & I] \in \R^{14 \times 15}.
\end{align}
The first 7 rows of $X$ are the features of the solid action;
the second 7 rows are the dashed action.
The weight $w$ is initialized as $[1, 1, 1, 1, 1, 1, 10, 1, 1, 1, 1, 1, 1, 1, 1]^\top$.
The standard linear $Q$-learning (with ridge regularization) updates $w_t$ as 
\begin{align}
w_{t+1} \gets w_t + \alpha (R_{t+1} + \max_{a'} x(S_{t+1}, a)^\top w_t - x_t^\top w_t) x_t - \alpha \eta w_t,
\end{align}
where we set $\alpha = 0.01$.
The variant of Algorithm~\ref{alg:linear-q} we use in this experiment updates $w_t$ as 
\begin{align}
w_{t+1} &\gets w_t + \alpha (R_{t+1} + \max_{a'} x(S_{t+1}, a)^\top \theta_t - x_t^\top w_t) x_t - \alpha \eta w_t, \\
\theta_{t+1} &\gets \theta_t + \beta (w_t - \theta_t),
\end{align}
where we set $\alpha = 0.01, \beta = 0.001, \theta_0 = w_0$.
The behavior policy for Figure~\ref{fig:expts}c is still $\mu_0$;
the behavior policy for Figure~\ref{fig:expts}d is $0.9 \mu_0 + 0.1 \mu_w$,
where $\mu_w$ is a softmax policy w.r.t. $x(s, \cdot)^\top w$.
For our algorithm with target network,
the softmax policy is computed using the target network as shown in Algorithm~\ref{alg:linear-q}.
\end{document}